\title{Conditional Generation Using Polynomial Expansions}
\providecommand{\realnum}{\mathbb{R}}
\providecommand{\naturalnum}{\mathbb{N}}
\providecommand{\bmcal}[1]{\bm{\mathcal{#1}}}
 \providecommand{\matnot}[1]{_{[{#1}]}}  
 \providecommand{\invar}{z}  
\providecommand{\binvar}{\bm{\invar}}  
\providecommand{\outvar}{x}  
\providecommand{\boutvar}{\bm{\outvar}}  
\providecommand{\knum}{\mathbb{K}}
\providecommand{\sbinvar}[1]{\binvar_{\scalebox{.6}{#1}}} 
\providecommand{\citep}{\cite} 
\providecommand{\citet}{\cite}
\newtheorem{lemma}{Lemma}
\newtheorem{claim}{Claim}
\providecommand{\supplementary}{supplementary}
\newcommand{\modelnamePI}{$\Pi$-Net}
\newcommand{\modelnameMVP}{CoPE}
\newcommand{\modelnameMVPsngan}{SNGAN-\modelnameMVP}
\newcommand{\modelnamespade}{SPADE}
\newcommand{\modeltwo}{Nested-\modelnameMVP} 
\newcommand{\xmark}{\textcolor{red}{\ding{55}}}   
\definecolor{mygreen}{RGB}{30, 180, 50}
\newcommand{\colorcheck}{\textcolor{mygreen}{\pmb{\checkmark}}}
\author{
Grigorios G Chrysos\\
EPFL, Switzerland\\
\texttt{grigorios.chrysos@epfl.ch}\\
\And
Markos Georgopoulos\\
Imperial College London, UK\\
\texttt{m.georgopoulos@imperial.ac.uk}\\
\And
Yannis Panagakis\\
University of Athens, GR\\
\texttt{yannisp@di.uoa.gr}\\
}
\begin{document}
\maketitle

\begin{abstract}

Generative modeling has evolved to a notable field of machine learning. Deep polynomial neural networks (PNNs) have demonstrated impressive results in unsupervised image generation, where the task is to map an input vector (i.e., noise) to a synthesized image. However, the success of PNNs has not been replicated in conditional generation tasks, such as super-resolution. Existing PNNs focus on single-variable polynomial expansions which do not fare well to two-variable inputs, i.e., the noise variable and the conditional variable. In this work, we introduce a general framework, called \modelnameMVP{}, that enables a polynomial expansion of two input variables and captures their auto- and cross-correlations. We exhibit how \modelnameMVP{} can be trivially augmented to accept an arbitrary number of input variables. \modelnameMVP{} is evaluated in five tasks (class-conditional generation, inverse problems, edges-to-image translation, image-to-image translation, attribute-guided generation) involving eight datasets. The thorough evaluation suggests that \modelnameMVP{} can be useful for tackling diverse conditional generation tasks. The source code of \modelnameMVP{} is available at \url{https://github.com/grigorisg9gr/polynomial_nets_for_conditional_generation}. 

\end{abstract}

\section{Introduction}
\label{sec:mvp_intro}

Modelling high-dimensional distributions and generating samples from complex distributions are fundamental tasks in machine learning. Among prominent generative models, StyleGAN~\citep{karras2018style} has demonstrated unparalleled performance in unsupervised image generation. Its success can be attributed to the higher-order correlations of the input vector $\binvar$ captured by the generator. As \citet{chrysos2019polygan} argue, StyleGAN\footnote{\label{foot:mvp_generator}This work focuses on the generator network; any reference to StyleGAN refers to its generator.} is best explained as a deep polynomial neural network (PNN). PNNs have demonstrated impressive generation results in faces, animals, cars~\citep{karras2020analyzing}, paintings, medical images~\citep{karras2020training}. 
Nevertheless, PNNs have yet to demonstrate similar performance in conditional generation tasks, such as super-resolution or image-to-image translation.

In contrast to unsupervised generators that require a single-variable input $\binvar$, in conditional generation (at least) two inputs are required: i) one (or more) conditional variables $\bm{c}$, e.g., a low-resolution image, and ii) a noise sample $\binvar$. A trivial extension of PNNs for conditional generation would be to concatenate all the input variables into a fused variable. The fused variable is then the input to the single-variable polynomial expansion of PNNs. However, the concatenation reduces the flexibility of the model significantly.
For instance, concatenating a noise vector and a vectorized low-resolution image results in sub-optimal super-resolution, since the spatial correlations of the input image are lost in the vectorization. Additionally, the concatenation of the vectorized conditional variable $\bm{c}$ and $\binvar$ leads to a huge number of parameters when we use a fully-connected layer as typically done in the input of StyleGAN, especially when $\bm{c}$ depicts an image.

In this work, we introduce a framework, called \emph{\modelnameMVP{}}, for conditional data generation. \modelnameMVP{} resorts to multivariate polynomials that capture the higher-order auto- and cross-correlations between the two input variables. 
By imposing a tailored structure in the higher-order correlations, we obtain an intuitive, recursive formulation for \modelnameMVP. 
The formulation enables different constraints to be applied to each variable and its associated parameters. 
In \modelnameMVP, different architectures can be defined simply by changing the recursive formulation. 
Our contributions can be summarized as follows:
\vspace{-1mm}
\begin{itemize}
    \setlength\itemsep{0.1em}
    \item We introduce a framework, called \modelnameMVP{}, that expresses a high-order, multivariate polynomial for conditional data generation. We exhibit how \modelnameMVP{} can be applied on diverse conditional generation tasks. 
    \item We derive two extensions to the core two-variable model: a) we augment the formulation to enable an arbitrary number of conditional input variables, b) we design different architectures that arise by changing the recursive formulation. 
    \item \modelnameMVP{} is evaluated on  \emph{five different tasks} (class-conditional generation, inverse problems, edges-to-image translation, image-to-image translation, attribute-guided generation); overall \emph{eight datasets} are used for the thorough evaluation.
\end{itemize}{}
\vspace{-1mm}
The diverse experiments suggest that \modelnameMVP{} can be useful for a variety of conditional generation tasks, e.g., by defining task-specific recursive formulations. To facilitate the reproducibility, the source code is available at \url{https://github.com/grigorisg9gr/polynomial_nets_for_conditional_generation}.

 \section{Related work}
\label{sec:mvp_related}
Below, we review representative works in conditional generation and then we summarize the recent progress in multiplicative interactions (as low-order polynomial expansion).

\subsection{Conditional generative models}
\label{ssec:mvp_related_conditional}
The literature on conditional generation is vast. The majority of the references below focus on Generative Adversarial Networks (GANs)~\citep{goodfellow2014generative} since GANs have demonstrated the most impressive results to date, however similar methods can be developed for other generative models, such as Variational Auto-encoders (VAEs)~\citep{kingma2013auto}. 
Four groups of conditional generative models are identified below based on the type of conditional information. 

The first group is the \emph{class-conditional generation}~\citep{miyato2018spectral, brock2019large, kaneko2019label}, where the data are divided into discrete categories, e.g., a cat or a ship. During training a class label is provided and the generator should synthesize a sample from that category. One popular method of including class-conditional information is through  conditional normalization techniques~\citep{dumoulin2016learned, de2017modulating}. An alternative way is to directly concatenate the class labels with the input noise; however, as ~\citet{odena2017conditional} observe, this model does not scale well to a hundred or a thousand classes. 

The second group is \emph{inverse problems} that have immense interest for both academic and commercial reasons~\citep{sood2018application, you2019ct}. The idea is to reconstruct a latent signal, when corrupted measurements are provided~\citep{ongie2020deep}. Well-known inverse problems in imaging include super-resolution, deblurring, inpainting. Before the resurgence of deep neural networks, the problems were tackled with optimization-based techniques~\citep{chan2006optimization, levin2009understanding}. The recent progress in conditional generation has fostered the interest in inverse problems~\citep{ledig2016photo, pathak2016context, huang2017beyond}.  
A significant challenge that is often overlooked in the literature~\citep{ledig2016photo, yu2018generative} is that there are many possible latent signals for each measurement. For instance, given a low-resolution image, we can think of several high-resolution images that when down-sampled provide the same low-resolution image.

Another group is \emph{image-to-image translation}. The idea is to map an image from one domain to an image to another domain, e.g., a day-time image to a night-time image. The influential work of \citet{isola2016image} has become the reference point for image-to-image translation. 
Applications in conditional pose generation~\citep{ma2017pose, siarohin2018deformable}, conditional video generation~\citep{wang2018video} or generation from semantic labels~\citep{wang2018high} have appeared. Despite the success, converting the mapping from one-to-one to one-to-many, i.e., having multiple plausible outputs for a single input image, has required some work~\citep{zhu2017toward, huang2018multimodal}. A more dedicated discussion on diverse generation is deferred to sec.~\ref{sec:mvp_difference_from_other_diverse_generation_techniques_suppl}.  

The fourth group uses multiple conditional variables for generation, e.g., attribute-guided generation~\citep{choi2018stargan}. These methods include significant engineering (e.g., multiple discriminators~\citep{xu2017learning}, auxiliary losses). 
The influential InfoGAN~\citep{chen2016infogan} explicitly mentions that without additional losses the generator is `free to ignore' the additional variables.

Each technique above is typically applied to a single group of conditional generation tasks, while our goal is to demonstrate that \modelnameMVP{} can be applied to different tasks from these groups.

\subsection{Multiplicative interactions}
\label{ssec:mvp_related_multiplicative}

Multiplicative connections have long been adopted in machine learning~\citep{shin1991pi, hochreiter1997long, bahdanau2014neural, rendle2010factorization}. The idea is to combine the inputs through elementwise products or other diagonal forms. 
\citet{jayakumar2020Multiplicative} prove that second order multiplicative operators can represent a greater class of functions than classic feed-forward networks. Even though we capitalize on the theoretical argument, our framework can express any higher-order correlations while the framework of \citet{jayakumar2020Multiplicative} is limited to second order interactions.

Higher-order correlations have been studied in the tensor-related literature~\citep{kolda2009tensor, debals2017concept}. However, their adaptation in modern deep architectures has been slower. \modelnamePI~\citep{chrysos2020poly} resorts to a high-order polynomial expansion for mapping the input $\binvar$ to the output $\boutvar = G(\binvar)$. \modelnamePI{} focuses on a single-variable polynomial expansion; an in-depth difference of our work with \modelnamePI{} can be found in sec.~\ref{sec:mvp_difference_from_pinet_suppl} in the \supplementary. 
Two efforts on conditional generation which can be cast as polynomial expansions are \modelnamespade~\citep{park2019semantic} and sBN~\citep{chen2018self}. 
\modelnamespade{} can be interpreted as a single-variable polynomial expansion with respect to the conditional variable $\bm{c}$. \modelnamespade{} does not capture the higher-order cross-correlations between the input variables. 
Similarly, sBN can be interpreted as a polynomial expansion of the two variables for class-conditional generation. However, \modelnamespade{} and sBN do not use the product of polynomial formulation, which enables high-order expansions without increasing the number of layers~\citep{chrysos2020poly}. 
Importantly, \modelnamespade{} and sBN are constructed for specific applications (i.e., semantic image generation and unsupervised/class-conditional generation respectively) and it remains unclear whether a PNN can effectively tackle a general-purpose conditional generation task.

\begin{table*}[tb]
\caption{Comparison of polynomial neural networks (PNNs). Even though the architectures\textsuperscript{\ref{foot:mvp_generator}} of \citet{karras2018style, chen2018self, park2019semantic} were not posed as polynomial expansions, we believe that their success can be (partly) attributed to the polynomial expansion (please check sec.~\ref{sec:mvp_difference_from_other_polynomial_networks_suppl} for further information). \modelnamePI{} and StyleGAN are not designed for conditional data generation. In practice, learning complex distributions requires high-order polynomial expansions; this can be effectively achieved with products of polynomials as detailed in sec.~\ref{ssec:mvp_method_extensions}. Only \modelnamePI{} and \modelnameMVP{} include such a formulation. The columns on discrete and continuous variable refer to the type of conditional variable the method was originally proposed on, e.g., sBN was only tried on class-conditional generation. Additionally, the only work that enables multiple conditional variables (and includes related experiments) is the proposed \modelnameMVP. }
\scalebox{0.97}{
\centering
         \begin{tabular}{|c | c | c| c| c|} 
             \hline
             \multicolumn{5}{|c|}{Attributes of polynomial-like networks.}\\ 
             \hline
             \multirow{2}{*}{Model} & products of   & discrete              & continuous        & multiple\\
                                    & polynomials   & cond.variable         & cond. variable    & cond. variables\\
             \hline
             \modelnamePI~\citep{chrysos2020poly}       & \colorcheck & \xmark & \xmark & \xmark \\\hline
             StyleGAN~\citep{karras2018style}           & \xmark & \xmark & \xmark & \xmark \\\hline
             sBN~\citep{chen2018self}                   & \xmark & \colorcheck &  \xmark & \xmark\\\hline
             \modelnamespade~\citep{park2019semantic}   & \xmark & \xmark & \colorcheck &  \xmark\\\hline
             \modelnameMVP{} (ours)                     & \colorcheck & \colorcheck & \colorcheck & \colorcheck\\\hline
         \end{tabular}
         }
         \label{tab:mvp_table_comparison_with_other_polynomial_nets}
\end{table*}

 \section{Method}
\label{sec:mvp_method}

In the following paragraphs we introduce the two-variable polynomial expansion (sec.~\ref{ssec:mvp_method_two_variable}), while the detailed derivation, along with additional models are deferred to the \supplementary{} (sec.~\ref{sec:mvp_method_suppl}). The crucial technical details, including the stability of the polynomial, are developed in sec.~\ref{ssec:mvp_method_extensions}. We emphasize that a multivariate polynomial can approximate any function~\citep{stone1948generalized, nikol2013analysis}, i.e., a multivariate polynomial is a universal approximator.

\begin{wraptable}[12]{r}{0.55\textwidth}
    \vspace{-2.3mm}
    \caption{Symbols}
    \label{tbl:mvp_primary_symbols}
    \centering
    \begin{tabular}{|c | c|}
    \toprule
    Symbol 	& Role \\
    \midrule
    $N$ 		                    &	Expansion order of the polynomial\\
    $k$ 		                    & Rank of the decompositions \\
    $\sbinvar{I}, \sbinvar{II}$     & Inputs to the polynomial \\
    $n, \rho$                       & Auxiliary variables \\
    $\bmcal{W}^{[n, \rho]}$         & Parameter tensor of the polynomial\\
    $\bm{U}\matnot{n}, \bm{C}, \bm{\beta}$ 		    & Learnable parameters \\
    $*$                      & Hadamard product \\
     \hline
    \end{tabular}
    \vspace{-2pt}
\end{wraptable}

\textbf{Notation}:Tensors/matrices/vectors are symbolized by calligraphic/uppercase/lowercase boldface letters e.g., $\bmcal{W}$,$\bm{W}$,$\bm{w}$.
The \textit{mode-$m$ vector product} of $\bmcal{W}$ (of order $M$) with a
vector $\bm{u} \in \realnum^{I_m}$ is
$\bmcal{W} \times_{m} \bm{u}$ and results in a tensor of order $M-1$. 
We assume that $\prod_{i=a}^b x_i = 1$ when $a > b$.  The core symbols are summarized in Table~\ref{tbl:mvp_primary_symbols}, while a detailed tensor notation is deferred to the \supplementary{} (sec.~\ref{ssec:mvp_notation}).

\subsection{Two input variables}
\label{ssec:mvp_method_two_variable}
Given two input variables \footnote{To avoid cluttering the notation we use same dimensionality for the two inputs. However, the derivations apply for different dimensionalities, only the dimensionality of the tensors change slightly.} 
$\sbinvar{I}, \sbinvar{II} \in \knum^d$ where $\knum \subseteq \realnum$ or $\knum \subseteq \naturalnum$, the goal is to learn a function $G: \knum^{d\times d} \to \realnum^{o}$ that captures the higher-order correlations between the elements of the two inputs. We can learn such higher-order correlations as polynomials of two input variables. A polynomial expansion of order $N \in \naturalnum$ with output $\boutvar \in \realnum^o$ (such that $\boutvar = G(\sbinvar{I}, \sbinvar{II})$) has the form: 

\begin{equation}
    \boutvar = \sum_{n=1}^N \sum_{\rho=1}^{n+1} \bigg(\bmcal{W}^{[n, \rho]} \prod_{j=2}^{\rho} \times_{j} \sbinvar{I}  \prod_{\tau=\rho+1}^{n+1} \times_{\tau} \sbinvar{II}\bigg) + \bm{\beta}
    \label{eq:mvp_poly_general_eq_2var}
\end{equation}

where $\bm{\beta} \in \realnum^o$ and $\bmcal{W}^{[n, \rho]} \in  \realnum^{o\times \prod_{m=1}^{n}\times_m d}$ for $n\in [1, N], \rho\in [1, n+1]$ are the learnable parameters. The expansion depends on two (independent) variables, hence we use the $n$ and $\rho$ as auxiliary variables. The two products of (\ref{eq:mvp_poly_general_eq_2var}) do not overlap, i.e., the first multiplies the modes $[2, \rho]$ (of $\bmcal{W}^{[n, \rho]}$) with $\sbinvar{I}$ and the other multiplies the modes $[\rho+1, n+1]$ with $\sbinvar{II}$. 

\begin{figure*}[!h]
    \centering
    \includegraphics[width=0.9\linewidth]{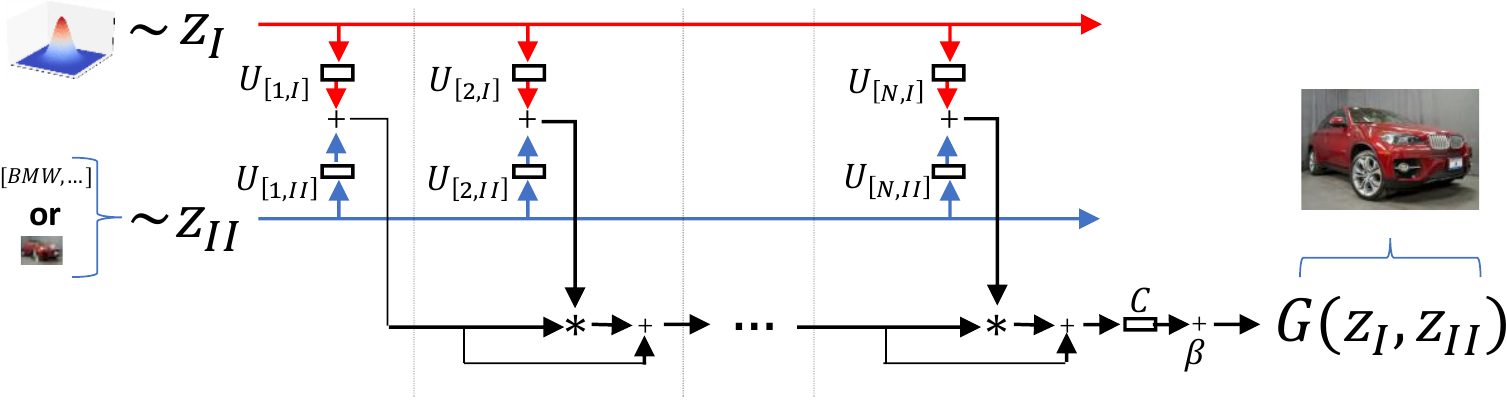}
\caption{Abstract schematic for $N^\text{th}$ order approximation of $\boutvar = G(\binvar_{I}, \binvar_{II})$. The inputs $\binvar_{I}, \binvar_{II}$ are symmetric in our formulation. We denote with $\binvar_{I}$ a noise vector, e.g., a samples from Gaussian distribution, while $\binvar_{II}$ symbolizes a sample from a conditional input (e.g., a class label or a low-resolution image). }
\label{fig:mvp_nth_order_schematic_2var}
\end{figure*}

\textbf{Recursive relationship}: The aforementioned derivation can be generalized to an arbitrary expansion order. The recursive formula for an arbitrary order $N \in \naturalnum$ is the following:

\begin{equation}
    \boutvar_{n} = \boutvar_{n-1} + \Big(\bm{U}\matnot{n, I}^T \sbinvar{I} +  \bm{U}\matnot{n, II}^T \sbinvar{II}\Big) * \boutvar_{n-1} 
    \label{eq:mvp_model1_rec}
\end{equation}
for $n=2,\ldots,N$ with $\boutvar_{1} = \bm{U}\matnot{1, I}^T \sbinvar{I} +  \bm{U}\matnot{1, II}^T \sbinvar{II}$ and $\boutvar = \bm{C}\boutvar_{N} + \bm{\beta}$. The parameters $\bm{C} \in \realnum^{o\times k}, \bm{U}\matnot{n, \phi} \in  \realnum^{d\times k}$ for $n=1,\ldots,N$ and $\phi=\{I, II\}$ are learnable. 

The intuition behind this model is the following: An embedding is initially found for each of the two input variables, then the two embeddings are added together and they are multiplied elementwise with the previous approximation. The different embeddings for each of the input variables allows us to implement $\bm{U}\matnot{n, I}$ and $\bm{U}\matnot{n, II}$ with different constraints, e.g., $\bm{U}\matnot{n, I}$ to be a dense layer and $\bm{U}\matnot{n, II}$ to be a convolution.

\begin{figure*}[!h]
\centering
    \centering
    \subfloat[Random samples per class]{\includegraphics[width=0.33\linewidth]{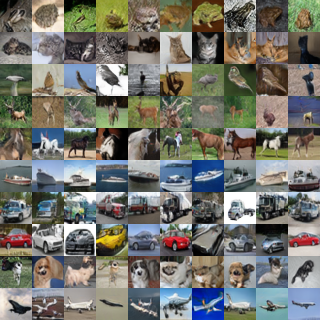}\hspace{1mm}}
    \subfloat[Intra-class interpolation]{\includegraphics[width=0.322\linewidth]{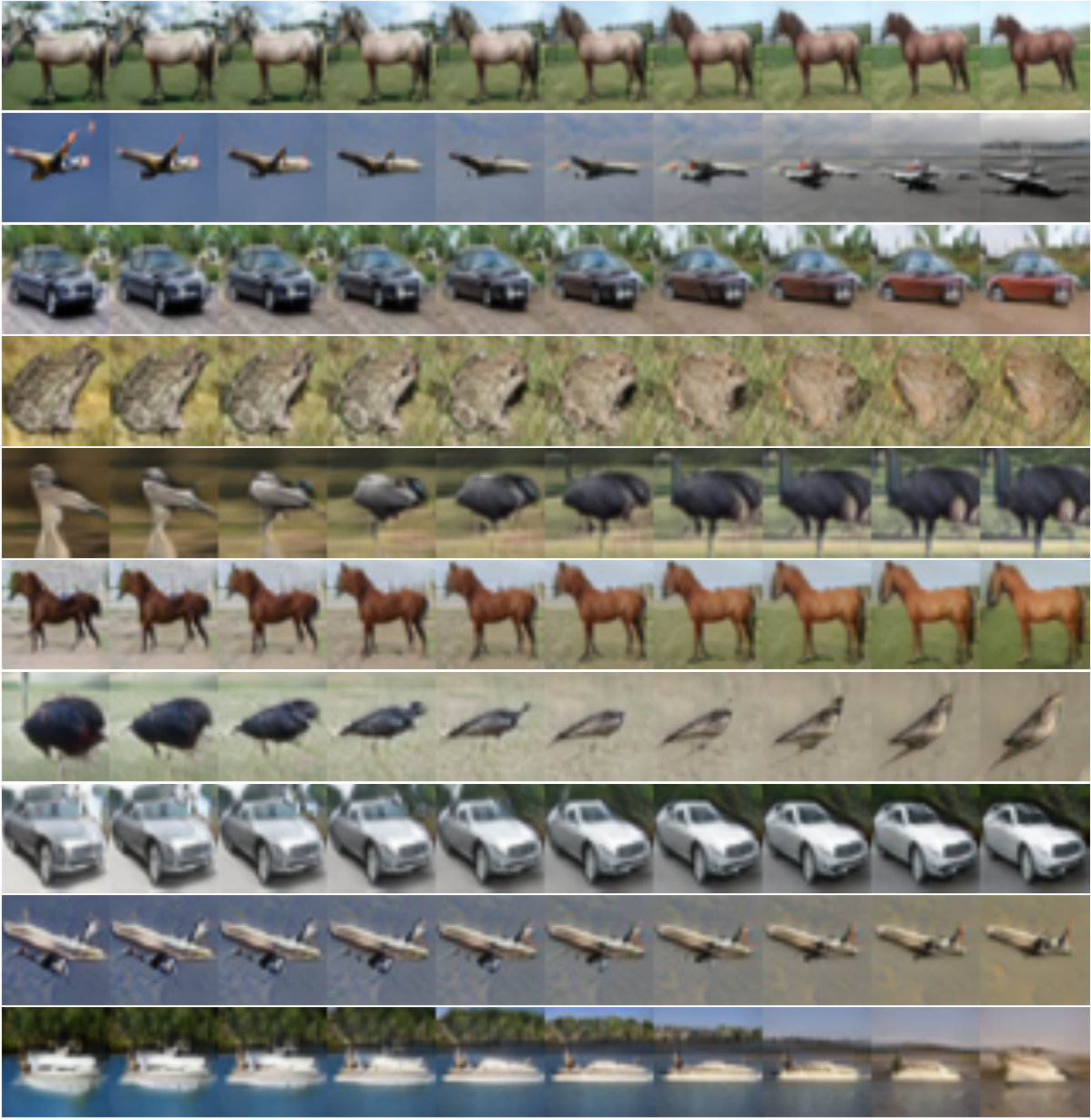}\hspace{1mm}}
    \subfloat[Inter-class interpolation]{\includegraphics[width=0.322\linewidth]{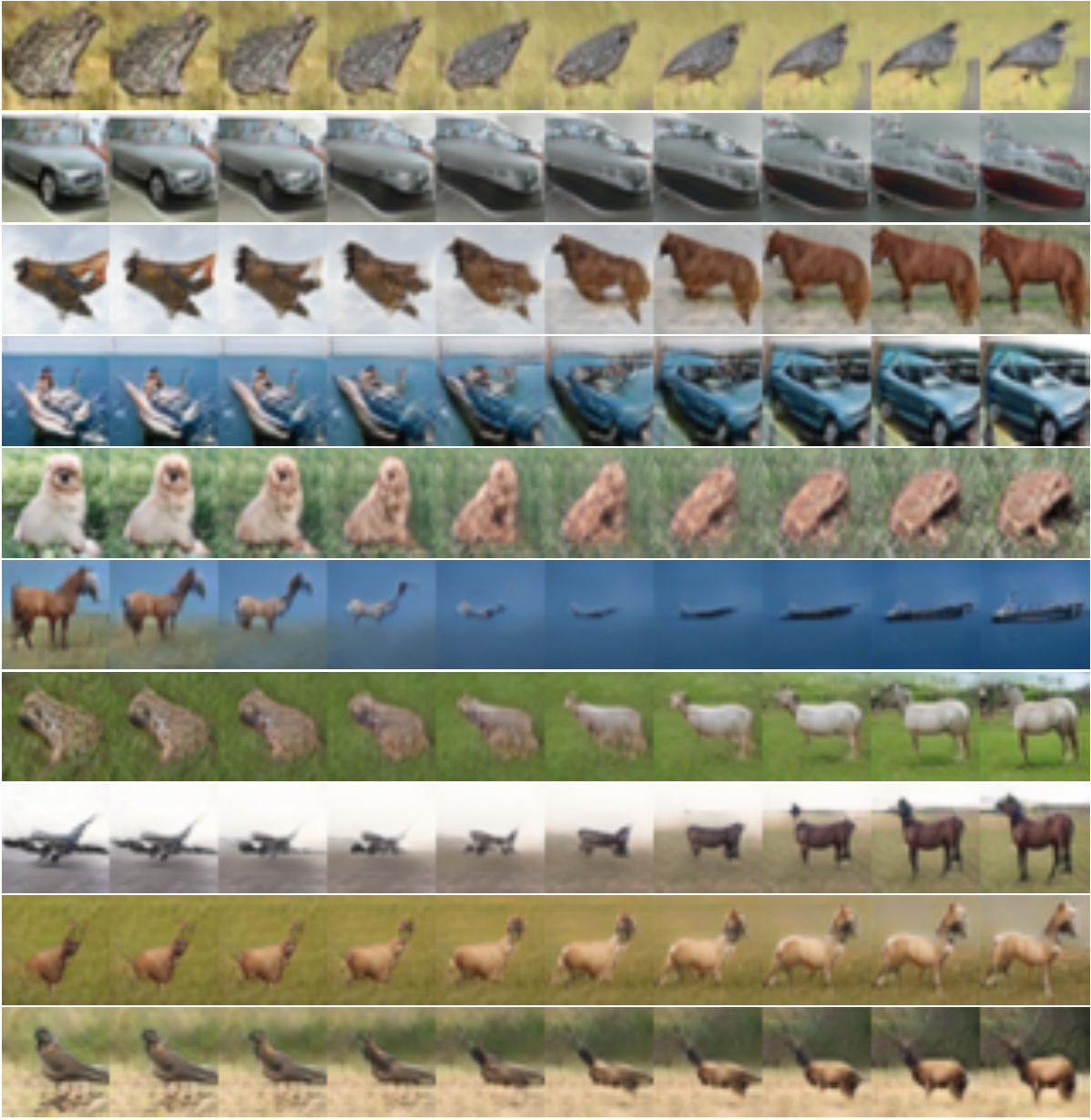}\hspace{1mm}}
\caption{Synthesized images by \modelnameMVP{} in the class-conditional CIFAR10 (with resnet-based generator): (a) Random samples where each row depicts the same class, (b) Intra-class linear interpolation from a source to the target, (c) inter-class linear interpolation. In inter-class interpolation, the class labels of the leftmost and rightmost images are one-hot vectors, while the rest are interpolated in-between; the resulting images are visualized. In all three cases, \modelnameMVP{} synthesizes realistic images.}
\label{fig:mvp_qualitative_class_cond}
\end{figure*}

\subsection{Model extensions and technical details} 
\label{ssec:mvp_method_extensions}

There are three  limitations in (\ref{eq:mvp_model1_rec}). Those are the following: a) (\ref{eq:mvp_model1_rec}) describes a polynomial expansion of a two-variable input, b) each expansion order requires additional layers, c) high-order polynomials might suffer from unbounded values. Those limitations are addressed below.

Our model can be readily extended beyond two-variable input; an extension with three-variable input is developed in sec.~\ref{sec:mvp_method_many_inputs_suppl}. The pattern (for each order) is similar to the two-variable input: a) a different embedding is found for each input variable, b) the embeddings are added together, c) the result is multiplied elementwise with the representation of the previous order.

The polynomial expansion of (\ref{eq:mvp_model1_rec}) requires $\Theta(N)$ layers for an $N^\text{th}$ order expansion. That is, each new order $n$ of expansion requires new parameters $\bm{U}\matnot{n, I}$ and $\bm{U}\matnot{n, II}$. However, the order of expansion can be increased without increasing the parameters substantially. To that end, we can capitalize on the product of polynomials.  Specifically, let $N_1$ be the order of expansion of the first polynomial. The output of the first polynomial is fed into a second polynomial, which has expansion order of $N_2$. Then, the output of the second polynomial will have an expansion order of $N_1\cdot N_2$. The second polynomial of degree $N_2$ can either be a polynomial of one variable (i.e., the output of the previous polynomial) or more variables (i.e., the output of the previous polynomial and one or more of the inputs). In both cases the total degree of the output of the second polynomial will be $N_1 \cdot N_2$. The choice of the type of polynomial in each case is a design option. The product of polynomials can be used with arbitrary number of polynomials; it suffices the output of the $\tau^\text{th}$ polynomial to be the input to the $(\tau+1)^\text{th}$ polynomial. For instance, if we assume a product of $\Phi \in \naturalnum$ polynomials, where each polynomial has an expansion order of two, then the polynomial expansion is of $2^{\Phi}$ order. In other words, we need $\Theta(\log_2(N))$ layers to achieve an $N^\text{th}$ order expansion.

In algebra, higher-order polynomials are unbounded and can thus suffer from instability for large values. To avoid such instability, we take the following three steps: a) \modelnameMVP{} samples the noise vector from the uniform distribution, i.e., from the bounded interval of $[-1, 1]$, b) a hyperbolic tangent is used in the output of the generator as a normalization, i.e., it constrains the outputs in the bounded interval of $[-1, 1]$, c) batch normalization~\citep{ioffe2015batch} is used to convert the representations to zero-mean. We emphasize that in GANs the hyperbolic tangent is the default activation function in the output of the generator, hence it is not an additional requirement of our method. Additionally, in our preliminary experiments, the uniform distribution can be changed for a Gaussian distribution without any instability. 
A theoretical analysis on the bounds of such multivariate polynomials would be an interesting subject for future work.  

Lastly, we highlight the flexibility of the proposed \modelnameMVP. 
sBN and \modelnamespade{} can be considered as special cases of the two-variable polynomial expansion. In particular, we exhibit in sec.~\ref{sec:mvp_difference_from_spade_suppl} how \modelnamespade{} can be extended into a general-purpose two-variable polynomial expansion. In addition, the products of polynomials would enable both sBN and \modelnamespade{} to perform higher-order expansions without increasing the number of layers.

 \vspace{-9pt}
\section{Experiments}
\label{sec:mvp_experiments}
\vspace{-6pt}

To validate the proposed formulation, the following diverse conditional generation tasks are considered: 
\begin{itemize} 
    \vspace{-1.1em}
    \setlength\itemsep{-0.3em}
    \item class-conditional generation trained on CIFAR10, Cars196 and SVHN in sec.~\ref{ssec:mvp_experiments_discrete} and sec.~\ref{ssec:mvp_experiment_class_conditional_svhn_suppl}.
    \item generation of unseen attribute combinations in sec.~\ref{ssec:mvp_experiments_unseen_combinations}. 
    \item attribute-guided generation in sec.~\ref{ssec:mvp_experiment_multiple_discrete_conditional_input_suppl}. 
    \item inverse problems in imaging, e.g., super-resolution and block-inpainting, trained on Cars196 and CelebA in sec.~\ref{ssec:mvp_experiments_continuous}. 
    \item edges-to-image translation trained on handbags and shoes in sec.~\ref{ssec:mvp_experiment_translation_edge2im_suppl}.
    \item image-to-image translation in sec.~\ref{ssec:mvp_experiment_translation_mnist_svhn_suppl}. 
\end{itemize}
\vspace{-2pt}

The details on the datasets and the evaluation metrics are deferred to the \supplementary{} (sec.~\ref{sec:mvp_experimental_details_suppl}) along with additional visualizations and experiments. Additionally, the source code of \modelnameMVP{} is available at \url{https://github.com/grigorisg9gr/polynomial_nets_for_conditional_generation}.

\begin{figure*}
\centering
    \centering
    \subfloat[Class-conditional generation]{\includegraphics[width=0.32\linewidth]{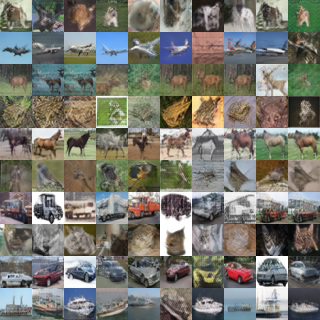}\hspace{1mm}}
    \subfloat[Class-conditional generation]{\includegraphics[width=0.32\linewidth]{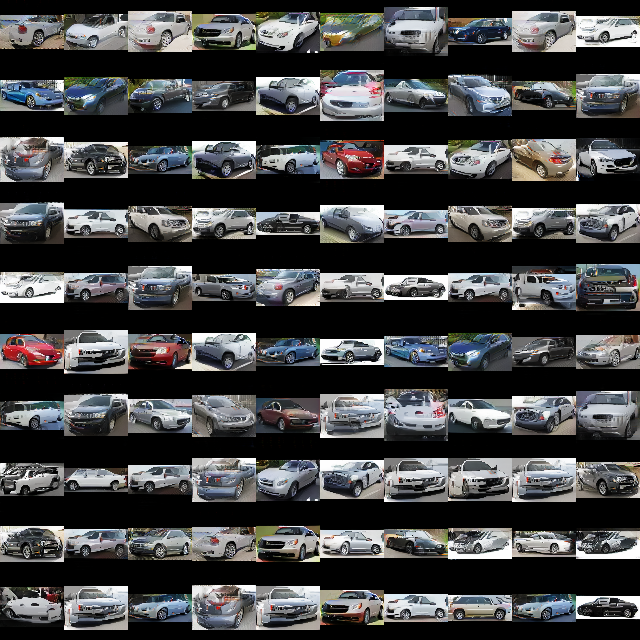}\hspace{1mm}}
    \subfloat[Block-inpainting]{\includegraphics[width=0.2675\linewidth]{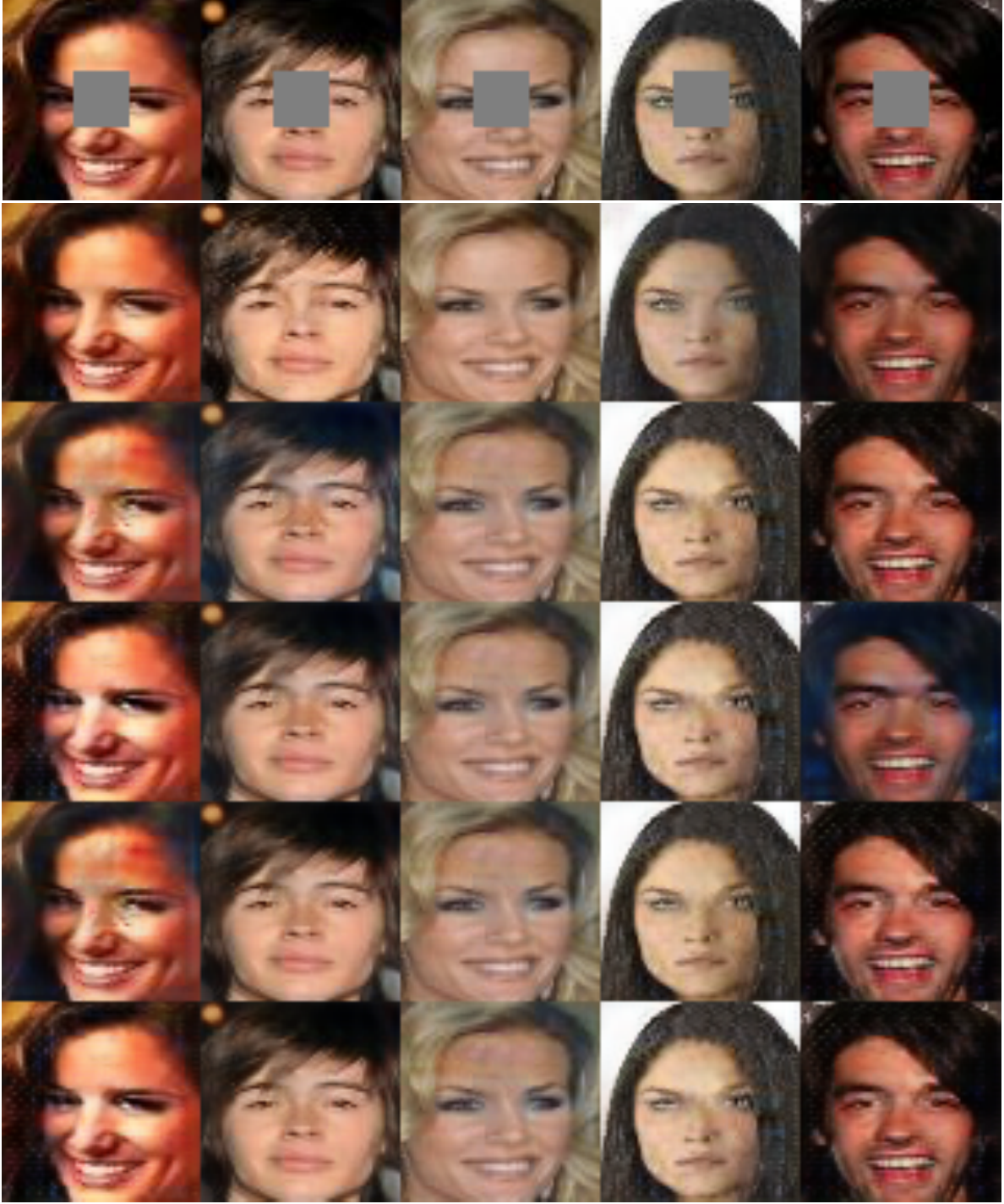}}
\caption{Synthesized images by \modelnameMVP{} in the (a), (b) class-conditional generation (sec.~\ref{ssec:mvp_experiments_discrete}) and (c) block-inpainting (sec.~\ref{ssec:mvp_experiments_continuous}). In class-conditional generation, each row depicts a single class.}
\label{fig:mvp_qualitative_class_cond_linear}
\end{figure*}

Our framework, e.g., (\ref{eq:mvp_model1_rec}), does not include any activation functions. To verify the expressivity of our framework, we maintain the same setting for the majority of the experiments below. Particularly, the generator does not have activation functions between the layers; there is only a hyperbolic tangent in the output space for normalization as typically done in GAN generators. 
However, we conduct one experiment using a strong baseline with activation functions. That is, a comparison with SNGAN~\citep{miyato2018cgans} in class-conditional generation is performed (sec.~\ref{ssec:mvp_experiments_discrete}). 

\paragraph{Baselines:} `\modelnamePI-SICONC' implements a polynomial expansion of a single variable by concatenating all the input variables. `\modelnamespade' implements a polynomial expansion with respect to the conditional variable. Also, `GAN-CONC' and `GAN-ADD' are added as baselines, where we replace the Hadamard products with concatenation and addition respectively. A schematic of the differences between the compared polynomial methods is depicted in Fig.~\ref{fig:mvp_schematics_product_polynomials}, while a detailed description of all methods is deferred to sec.~\ref{sec:mvp_experimental_details_suppl}. 
Each experiment is conducted \textbf{five} times and the mean and the standard deviation are reported. Throughout the experimental section, we reserve the symbol $\sbinvar{II}$ for the conditional input (e.g., a class label).

\vspace{-3pt}
\subsection{Class-conditional generation}
\label{ssec:mvp_experiments_discrete}
\vspace{-3pt}
In class-conditional generation the conditional input is a class label in the form of one-hot vector. The experiments we conduct below modify only the generator, while in all cases we assume there is the same discriminator. In particular, two types of generators are used: a) a resnet-based generator (SNGAN), b) a polynomial generator (\modelnamePI). The former network has exhibited strong performance the last few years, while the latter is a recently proposed PNN.

\begin{table}[tb]
\caption{Quantitative evaluation on class-conditional generation with resnet-based generator (i.e., SNGAN). Higher Inception Score (IS)~\citep{salimans2016improved} (lower Frechet Inception Distance (FID)~\citep{heusel2017gans}) indicates better performance. The baselines improve the IS of SNGAN, however they cannot improve the FID. Nevertheless, \modelnameMVPsngan{} improves upon all the baselines in both the IS and the FID.}
\centering
\begin{tabular}{ll}
  \begin{minipage}{.495\textwidth}
         \vspace{2pt}
         \begin{tabular}{|c | c | c|} 
             \hline
             \multicolumn{3}{|c|}{class-conditional generation on CIFAR10}\\ 
             \hline
             Model & IS ($\uparrow$) & FID ($\downarrow$)\\
             \hline
             SNGAN & $8.30\pm 0.11$ & $14.70\pm 0.97$\\\hline
             SNGAN-CONC & $8.50\pm 0.49$ & $30.65\pm 3.55$\\\hline
             SNGAN-ADD & $8.65\pm 0.11$ & $15.47\pm 0.74$\\\hline
             SNGAN-\modelnamespade & $8.69\pm 0.19$ & $21.74\pm	0.73$\\\hline
             \modelnameMVPsngan & $\bm{8.77 \pm 0.12}$ & $\bm{14.22 \pm 0.66}$\\
             \hline
         \end{tabular}
         \label{tab:mvp_exper_cifar10_sota}
  \end{minipage}
\end{tabular}
\end{table}

\paragraph{Resnet-based generator:} The experiment is conducted by augmenting the resnet-based generator of SNGAN\footnote{\label{footnote:experiments_generator}Further implementation details are offered in sec.~\ref{sec:mvp_experimental_details_suppl}.}. Each compared method will be named according to the modification on the generator of SNGAN, e.g., \modelnameMVPsngan{} utilizes a resnet-based generator following the proposed framework of (\ref{eq:mvp_model2_rec}). All methods are trained using CIFAR10 images; CIFAR10 is a popular benchmark in class-conditional generation. The quantitative results are in Table~\ref{tab:mvp_exper_cifar10_sota} and synthesized samples are illustrated in Fig.~\ref{fig:mvp_qualitative_class_cond}(a).  \modelnameMVPsngan{} improves upon all the baselines in both the Inception score (IS)~\citep{salimans2016improved} and the FID~\citep{heusel2017gans}. The proposed formulation enables inter-class interpolations. That is, the noise $\sbinvar{I}$ is fixed, while the class $\sbinvar{II}$ is interpolated. 
In Fig.~\ref{fig:mvp_qualitative_class_cond}(b) and Fig.~\ref{fig:mvp_qualitative_class_cond}(c), intra-class and inter-class linear interpolations are illustrated respectively. Both the quantitative and the qualitative results exhibit the effectiveness of our framework.

\begin{table*}[tb]
\caption{Quantitative evaluation on class-conditional generation with \modelnamePI-based generator. In CIFAR10, there is a considerable improvement on the IS, while in Cars196 FID drops dramatically with \modelnameMVP. We hypothesize that the dramatic improvement in Cars196 arises because of the correlations of the classes. For instance, the SUV cars (of different carmakers) share several patterns, which are captured by our high-order interactions, while they might be missed when learning different normalization statistics per class. The generator \emph{does not} have activation functions between the layers, so the deteriorated performance of GAN-CONC and GAN-ADD is reasonable.}

\centering
\begin{tabular}{ll}

  \begin{minipage}{.53\textwidth}
    \centering \hspace{-8mm}
         \vspace{0pt}
         \begin{tabular}{|c | c | c|} 
             \hline
             \multicolumn{3}{|c|}{class-conditional generation on CIFAR10}\\ 
             \hline
             Model & IS ($\uparrow$) & FID ($\downarrow$)\\
             \hline
             GAN-CONC & $3.73 \pm 0.32$ & $294.33\pm 8.16$\\\hline
             GAN-ADD & $3.74 \pm 0.60$ & $298.53\pm 16.54$\\\hline
             \modelnamespade & $4.00  \pm	0.53$ &	 $294.21  \pm	16.33$ \\ \hline
             \modelnamePI-SICONC & $6.65 \pm 0.60$ & $71.81 \pm 33.00$\\\hline 
             \modelnamePI & $7.54 \pm 0.16$ & $37.26 \pm 1.86$\\\hline 
             \modelnameMVP & $\bm{7.87 \pm 0.21}$ & $\bm{34.35 \pm 2.68}$\\
             \hline
         \end{tabular}
         \label{tab:mvp_exper_cifar10_linear}
  \end{minipage}
  \hfill  
  
  \begin{minipage}{.38\textwidth}
         \vspace{0pt}\hspace{-2.5mm}
         \begin{tabular}{|c | c |} 
             \hline
             \multicolumn{2}{|c|}{class-conditional generation on Cars196}\\ 
             \hline
             Model & FID ($\downarrow$)\\
             \hline
             GAN-CONC   & $240.45  \pm	16.79$\\\hline
             GAN-ADD    & $208.72  \pm	12.65$\\\hline
             \modelnamespade & $168.19  \pm	39.71$\\ \hline   
             \modelnamePI-SICONC & $153.39  \pm	27.93$\\\hline 
             \modelnamePI & $120.40 \pm	28.65$\\\hline
             \modelnameMVP & $\bm{55.48 \pm 3.16}$\\
             \hline
         \end{tabular}
         \label{tab:mvp_exper_cars196_linear_class_cond}
  \end{minipage}
\end{tabular}
\end{table*}

\paragraph{\modelnamePI-based generator:} A polynomial expansion is selected as the baseline architecture for the generator\textsuperscript{\ref{footnote:experiments_generator}}. In the original \modelnamePI{} conditional batch normalization (CBN) was used in the generator; this is replaced by batch normalization in the rest of the compared methods. 
The quantitative results in CIFAR10 are summarized in Table~\ref{tab:mvp_exper_cifar10_linear} (left). \modelnamespade{} does not utilize the products of polynomials formulation, which explains its poor performance. Additionally, even though \modelnamePI-SINCONC and \modelnamePI{} both express a single-variable polynomial expansion, the inductive bias inserted into the network has a substantial effect in the final performance. Notice that \modelnameMVP{} outperforms all the baselines by a large margin. 

We also evaluate class-conditional generation in Cars196 that has $196$ classes. Cars196 is selected as a reasonably larger dataset than CIFAR10; it contains $196$ classes and yet it can be trained on a single GPU. The compared methods and the training details remain the same. The results in Table~\ref{tab:mvp_exper_cifar10_linear} (right) demonstrate a substantial difference between \modelnameMVP{} and the compared methods. Namely, the proposed method achieves a $53.9\%$ reduction of the FID over the best-performing baseline. 
We emphasize that both \modelnamespade{} and \modelnamePI{} were not originally built for class-conditional generation, however we have tried to optimize the respective hyper-parameters to optimize their performance. The performance gap between \modelnamespade{} and the rest PNNs can be explained by the lack of products of polynomials. We also verify the experimental results on CIFAR10 that demonstrate how \modelnamePI{} improves upon \modelnamePI-SINCONC. However, even \modelnamePI{} obtains a substantially higher FID than \modelnameMVP; we hypothesize that the improvement arises because of the correlations between the classes. For instance, the SUV cars of different carmakers share several patterns. Such correlations are captured by our framework, while they might be missed when learning different normalization statistics per class. 
Overall, \modelnameMVP{} synthesizes plausible images (Fig.~\ref{fig:mvp_qualitative_class_cond_linear}) even in the absence of activation functionsn.

\vspace{-3pt}
\subsection{Polynomial conditioning for generating unseen attribute combinations}
\label{ssec:mvp_experiments_unseen_combinations}
\vspace{-3pt}

The proposed \modelnameMVP{} is a general framework for conditional generation, and we have already demonstrated how it can be used for conditional generation in GANs. In this section, we extend our method to the Variational Autoencoder (VAE)~\citep{kingma2013auto} to showcase a useful byproduct of our formulation, namely the ability to generate unseen label combinations.
In particular, as introduced in \citep{georgopoulos2020multilinear}, this is a multi-label setting where one (or more) combinations are not seen in the training set. The method is then evaluated based on its ability to generate the unseen attribute combinations. To this end, we implement \modelnameMVP-VAE, a variation of the conditional VAE~\citep{kingma2013auto} where both the encoder and decoder are polynomials. We perform experiments on CelebA using the annotated attributes of smile and gender. Similar to \citep{georgopoulos2020multilinear} we remove the combination (Smiling, Female) from the training set. The results in Figure \ref{fig:cope_vae_unseen_combinations} highlight the efficacy of the proposed conditioning method in disentangling the two labels and leveraging the multiplicative interactions to synthesize the missing combination.
\vspace{-3pt}

\begin{figure*}
\centering
    \centering
    \subfloat{\includegraphics[width=0.8\linewidth]{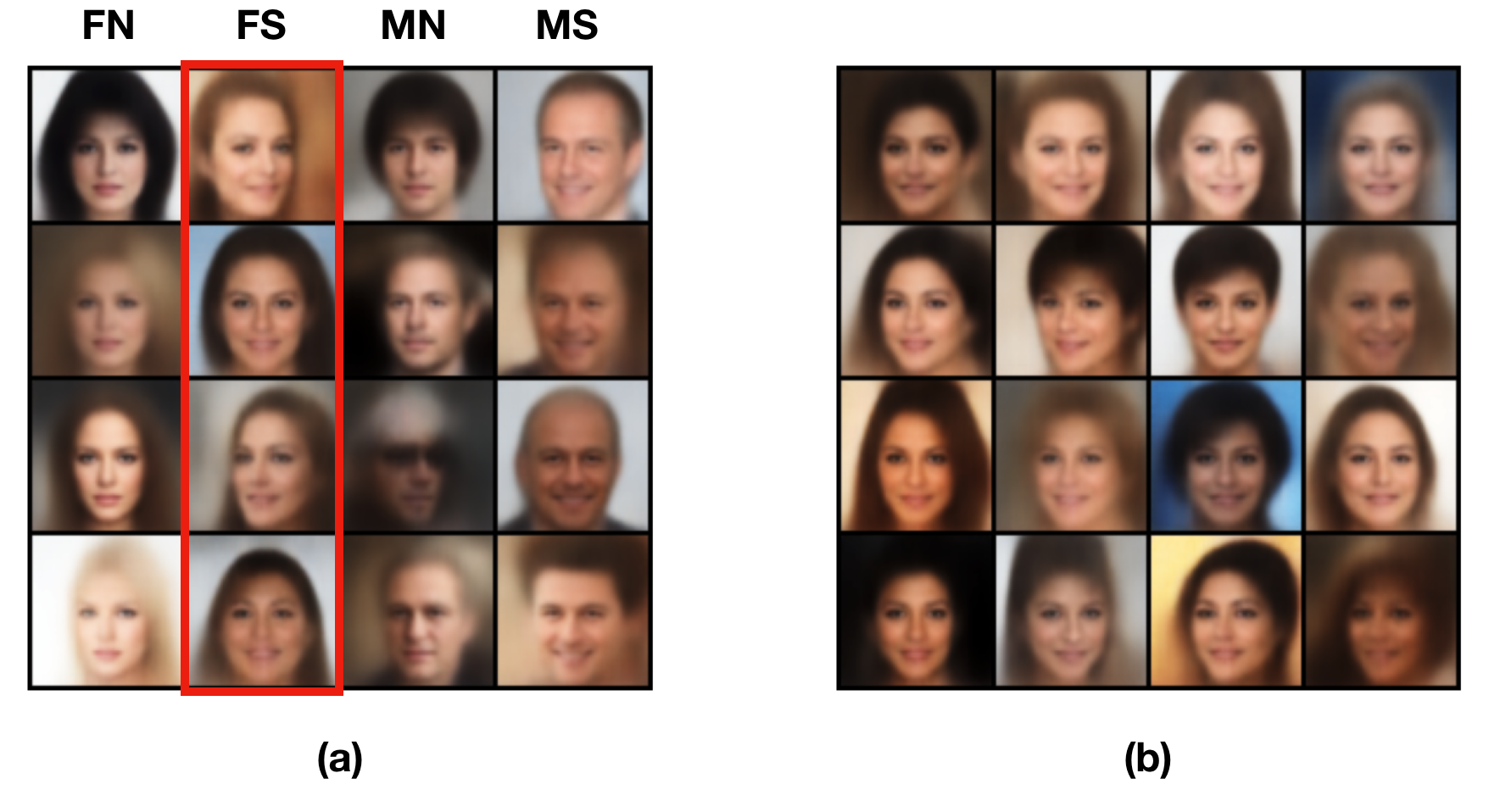}\hspace{1mm}}
\caption{Synthesized images with \modelnameMVP-VAE. In (a), all combinations are illustrated (the red is the combination missing during training, i.e. Female+Smile), while in (b), only images from the missing combination are visualized.}
\label{fig:cope_vae_unseen_combinations}
\end{figure*}
\vspace{-3pt}

\subsection{Inverse problems in imaging}
\label{ssec:mvp_experiments_continuous}
\vspace{-3pt}
In the following paragraphs we evaluate the performance of \modelnameMVP{} in inverse problems. We select super-resolution and block-inpainting as two popular tasks. 

The core architectures remain as in the experiment above, i.e., \modelnameMVP{} and \modelnamePI-SICONC implement products of polynomials. A single change is made in the structure of the discriminator: Motivated by \citep{miyato2018cgans}, we include an elementwise product of $\sbinvar{II}$ with the real/fake image in the discriminator. This stabilizes the training and improves the results. Even though architectures specialized for a single task (e.g., \citet{ledig2016photo}) perform well in that task, their well-selected inductive biases (e.g., perceptual or $\ell_1$ loss) do not generalize well in other domains or different conditional inputs. Our goal is not to demonstrate state-of-the-art results, but rather to scrutinize the effectiveness of the proposed formulation in different conditional generation tasks. To that end, we consider \modelnamePI-SICONC, \modelnamespade{} and \modelnamespade-\modelnameMVP{} as the baselines.

\begin{table*}[tb]
\caption{Quantitative evaluation on super-resolution with \modelnamePI-based generator on Cars196. The task on the left is super-resolution $16\times$, while on the right the task is super-resolution $8\times$. Our variant of \modelnamespade, i.e., \modelnamespade-\modelnameMVP{} (details in sec.~\ref{sec:mvp_experimental_details_suppl}),  vastly improves the original \modelnamespade. The full two-variable model, i.e., \modelnameMVP, outperforms the compared methods.}
\centering
\vspace{0pt}
\begin{tabular}{ll}

  \begin{minipage}{.39\textwidth}
  \scalebox{0.94}{
         \vspace{0pt}\hspace{-3mm}
         \begin{tabular}{|c | c |} 
             \hline
             \multicolumn{2}{|c|}{Super-resolution $16\times$ Cars196}\\ \hline  
             Model & FID ($\downarrow$)\\\hline
             \modelnamespade & $111.75  \pm	13.41$ \\ \hline
             \modelnamePI-SICONC & $80.16  \pm	12.42$\\\hline 
             \modelnamespade-\modelnameMVP & $72.63  \pm	3.18$\\\hline
             \modelnameMVP & $\bm{60.42  \pm	6.19}$\\\hline
         \end{tabular}
         }
         \label{tab:mvp_exper_superres_4px_cars196_linear}
  \end{minipage}
  \hfill  
  
  \begin{minipage}{.60\textwidth}
         \vspace{0pt}\hspace{-6mm}
        \scalebox{0.94}{
         \begin{tabular}{|c | c | c | c|} 
             \hline
             \multicolumn{4}{|c|}{Super-resolution $8\times$ Cars196}\\ \hline
             Model & FID ($\downarrow$) & SSIM ($\uparrow$) & LPIPS ($\downarrow$)\\\hline
             \modelnamespade & $119.18  \pm	14.82$ & $0.32$ & $0.178$\\ \hline
             \modelnamePI-SICONC & $186.42  \pm	40.84$ & $0.31$ & $0.200$ \\\hline 
             \modelnamespade-\modelnameMVP & $64.76  \pm	8.26$ &  $0.49$ & $0.135$ \\\hline 
             \modelnameMVP & $\bm{62.76  \pm	4.37}$ & $\bm{0.53}$ & $\bm{0.127}$ 	\\\hline
         \end{tabular}
         }
         \label{tab:mvp_exper_superres_8px_cars196_linear}
  \end{minipage}
\end{tabular}
\end{table*}

We experiment with two settings in super-resolution: one that the input image is down-sampled $8\times$ and one that it is down-sampled $16\times$. The two settings enable us to test the granularity of \modelnameMVP{} at different scales. In super-resolution $16\times$, $\sbinvar{II}$ (i.e., the low-resolution input) has $48$ dimensions, while in super-resolution $8\times$, $\sbinvar{II}$ has $192$ dimensions. 
The FID scores in Cars196 for the task of super-resolution are reported in Table~\ref{tab:mvp_exper_superres_8px_cars196_linear}. In addition, for the experiment on super-resolution $8\times$, the SSIM~\citep{wang2004image} and LPIPS~\citep{zhang2018unreasonable} are reported as widely-used metrics in inverse imaging tasks. Notice that the performance of \modelnamePI-SICONC deteriorates substantially when the dimensionality of the conditional variable increases. That validates our intuition about the concatenation in the input of the generator (sec.~\ref{sec:mvp_difference_from_pinet_suppl}), i.e., that the inductive bias of single-variable PNNs might not fare well in conditional generation tasks. 
We also report the \modelnamespade-\modelnameMVP, which captures higher-order correlations with respect to the first variable as well (further details in sec.~\ref{sec:mvp_experimental_details_suppl}). The proposed \modelnamespade-\modelnameMVP{} outperforms the original \modelnamespade, however it cannot outperform the full two-variable model, i.e., \modelnameMVP. The results indicate that \modelnameMVP{} performs well even when the conditional input is an image.

Beyond the quantitative results, qualitative results provide a different perspective on what the mappings learn. Qualitative results on the super-resolution experiments on Cars196 are provided in Fig.~\ref{fig:mvp_qualitative_continuous_cond}. We also provide synthesized results on both super-resolution $8\times$ and block-inpainting on CelebA in Fig.~\ref{fig:mvp_qualitative_continuous_cond} and Fig.~\ref{fig:mvp_qualitative_class_cond_linear} respectively. For each conditional image, different noise vectors $\sbinvar{I}$ are sampled. Notice that the corresponding synthesized images differ in the fine details. For instance, changes in the mouth region, the car type or position and even background changes are observed. Thus, \modelnameMVP{} synthesizes realistic images that i) correspond to the conditional input, ii) vary in the fine details. Similar variation has emerged even when the source and the target domains differ substantially, e.g., in the translation of MNIST digits to SVHN digits (sec.~\ref{ssec:mvp_experiment_translation_mnist_svhn_suppl}). We should mention that the aforementioned experiments were conducted only using the adversarial learning loss. In the literature, regularization techniques have been proposed specifically for image-to-image translation, e.g., \citet{yang2019diversity, lee2019harmonizing}. However, such works utilize additional losses and even require additional networks for training, which makes the training computationally demanding and more sensitive to design choices.

 \vspace{-2pt}
\section{Discussion}
\label{sec:mvp_discussion}
\vspace{-4pt}

\modelnameMVP{} can be used for various conditional generation tasks as the experimental evaluation in both sec.~\ref{sec:mvp_experiments} and the \supplementary{} illustrate. Namely, \modelnameMVP{} can synthesize diverse content, which is typically tackled using auxiliary losses or networks in conditional GANs. We expect this attribute of diverse generation to be useful in inverse tasks, where multiple latent sharp images can correspond to a single corrupted image. The diverse generation can be attributed to the higher-order correlations between the noise and the conditional variable. Such higher-order correlations also enable synthesizing images with unseen attribute combinations (sec.~\ref{ssec:mvp_experiments_unseen_combinations}). 

One limitation of our work is that our method has not been tried in large-scale synthesis, e.g., like \citet{brock2019large}. However, only a very limited number of labs/institutions have access to such resources. In addition, our single-GPU training does have a reduced energy footprint when compared to the multi-GPU setups of large scale GANs. We believe that the proposed method has merit despite the single-GPU training. Indeed, we demonstrate that \modelnameMVP{} can perform well in different tasks, which could help reduce the search for independent methods for every single task. 

An interesting future step would be to evaluate the performance of the proposed \modelnameMVP{} in training with limited data, e.g., using techniques similar to \citet{karras2020training}. This task can also encourage architecture discovery, similar to the various architectures devised in \citet{chrysos2020poly}. In this work, we have demonstrated two such architectures with recursive formulations as in (\ref{eq:mvp_model1_rec}) and (\ref{eq:mvp_model2_rec}), however additional architectures can be designed by changing the tensor decomposition. 

The equations of (\ref{eq:mvp_model1_rec}) and (\ref{eq:mvp_model2_rec}) express polynomial expansions of arbitrary order, which can approximate the target function without using activation functions between the recursive terms. However, in sec.~\ref{ssec:mvp_experiments_discrete} we have also experimented with generators that include activation functions between recursive terms (i.e., the experiment with SNGAN variants). In the future, we intend to study how to include the activation functions in the formulation and study the properties of such piecewise polynomial expansions. Lastly, demystifying the relationship between the order of the polynomial expansion and the expressivity (e.g., the implementation details for the order in sec.~\ref{sec:mvp_experimental_details_suppl}) is a promising direction. 

\textbf{Societal impact of image generation}: Manipulation of images is made possible through algorithms and architectures like ours with well-studied potential negative applications. The rapid progress in GAN-based image synthesis has made the discussion imperative. We encourage further work to be conducted on understanding how to detect synthesized images, e.g., in the conditional generation setting. For instance, our method can be used for training powerful classifiers that detect synthesized images.

 \vspace{-4pt}
\section{Conclusion}
\vspace{-6pt}

We have introduced \modelnameMVP{} for conditional data generation. \modelnameMVP{} expresses a polynomial expansion of two input variables,  i.e., a noise vector and a conditional variable. We exhibit how previously published methods, such as \modelnamespade{} and sBN, can be considered as special forms of this two-variable polynomial expansion. Notably, \modelnameMVP{} can be augmented to accept an arbitrary number of conditional variables as inputs. 
The empirical evaluation confirms that our framework can synthesize realistic images in five diverse tasks, including inverse problems and class-conditional generation. 
Inverse problems, such as super-resolution, can benefit from the proposed framework; we showcase that sampling different noise vectors results in plausible differences in the synthesized image. We derive two recursive formulations, i.e., \eqref{eq:mvp_model1_rec} and \eqref{eq:mvp_model2_rec}, but a new task-specific formulation can be easily defined. We expect this to be useful in learning from different modalities, such as visual question answering (VQA) or text-to-speech synthesis, since \modelnameMVP{} can capture high-order auto- and cross-correlations among the input variables.

 \section*{Acknowledgements}
\label{sec:mvp_acks}

This project was sponsored by the Department of the Navy, Office of Naval Research(ONR) under a grant number N62909-17-1-2111. We are thankful to the reviewers for their constructive feedback and recommendations on how to improve our manuscript.

\bibliography{egbib}
\bibliographystyle{abbrvnat}

\clearpage\newpage
\appendix
\section{Summary of sections in the Appendix}
\label{sec:mvp_intro_suppl}

In the following sections, further details and derivations are provided to elaborate the details of the \modelnameMVP. Specifically, in sec.~\ref{sec:mvp_method_suppl} the decomposition and related details on the method are developed. The extension of our method beyond two-input variables is studied in sec.~\ref{sec:mvp_method_many_inputs_suppl}. 
A method frequently used in the literature for fusing information is concatenation; we analyze how concatenation captures only additive and not more complex correlations (e.g., multiplicative) in sec.~\ref{sec:mvp_concatenation_suppl}. 
The differences from \modelnamePI~\citep{chrysos2020poly} is explored in sec.~\ref{sec:mvp_difference_from_pinet_suppl}. In sec.~\ref{sec:mvp_difference_from_other_polynomial_networks_suppl}, some recent (conditional) data generation methods are cast into the polynomial neural network framework and their differences from the proposed framework are analyzed.
The experimental details including the evaluation metrics and details on the baselines are developed in sec.~\ref{sec:mvp_experimental_details_suppl}. 
In sec.~\ref{sec:mvp_experiments_suppl}, additional experimental results are included. Lastly, the differences from works that perform diverse generation are explored in sec.~\ref{sec:mvp_difference_from_other_diverse_generation_techniques_suppl}.    \section{Method derivations}
\label{sec:mvp_method_suppl}

In this section, we expand on the method details, including the scalar output case or the notation. Specifically, a more detailed notation is determined in sec.~\ref{ssec:mvp_notation}; the scalar output case is analyzed in sec.~\ref{ssec:mvp_method_scalar_output_suppl}. 
In sec.~\ref{ssec:mvp_method_second_order_proof_ccp} a second order expansion is assumed to illustrate the connection between the polynomial expansion and the recursive formula. 
Sequentially, we derive an \emph{alternative} model with different factor sharing. This model, called \modeltwo, has a nested factor sharing format (sec.~\ref{ssec:mvp_method_second_order_proof_ncp}).

\subsection{Notation}
\label{ssec:mvp_notation}

Our derivations rely on tensors (i.e., multidimensional equivalent of matrices) and (tensor) products. We relay below the core notation used in our work, the interested reader can find further information in the tensor-related literature~\citep{kolda2009tensor, debals2017concept}. 

\textbf{Symbols of variables}: Tensors/matrices/vectors are symbolized by calligraphic/uppercase/lowercase boldface letters e.g., $\bmcal{W}$,$\bm{W}$,$\bm{w}$. 

\textbf{Matrix products}: The \textit{Hadamard} product of $\bm{A}, \bm{B} \in \realnum^{I \times N}$ is defined as $\bm{A} * \bm{B}$ and is equal to ${a}_{(i, j)} {b}_{(i, j)}$ for the $(i, j)$ element. The \textit{Khatri-Rao} product of matrices $\bm{A} \in \realnum^{I \times N}$
and $\bm{B} \in \realnum^{J \times N}$ is
denoted by $\bm{A} \odot \bm{B}$ and yields a matrix of
dimensions $(IJ)\times N$.  The Khatri-Rao product for a set of matrices  $\{\bm{A}\matnot{m} \in \realnum^{I_m \times N} \}_{m=1}^M$ is abbreviated by $\bm{A}\matnot{1} \odot \bm{A}\matnot{2} \odot  \cdots \odot  \bm{A}\matnot{M} \doteq  \bigodot_{m=1}^M \bm{A}\matnot{m}$.

\textbf{Tensors}: Each element of an $M^{th}$ order tensor $\bmcal{W}$ is addressed by $M$ indices, i.e., $(\bmcal{W})_{i_{1}, i_{2}, \ldots, i_{M}} \doteq w_{i_{1}, i_{2}, \ldots, i_{M}}$. An $M^{th}$-order tensor $\bmcal{W}$ is  defined over the
tensor space $\realnum^{I_{1} \times I_{2} \times \cdots \times
I_{M}}$, where $I_{m} \in \mathbb{Z}$ for $m=1,2,\ldots,M$. 
 The \textit{mode-$m$ unfolding} of a tensor $\bmcal{W} \in
 \realnum^{I_1 \times I_2 \times \cdots \times I_M}$ maps
 $\bmcal{W}$ to a matrix $\bm{W}_{(m)} \in \realnum^{I_{m}
 \times \bar{I}_{m}}$ with $\bar{I}_{m}= \prod_{k=1 \atop k  \neq m}^M I_k $ such
 that the tensor element $w_{i_1, i_2, \ldots, i_M}$ is
 mapped to the matrix element $w_{i_{m}, j}$ where
 $j=1 + \sum_{k=1 \atop k \neq m}^M (i_k - 1) J_k$ with $J_k =
\prod_{n =1 \atop n \neq m}^{k-1} I_n $. 
The \textit{mode-$m$ vector product} of $\bmcal{W}$ with a
vector $\bm{u} \in \realnum^{I_m}$, denoted by
$\bmcal{W} \times_{m} \bm{u} \in \realnum^{I_{1}\times
I_{2}\times\cdots\times I_{m-1}  \times I_{m+1} \times
\cdots \times I_{M}} $, results in a tensor of order $M-1$:
\begin{equation}\label{E:Tensor_Mode_n}
(\bmcal{W} \times_{m} \bm{u})_{i_1, \ldots, i_{m-1}, i_{m+1},
\ldots, i_{M}} = \sum_{i_m=1}^{I_m} w_{i_1, i_2, \ldots, i_{M}} u_{i_m}.
\end{equation}
We denote
$\bmcal{W} \times_{1} \bm{u}^{(1)} \times_{2} \bm{u}^{(2)} \times_{3}  \cdots \times_{M} \bm{u}^{(M)}  \doteq 
\bmcal{W} \prod_{m=1}^m \times_{m} \bm{u}^{(m)}$.

The \textit{CP decomposition}~\citep{kolda2009tensor} factorizes a tensor into a sum of component rank-one tensors. The rank-$R$ CP decomposition of an $M^{th}$-order tensor $\bmcal{W}$ is written as:
 \begin{equation}\label{E:CP}
\bmcal{W}  \doteq [\![ \bm{U}\matnot{1}, \bm{U}\matnot{2}, \ldots, \bm{U}\matnot{M}  ]\!] =  \sum_{r=1}^R \bm{u}_r^{(1)}  \circ \bm{u}_r^{(2)}  \circ \cdots \circ \bm{u}_r^{(M)},
\end{equation}
where $\circ$ is the vector outer product. The factor matrices $\big\{ \bm{U}\matnot{m} = [\bm{u}_1^{(m)},\bm{u}_2^{(m)}, \cdots, \bm{u}_R^{(m)} ] \in \mathbb{R}^{I_m \times R} \big\}_{m=1}^{M}$ collect 
the vectors from the rank-one components. By considering the mode-$1$ unfolding of $\bmcal{W}$, the CP decomposition can be written in matrix form as: 

 \begin{equation}
 \label{eq:polygan_cp_unfolding}
\bm{W}_{(1)}  
\doteq \bm{U}\matnot{1} \bigg( \bigodot_{m = M}^{2} \bm{U}\matnot{m}\bigg)^T
\end{equation}

The following lemma is useful in our method: 

\begin{lemma} 
For a set of $N$ matrices $\{\bm{A}\matnot{\nu} \in \realnum^{I_{\nu} \times K} \}_{\nu=1}^N$  and $\{\bm{B}\matnot{\nu} \in \realnum^{I_{\nu} \times L} \}_{\nu=1}^N$, the following equality holds:
\begin{equation}
    (\bigodot_{\nu=1}^N \bm{A}\matnot{\nu})^T \cdot (\bigodot_{\nu=1}^N \bm{B}\matnot{\nu}) = (\bm{A}\matnot{1}^T \cdot \bm{B}\matnot{1}) * \ldots * (\bm{A}\matnot{N}^T \cdot \bm{B}\matnot{N})
\label{eq:polygan_suppl_lemma1_N}
\end{equation}
\label{lemma:polygan_lemma_hadamard_kr2}
\end{lemma}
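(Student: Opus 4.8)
The plan is to verify the identity entrywise, reducing it to the mixed-product property of the Kronecker product. First I would fix notation: write $\bm{a}_k^{(\nu)}$ for the $k$-th column of $\bm{A}\matnot{\nu}$ and $\bm{b}_l^{(\nu)}$ for the $l$-th column of $\bm{B}\matnot{\nu}$, for $k\in\{1,\dots,K\}$, $l\in\{1,\dots,L\}$, $\nu\in\{1,\dots,N\}$. By the definition of the Khatri-Rao product as the column-wise Kronecker product, the $k$-th column of $\bigodot_{\nu=1}^N \bm{A}\matnot{\nu}$ is $\bigotimes_{\nu=1}^N \bm{a}_k^{(\nu)} \in \realnum^{\prod_\nu I_\nu}$, and likewise the $l$-th column of $\bigodot_{\nu=1}^N \bm{B}\matnot{\nu}$ is $\bigotimes_{\nu=1}^N \bm{b}_l^{(\nu)}$. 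Both sides of the claimed equation are therefore $K\times L$ matrices, and it suffices to show that they agree in each entry $(k,l)$.

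Next I would evaluate the left-hand side. Its $(k,l)$ entry is the inner product of the $k$-th column of $\bigodot_{\nu=1}^N \bm{A}\matnot{\nu}$ with the $l$-th column of $\bigodot_{\nu=1}^N \bm{B}\matnot{\nu}$, i.e. $\left(\bigotimes_{\nu=1}^N \bm{a}_k^{(\nu)}\right)^{\!\!T}\left(\bigotimes_{\nu=1}^N \bm{b}_l^{(\nu)}\right)$. Using $(\bm{P}\otimes\bm{Q})^T = \bm{P}^T\otimes\bm{Q}^T$ and the mixed-product property $(\bm{P}_1\otimes\bm{P}_2)(\bm{Q}_1\otimes\bm{Q}_2) = (\bm{P}_1\bm{Q}_1)\otimes(\bm{P}_2\bm{Q}_2)$ — extended from two to $N$ factors by a one-line induction — together with the fact that the Kronecker product of $1\times1$ matrices is ordinary multiplication, this entry collapses to the scalar product $\prod_{\nu=1}^N \big(\bm{a}_k^{(\nu)}\big)^{\!T}\bm{b}_l^{(\nu)}$.

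Finally I would evaluate the right-hand side. For each $\nu$, the matrix $\bm{A}\matnot{\nu}^T\bm{B}\matnot{\nu}\in\realnum^{K\times L}$ has $(k,l)$ entry $\big(\bm{a}_k^{(\nu)}\big)^{\!T}\bm{b}_l^{(\nu)}$. Since the Hadamard product $*$ acts entrywise, the $(k,l)$ entry of $\big(\bm{A}\matnot{1}^T\bm{B}\matnot{1}\big)*\cdots*\big(\bm{A}\matnot{N}^T\bm{B}\matnot{N}\big)$ is $\prod_{\nu=1}^N \big(\bm{a}_k^{(\nu)}\big)^{\!T}\bm{b}_l^{(\nu)}$, which coincides with the expression obtained for the left-hand side; hence the two matrices are equal. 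Equivalently, one can run an induction on $N$: the case $N=1$ is trivial, and the inductive step is precisely the $N=2$ instance combined with the mixed-product property. The only mildly delicate point is justifying the mixed-product property for an $N$-fold Kronecker product and keeping the index bookkeeping consistent when unrolling it; the remaining steps are routine.
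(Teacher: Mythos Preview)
Your argument is correct: reducing both sides to their $(k,l)$ entry and invoking the mixed-product property of the Kronecker product is exactly the right move, and the bookkeeping is clean. Note, however, that the paper does not actually prove this lemma in-line; it simply defers to the appendix of \citet{chrysos2019polygan} for an ``indicative proof.'' So there is no in-paper proof to compare against. What you have written is a self-contained entrywise verification that is more explicit than what the paper offers here, and it stands on its own.
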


An indicative proof can be found in the Appendix of \citet{chrysos2019polygan}.

\subsection{Scalar output}
\label{ssec:mvp_method_scalar_output_suppl}
The proposed formulation expresses higher-order interactions of the input variables. To elaborate that, we develop the single output case below. That is, we focus on an element $\tau$ of the output vector, e.g., a single pixel. In the next few paragraphs, we consider the case of a scalar output $\outvar_{\tau}$, with $\tau\in [1, o]$ when the input variables are $\sbinvar{I}, \sbinvar{II} \in \knum^d$. To avoid cluttering the notation we only refer to the scalar output with $\outvar_{\tau}$ in the next few paragraphs.

As a reminder, the polynomial of expansion order $N \in \naturalnum$ with output $\boutvar \in \realnum^o$ has the form:

\begin{equation}
    \boutvar = G(\sbinvar{I}, \sbinvar{II}) = \sum_{n=1}^N \sum_{\rho=1}^{n+1} \bigg(\bmcal{W}^{[n, \rho]} \prod_{j=2}^{\rho} \times_{j} \sbinvar{I}  \prod_{\tau=\rho+1}^{n+1} \times_{\tau} \sbinvar{II}\bigg) + \bm{\beta}
    \label{eq:mvp_poly_general_eq_2var_suppl}
\end{equation}

We assume a second order expansion ($N=2$) and let $\tau$ denote an arbitrary scalar output of $\boutvar$. The first order correlations can be expressed through the sums $\sum_{\lambda=1}^d {w_{\tau,\lambda}^{[1,1]}}\invar_{\scalebox{.6}{II},\lambda}$ and $\sum_{\lambda=1}^d {w_{\tau,\lambda}^{[1,2]}}\invar_{\scalebox{.6}{I},\lambda}$. The second order correlations include both auto- and cross-correlations. The tensors $\bmcal{W}^{[2, 1]}$ and $\bmcal{W}^{[2, 3]}$ capture the auto-correlations, while the tensor $\bmcal{W}^{[2, 2]}$ captures the cross-correlations.

\begin{figure}[!t]
    \centering
    \includegraphics[width=0.96\linewidth]{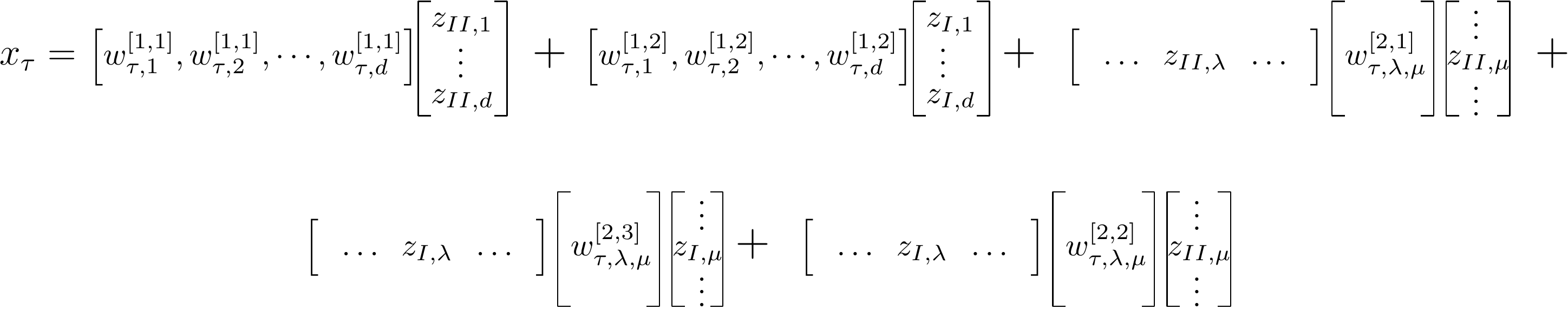}
\caption{Schematic for second order expansion with scalar output $\outvar_{\tau} \in \realnum$. The abbreviations $\invar_{I,\lambda}, \invar_{I,\mu}$ are elements of $\binvar_I$ with $\lambda, \mu \in [1,d]$. Similarly, $\invar_{II,\lambda}, \invar_{II,\mu}$ are elements of  $\binvar_{II}$. The first two terms (on the right side of the equation) are the first-order correlations; the next two terms are the second order auto-correlations. The last term expresses the second order cross-correlations.}
\label{fig:mvp_scalar_output_second_order}
\end{figure} 

A pictorial representation of the correlations are captured in Fig.~\ref{fig:mvp_scalar_output_second_order}. Collecting all the terms in an equation, each output is expressed as:

\begin{equation}
\begin{split}
    \outvar_{\tau} = \beta_\tau + \sum_{\lambda=1}^d \Big[{w_{\tau,\lambda}^{[1,1]}}\invar_{\scalebox{.6}{II},\lambda} + {w_{\tau,\lambda}^{[1,2]}}\invar_{\scalebox{.6}{I},\lambda} + \sum_{\mu=1}^d {w_{\tau,\lambda, \mu}^{[2,1]}}\invar_{\scalebox{.6}{II},\lambda} \invar_{\scalebox{.6}{II},\mu}  + \sum_{\mu=1}^d {w_{\tau,\lambda, \mu}^{[2,3]}}\invar_{\scalebox{.6}{I},\lambda} \invar_{\scalebox{.6}{I},\mu} + \sum_{\mu=1}^d {w_{\tau,\lambda, \mu}^{[2,2]}}\invar_{\scalebox{.6}{I},\lambda} \invar_{\scalebox{.6}{II},\mu} \Big]
\end{split}
\label{eq:mvp_scalar_output_second_order}
\end{equation}

where $\beta_\tau \in \realnum$. Notice that all the correlations of up to second order are captured in \eqref{eq:mvp_scalar_output_second_order}.

\subsection{Second order derivation for two-variable input}
\label{ssec:mvp_method_second_order_proof_ccp}

In all our derivations, the variables associated with the first input $\sbinvar{I}$ have an $I$ notation, e.g., $\bm{U}\matnot{1, I}$. Respectively for the second input $\sbinvar{II}$, the notation $II$ is used.

Even though \eqref{eq:mvp_poly_general_eq_2var_suppl} enables any order of expansion, the learnable parameters increase exponentially, therefore we can use a coupled factorization to reduce the parameters. Next, we derive the factorization for a second order expansion (i.e., $N=2$) and then provide the recursive relationship that generalizes it for an arbitrary order. 

\textbf{Second order derivation}: For a second order expansion (i.e., $N=2$ in \eqref{eq:mvp_poly_general_eq_2var}), we factorize each parameter tensor $\bmcal{W}^{[n, \rho]}$. We assume a coupled CP decomposition for each parameter as follows:

\begin{itemize}
    \item Let $\bm{W}^{[1, 1]}_{(1)} = \bm{C}\bm{U}\matnot{1, II}^T$ and $\bm{W}^{[1, 2]}_{(1)} = \bm{C}\bm{U}\matnot{1, I}^T$ be the parameters for $n=1$.
    \item Let $\bm{W}^{[2, 1]}_{(1)} = \bm{C}(\bm{U}\matnot{2, II} \odot \bm{U}\matnot{1, II})^T$ and  $\bm{W}^{[2, 3]}_{(1)} = \bm{C}(\bm{U}\matnot{2, I} \odot \bm{U}\matnot{1, I})^T$ capture the second order correlations of a single variable ($\sbinvar{II}$ and $\sbinvar{I}$ respectively).
    \item The cross-terms are expressed in $\bmcal{W}^{[2, 2]}\times_{2} \sbinvar{I}\times_{3} \sbinvar{II}$. The output of the $\tau$ element\footnote{An elementwise analysis (with a scalar output) is provided on the \supplementary{} (sec.~\ref{ssec:mvp_method_scalar_output_suppl}).} is $\sum_{\lambda, \mu=1}^d  {w_{\tau,\lambda,\mu}^{[2, 2]}}\invar_{\scalebox{.6}{I},\lambda}\invar_{\scalebox{.6}{II},\mu}$. The product $\bmcal{\hat{W}}^{[2, 2]}\times_{2} \sbinvar{II}\times_{3} \sbinvar{I}$ also results in the same elementwise expression. Hence, to allow for symmetric expression, we factorize the term $\bm{W}^{[2, 2]}_{(1)}$ as the sum of the two terms $\bm{C}(\bm{U}\matnot{2, II} \odot \bm{U}\matnot{1, I})^T$ and $\bm{C}(\bm{U}\matnot{2, I} \odot \bm{U}\matnot{1, II})^T$. For each of the two terms, we assume that the vector-valued inputs are accordingly multiplied. 
\end{itemize}

The parameters $\bm{C} \in \realnum^{o\times k}, \bm{U}\matnot{m, \phi} \in  \realnum^{d\times k}$ ($m=1,2$ and $\phi=\{I, II\}$) are learnable. The aforementioned factorization results in the following equation:

\begin{equation}
  \begin{split}
      \boutvar = \bm{C}\bm{U}\matnot{1, II}^T \sbinvar{II} + \bm{C}\bm{U}\matnot{1, I}^T \sbinvar{I} + \bm{C}\Big(\bm{U}\matnot{2, II} \odot \bm{U}\matnot{1, II}\Big)^T \Big(\sbinvar{II}\odot\sbinvar{II}\Big) + \bm{C}\Big(\bm{U}\matnot{2, I} \odot \bm{U}\matnot{1, I}\Big)^T \Big(\sbinvar{I}\odot\sbinvar{I}\Big) + \\
      \bm{C}\Big(\bm{U}\matnot{2, I} \odot \bm{U}\matnot{1, II}\Big)^T \Big(\sbinvar{I}\odot\sbinvar{II}\Big) + \bm{C}\Big(\bm{U}\matnot{2, II} \odot \bm{U}\matnot{1, I}\Big)^T \Big(\sbinvar{II}\odot\sbinvar{I}\Big) + \bm{\beta}
  \end{split}
\label{eq:mvp_second_order_suppl}
\end{equation}

This expansion captures the correlations (up to second order) of the two input variables $\sbinvar{I}, \sbinvar{II}$.

To make the proof more complete, we remind the reader that the recursive relationship (i.e., (\ref{eq:mvp_model1_rec}) in the main paper) is: 

\begin{equation}
    \boutvar_{n} = \boutvar_{n-1} + \Big(\bm{U}\matnot{n, I}^T \sbinvar{I} +  \bm{U}\matnot{n, II}^T \sbinvar{II}\Big) * \boutvar_{n-1} 
    \label{eq:mvp_model1_rec_suppl}
\end{equation}
for $n=2,\ldots,N$ with $\boutvar_{1} = \bm{U}\matnot{1, I}^T \sbinvar{I} +  \bm{U}\matnot{1, II}^T \sbinvar{II}$ and $\boutvar = \bm{C}\boutvar_{N} + \bm{\beta}$.

\begin{claim}
The equation (\ref{eq:mvp_second_order_suppl}) is a special format of a polynomial that is visualized as in Fig.~\ref{fig:mvp_nth_order_schematic_2var} of the main paper. Equivalently, prove that (\ref{eq:mvp_second_order_suppl}) follows the recursive relationship of (\ref{eq:mvp_model1_rec_suppl}).

\label{claim:mvp_second_order_model1}
\end{claim}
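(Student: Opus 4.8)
The plan is to start from the recursion \eqref{eq:mvp_model1_rec_suppl} specialized to $N=2$, expand it into an explicit sum of monomial terms, and then convert every Hadamard product of two linear embeddings into a single Khatri--Rao embedding using Lemma~\ref{lemma:polygan_lemma_hadamard_kr2}. The outcome will be term-by-term identical to \eqref{eq:mvp_second_order_suppl}, which simultaneously shows that \eqref{eq:mvp_second_order_suppl} is exactly what the network of Fig.~\ref{fig:mvp_nth_order_schematic_2var} computes.

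First I would substitute $\boutvar_{1} = \bm{U}\matnot{1, I}^T \sbinvar{I} + \bm{U}\matnot{1, II}^T \sbinvar{II}$ into $\boutvar_{2} = \boutvar_{1} + \big(\bm{U}\matnot{2, I}^T \sbinvar{I} + \bm{U}\matnot{2, II}^T \sbinvar{II}\big) * \boutvar_{1}$. Since $*$ is bilinear, the product term splits into the four summands $(\bm{U}\matnot{2, I}^T \sbinvar{I}) * (\bm{U}\matnot{1, I}^T \sbinvar{I})$, $(\bm{U}\matnot{2, I}^T \sbinvar{I}) * (\bm{U}\matnot{1, II}^T \sbinvar{II})$, $(\bm{U}\matnot{2, II}^T \sbinvar{II}) * (\bm{U}\matnot{1, I}^T \sbinvar{I})$, and $(\bm{U}\matnot{2, II}^T \sbinvar{II}) * (\bm{U}\matnot{1, II}^T \sbinvar{II})$. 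Applying Lemma~\ref{lemma:polygan_lemma_hadamard_kr2} with $N=2$ to each of these (e.g. $(\bm{U}\matnot{2, I}^T \sbinvar{I}) * (\bm{U}\matnot{1, II}^T \sbinvar{II}) = (\bm{U}\matnot{2, I} \odot \bm{U}\matnot{1, II})^T (\sbinvar{I} \odot \sbinvar{II})$) turns each Hadamard product of matrix--vector embeddings into a single Khatri--Rao embedding of the corresponding second-order monomial. Then applying the readout $\boutvar = \bm{C}\boutvar_{2} + \bm{\beta}$ and distributing $\bm{C}$ over the resulting six terms (two first-order, two auto-correlation, two cross-correlation) yields precisely \eqref{eq:mvp_second_order_suppl}; in particular the two cross terms $\bm{C}(\bm{U}\matnot{2, I} \odot \bm{U}\matnot{1, II})^T(\sbinvar{I}\odot\sbinvar{II})$ and $\bm{C}(\bm{U}\matnot{2, II} \odot \bm{U}\matnot{1, I})^T(\sbinvar{II}\odot\sbinvar{I})$ are exactly the symmetrized split of $\bm{W}^{[2,2]}_{(1)}$ used in the factorization. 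Finally, reading the computation graph of the recursion (embed each input, add, Hadamard-multiply with the running representation, affinely map out) off of these steps gives the abstract schematic of Fig.~\ref{fig:mvp_nth_order_schematic_2var}.

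I expect the only mildly delicate point to be the bookkeeping: matching the four monomial blocks obtained by expanding the Hadamard product with the six labelled tensor blocks $\bmcal{W}^{[n,\rho]}$ of \eqref{eq:mvp_second_order_suppl}, and checking that the (asymmetric) ordering of the mode-$m$ vector products in \eqref{eq:mvp_poly_general_eq_2var_suppl} is consistent with the symmetrized CP factorization assumed for the $\rho=2$ cross-term. There is no analytic obstacle — once Lemma~\ref{lemma:polygan_lemma_hadamard_kr2} is invoked the identity is purely algebraic.
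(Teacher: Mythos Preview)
Your proposal is correct and is essentially the same argument as the paper's, only run in the opposite direction: the paper starts from \eqref{eq:mvp_second_order_suppl}, applies Lemma~\ref{lemma:polygan_lemma_hadamard_kr2} to convert the Khatri--Rao terms into Hadamard products, and then groups them into $\bm{C}\boutvar_2+\bm{\beta}$, whereas you start from the recursion, expand the Hadamard product by bilinearity, and apply the lemma to recover the Khatri--Rao form. Both rely on the same key identity and the same four-term expansion, so there is no substantive difference.
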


\begin{proof}
We observe that the first two terms of \eqref{eq:mvp_second_order_suppl} are equal to $\bm{C}\boutvar_1$ (from \eqref{eq:mvp_model1_rec_suppl}). By applying Lemma~\ref{lemma:polygan_lemma_hadamard_kr2} in the terms that have Khatri-Rao product, we obtain: 
\begin{equation}
\begin{split}
    \boutvar = \bm{\beta} + \bm{C}\boutvar_1 + \bm{C}\bigg\{ \Big(\bm{U}\matnot{2, II}^T\sbinvar{II}\Big) * \Big(\bm{U}\matnot{1, II}^T\sbinvar{II}\Big) +  \Big(\bm{U}\matnot{2, I}^T\sbinvar{I}\Big) * \Big(\bm{U}\matnot{1, I}^T\sbinvar{I}\Big) + \\
        \Big(\bm{U}\matnot{2, I}^T\sbinvar{I}\Big) * \Big(\bm{U}\matnot{1, II}^T\sbinvar{II}\Big) + \Big(\bm{U}\matnot{2, II}^T\sbinvar{II}\Big) * \Big(\bm{U}\matnot{1, I}^T\sbinvar{I}\Big)\bigg\} = \\
	\bm{\beta} + \bm{C}\boutvar_1 + \bm{C}\bigg\{ \Big[\Big(\bm{U}\matnot{2, I}^T\sbinvar{I}\Big) + \Big(\bm{U}\matnot{2, II}^T\sbinvar{II}\Big)\Big] * \boutvar_1\bigg\} = \bm{C}\boutvar_{2} + \bm{\beta}
\end{split}
\end{equation}

The last equation is precisely the one that arises from the recursive relationship from \eqref{eq:mvp_model1_rec_suppl}.

\end{proof}

To prove the recursive formula for the $N^{th}$ order expansion, a similar pattern as in sec.C of PolyGAN~\citep{chrysos2019polygan} can be followed. Specifically, the difference here is that because of the two input variables, the auto- and cross-correlation variables should be included. Other than that, the same factor sharing is followed.

\subsection{\modeltwo{} model for two-variable input}
\label{ssec:mvp_method_second_order_proof_ncp}

The model proposed above (i.e., \eqref{eq:mvp_model1_rec_suppl}), relies on a single coupled CP decomposition, however a more flexible model can factorize each level with a CP decomposition. To effectively do that, we utilize learnable hyper-parameters $\bm{b}\matnot{n} \in \realnum^\omega$ for $n\in [1, N]$, which act as scaling factors for each parameter tensor. Then, a polynomial of expansion order $N \in \naturalnum$ with output $\boutvar \in \realnum^o$ has the form:

\begin{equation}
    \boutvar = G(\sbinvar{I}, \sbinvar{II}) = \sum_{n=1}^N \sum_{\rho=2}^{n+2} \bigg(\bmcal{W}^{[n, \rho-1]} \times_2 \bm{b}\matnot{N+1-n} \prod_{j=3}^{\rho} \times_{j} \sbinvar{I}  \prod_{\tau=\rho+1}^{n+2} \times_{\tau} \sbinvar{II} \bigg) + \bm{\beta}
    \label{eq:mvp_general_polynomial_with_b}
\end{equation}

To demonstrate the factorization without cluttering the notation, we assume a second order expansion in \eqref{eq:mvp_general_polynomial_with_b}. 

\textbf{Second order derivation}: The second order expansion, i.e., $N=2$, is derived below. We jointy factorize all parameters of \eqref{eq:mvp_general_polynomial_with_b} with a nested decomposition as follows:

\begin{itemize}
    \item First order parameters : $\bm{W}^{[1, 1]}_{(1)} = \bm{C} (\bm{A}\matnot{2, II} \odot \bm{B}\matnot{2})^T$ and $\bm{W}^{[1, 2]}_{(1)} = \bm{C} (\bm{A}\matnot{2, I} \odot \bm{B}\matnot{2})^T$.
    \item Let $\bm{W}^{[2, 1]}_{(1)} = \bm{C} \bigg\{\bm{A}\matnot{2, II} \odot \bigg[\Big(\bm{A}\matnot{1, II} \odot \bm{B}\matnot{1}\Big) \bm{V}\matnot{2}\bigg]\bigg\}^T$ and $\bm{W}^{[2, 3]}_{(1)} = \bm{C} \bigg\{\bm{A}\matnot{2, I} \odot \bigg[\Big(\bm{A}\matnot{1, I} \odot \bm{B}\matnot{1}\Big) \bm{V}\matnot{2}\bigg]\bigg\}^T$ capture the second order correlations of a single variable ($\sbinvar{II}$ and $\sbinvar{I}$ respectively). 
    \item The cross-terms are included in $\bmcal{W}^{[2, 2]}\times_{2} \bm{b}\matnot{1} \times_{3} \sbinvar{I}\times_{4} \sbinvar{II}$. The output of the $\tau$ element is expressed as $\sum_{\nu=1}^{\omega} \sum_{\lambda, \mu=1}^d  {w_{\tau,\nu, \lambda,\mu}^{[2, 2]}}b_{[1], \omega}\invar_{\scalebox{.6}{I},\lambda}\invar_{\scalebox{.6}{II},\mu}$. Similarly, the product $\bmcal{\hat{W}}^{[2, 2]}\times_{2} \bm{b}\matnot{1} \times_{3} \sbinvar{II}\times_{4} \sbinvar{I}$ has output $\sum_{\nu=1}^{\omega} \sum_{\lambda, \mu=1}^d  {w_{\tau,\nu,\mu, \lambda}^{[2, 2]}}b_{[1], \omega}\invar_{\scalebox{.6}{I},\lambda}\invar_{\scalebox{.6}{II},\mu}$ for the $\tau$ element. Notice that the only change in the two expressions is the permutation of the third and forth modes of the tensor; the rest of the expression remains the same. Therefore, to account for this symmetry we factorize the term $\bmcal{W}^{[2, 2]}$ as the sum of two terms and assume that each term is multiplied by the respective terms. Let $\bm{W}^{[2, 2]}_{(1)} = \bm{C} \bigg\{\bm{A}\matnot{2, I} \odot \bigg[\Big(\bm{A}\matnot{1, II} \odot \bm{B}\matnot{1}\Big) \bm{V}\matnot{2}\bigg] + \bm{A}\matnot{2, II} \odot \bigg[\Big(\bm{A}\matnot{1, I} \odot \bm{B}\matnot{1}\Big) \bm{V}\matnot{2}\bigg] \bigg\}^T$.  
\end{itemize}

The parameters $\bm{C} \in  \realnum^{o\times k}, \bm{A}\matnot{n, \phi} \in  \realnum^{d\times k}, \bm{V}\matnot{n} \in  \realnum^{k\times k}, \bm{B}\matnot{n} \in  \realnum^{\omega\times k}$ for $n=1, 2$ and $\phi=\{I, II\}$ are learnable.  
Collecting all the terms above and extracting $\bm{C}$ as a common factor (we ommit $\bm{C}$ below to avoid cluttering the notation):

\begin{equation}
\centering
\begin{split}
    (\bm{A}\matnot{2, II} \odot \bm{B}\matnot{2})^T (\sbinvar{II} \odot \bm{b}\matnot{2}) + (\bm{A}\matnot{2, I} \odot \bm{B}\matnot{2})^T (\sbinvar{I} \odot \bm{b}\matnot{2}) + \\
    \bigg\{\bm{A}\matnot{2, II} \odot \bigg[\Big(\bm{A}\matnot{1, II} \odot \bm{B}\matnot{1}\Big) \bm{V}\matnot{2}\bigg]\bigg\}^T (\sbinvar{II} \odot \sbinvar{II} \odot \bm{b}\matnot{1}) + \\
    \bigg\{\bm{A}\matnot{2, I} \odot \bigg[\Big(\bm{A}\matnot{1, I} \odot \bm{B}\matnot{1}\Big) \bm{V}\matnot{2}\bigg]\bigg\}^T (\sbinvar{I} \odot \sbinvar{I} \odot \bm{b}\matnot{1}) + \\
    \bigg\{\bm{A}\matnot{2, I} \odot \bigg[\Big(\bm{A}\matnot{1, II} \odot \bm{B}\matnot{1}\Big) \bm{V}\matnot{2}\bigg]\bigg\}^T (\sbinvar{I} \odot \sbinvar{II} \odot \bm{b}\matnot{1}) + \\
    \bigg\{\bm{A}\matnot{2, II} \odot \bigg[\Big(\bm{A}\matnot{1, I} \odot \bm{B}\matnot{1}\Big) \bm{V}\matnot{2}\bigg] \bigg\}^T (\sbinvar{II} \odot \sbinvar{I} \odot \bm{b}\matnot{1}) = \\
    \Big(\bm{A}\matnot{2, II}^T \sbinvar{II} + \bm{A}\matnot{2, I}^T \sbinvar{I}\Big) * \Big(\bm{B}\matnot{2}^T \bm{b}\matnot{2}\Big) +  \\
    \Big(\bm{A}\matnot{2, II}^T\sbinvar{II} + \bm{A}\matnot{2, I}^T\sbinvar{I}\Big) * \bigg\{ \bm{V}\matnot{2}^T \bigg[ \Big(\bm{A}\matnot{1, II}^T \sbinvar{II} + \bm{A}\matnot{1, I}^T \sbinvar{I}\Big) * \Big(\bm{B}\matnot{1}^T \bm{b}\matnot{1}\Big) \bigg]  \bigg\}\\
\end{split}
\end{equation}

The last equation is precisely a recursive equation that can be expressed with the Fig.~\ref{fig:mvp_nth_order_schematic_2var_ncp} or equivalently the generalized recursive relationship below.

\begin{figure}[!h]
    \centering
    \includegraphics[width=0.99\linewidth]{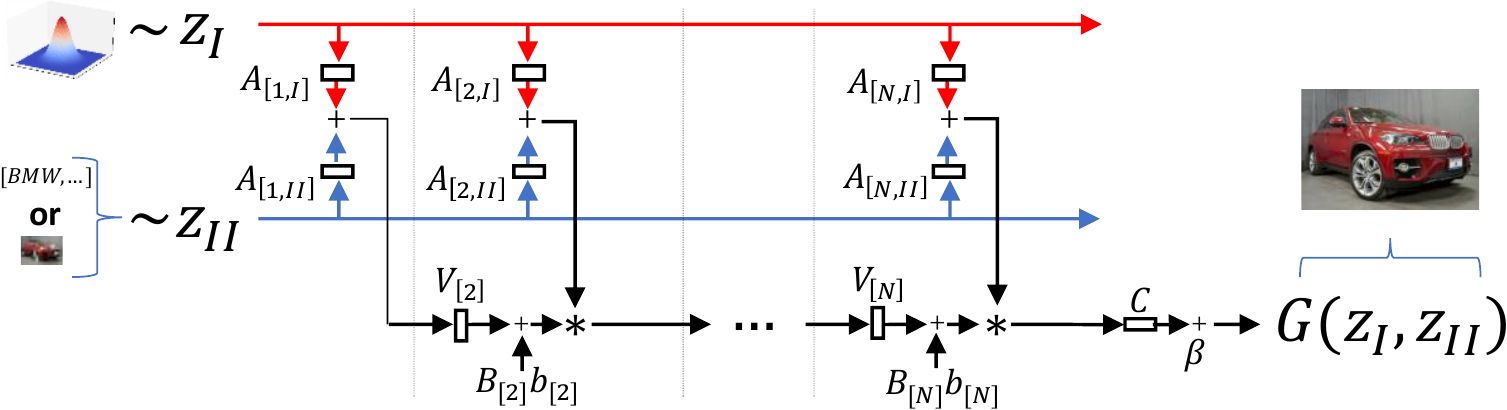}\vspace{2mm} 
\caption{Abstract schematic for $N^{th}$ order approximation of $\boutvar = G(\binvar_{I}, \binvar_{II})$ with \modeltwo{} model. The inputs $\binvar_{I}, \binvar_{II}$ are symmetric in our formulation. We denote with $\binvar_{I}$ a sample from the noise distribution (e.g., Gaussian), while $\binvar_{II}$ symbolizes a sample from a conditional input (e.g., a class label or a low-resolution image). }
\label{fig:mvp_nth_order_schematic_2var_ncp}
\end{figure}

\textbf{Recursive relationship}: The recursive formula for the \modeltwo{} model with arbitrary expansion order $N \in \naturalnum$ is the following:

\begin{equation}
    \boutvar_{n} = \Big(\bm{A}\matnot{n, I}^T \sbinvar{I} +  \bm{A}\matnot{n, II}^T \sbinvar{II}\Big) * \Big(\bm{V}\matnot{n}^T \boutvar_{n-1} + \bm{B}\matnot{n}^T\bm{b}\matnot{n}\Big)
    \label{eq:mvp_model2_rec}
\end{equation}

where $n \in [2, N]$ and $\boutvar_{1} = \Big(\bm{A}\matnot{1, I}^T \sbinvar{I} +  \bm{A}\matnot{1, II}^T \sbinvar{II}\Big) * \Big(\bm{B}\matnot{1}^T\bm{b}\matnot{1}\Big)$. The parameters $\bm{C} \in  \realnum^{o\times k}, \bm{A}\matnot{n, \phi} \in  \realnum^{d\times k}, \bm{V}\matnot{n} \in  \realnum^{k\times k}, \bm{B}\matnot{n} \in  \realnum^{\omega\times k}$ for $\phi=\{I, II\}$ are learnable. Then, the output $\boutvar = \bm{C}\boutvar_{N} + \bm{\beta}$.

The \modeltwo{} model manifests an alternative network that relies on slightly modified assumptions on the decomposition. Thus, changing the underlying assumptions of the decomposition can modify the resulting network. This can be an important tool for domain-specific applications, e.g., when the domain-knowledge should be inserted in the last layers.

\section{Beyond two variables} 
\label{sec:mvp_method_many_inputs_suppl}
Frequently, more than one conditional inputs are required~\citep{yu2018super, xu2017learning, maximov2020ciagan}. In such tasks, the aforementioned framework can be generalized to more than two input variables. We demonstrate how this is possible with three variables; then it can trivially extended to an arbitrary number of input variables.  

Let $\sbinvar{I}, \sbinvar{II}, \sbinvar{III} \in \knum^d$ denote the three input variables. We aim to learn a function that captures the higher-order interactions of the input variables. The polynomial of expansion order $N \in \naturalnum$ with output $\boutvar \in \realnum^o$ has the form: 

\begin{equation}
    \boutvar = G(\sbinvar{I}, \sbinvar{II}, \sbinvar{III}) = \sum_{n=1}^N \sum_{\rho=1}^{n+1} \sum_{\delta=\rho}^{n+1} \bigg(\bmcal{W}^{[n, \rho, \delta]} \prod_{j=2}^{\rho} \times_{j} \sbinvar{I}  \prod_{\tau=\rho+1}^{\delta} \times_{\tau} \sbinvar{II} \prod_{\zeta=\delta+1}^{n+1} \times_{\zeta} \sbinvar{III} \bigg) + \bm{\beta}
    \label{eq:mvp_poly_general_eq_3var}
\end{equation}

where $\bm{\beta} \in \realnum^o$ and $\bmcal{W}^{[n, \rho, \delta]} \in  \realnum^{o\times \prod_{m=1}^{n}\times_m d}$ (for $n\in [1, N]$ and $\rho, \delta\in [1, n+1]$) are the learnable parameters. 
As in the two-variable input, the unknown parameters increase exponentially. To that end, we utilize a joint factorization with factor sharing. The recursive relationship of such a factorization is:

\begin{equation}
    \boutvar_{n} = \boutvar_{n-1} + \Big(\bm{U}\matnot{n, I}^T \sbinvar{I} +  \bm{U}\matnot{n, II}^T \sbinvar{II} + \bm{U}\matnot{n, III}^T \sbinvar{III}\Big) * \boutvar_{n-1} 
    \label{eq:mvp_model1_rec_3var}
\end{equation}
for $n=2,\ldots,N$ with $\boutvar_{1} = \bm{U}\matnot{1, I}^T \sbinvar{I} +  \bm{U}\matnot{1, II}^T \sbinvar{II} + \bm{U}\matnot{1, III}^T \sbinvar{III}$ and $\boutvar = \bm{C}\boutvar_{N} + \bm{\beta}$.

Notice that the pattern (for each order) is similar to the two-variable input: a) a different embedding is found for each input variable, b) the embeddings are added together, c) the result is multiplied elementwise with the representation of the previous order.

\section{Concatenation of inputs}
\label{sec:mvp_concatenation_suppl}

A popular method used for conditional generation is to concatenate the conditional input with the noise labels. However, as we showcase below, concatenation has two significant drawbacks when compared to our framework. To explain those, we will define a concatenation model.

Let $\sbinvar{I} \in \knum_1^{d_1}, \sbinvar{II} \in \knum_2^{d_2}$ where $\knum_1, \knum_2$ can be a subset of real or natural numbers. The output of a concatenation layer is $\outvar = \bm{P}^T \Big[ \sbinvar{I}; \sbinvar{II}\Big]^T$ where the symbol `$;$' denotes the concatenation and $\bm{P} \in \realnum^{(d_1 + d_2)\times o}$ is an affine transformation on the concatenated vector. The $j^{th}$ output is $\outvar_j = \sum_{\tau=1}^{d_1} p_{\tau, j} \invar_{\scalebox{.6}{I},\tau} + \sum_{\tau=1}^{d_2} p_{\tau + d_1, j} \invar_{\scalebox{.6}{II},\tau}$.

Therefore, the two differences from the concatenation case are:
\begin{itemize}
    \item If the input variables are concatenated together we obtain an additive format, not a multiplicative that can capture cross-term correlations. That is, the multiplicative format does allow achieving higher-order auto- and cross- term correlations.
    \item The concatenation changes the dimensionality of the embedding space. Specifically, the input space has dimensionality $d_1 + d_2$. That has a significant toll on the size of the filters (i.e., it increases the learnable parameters), while still having an additive impact. On the contrary, our framework does not change the dimensionality of the embedding spaces. 
\end{itemize}

\section{In-depth differences from \modelnamePI}
\label{sec:mvp_difference_from_pinet_suppl}

In the next few paragraphs, we conduct an in-depth analysis of the differences between \modelnamePI{} and \modelnameMVP. The analysis assumes knowledge of the proposed model, i.e., (\ref{eq:mvp_model1_rec}). 

\citet{chrysos2020poly} introduce \modelnamePI{} as a polynomial expansion of a single input variable. Their goal is to model functions $\boutvar = G(\binvar)$ as high-order polynomial expansions of $\binvar$. Their focus is towards using a single-input variable $\binvar$, which can be noise in case of image generation or an image in discriminative experiments. The authors express the StyleGAN architecture~\citep{karras2018style} as a polynomial expansion, while they advocate that the impressive results can be attributed to the polynomial expansion. 

To facilitate the in-depth analysis, the recursive relationship that corresponds to (\ref{eq:mvp_model1_rec}) is provided below. 
An $N^{th}$ order expansion in \modelnamePI{} is expressed as:
\begin{equation}
    \boutvar_{n} = \Big(\bm{\Lambda}\matnot{n}^T \binvar \Big)* \boutvar_{n-1} + \boutvar_{n-1}
    \label{eq:prodpoly_model1}
\end{equation}

for $n=2,\ldots,N$  with $\boutvar_{1} = \bm{\Lambda}\matnot{1}^T \binvar$ and $\boutvar = \bm{\Gamma}\boutvar_{N} + \bm{\beta}$. The parameters $\bm{\Lambda}, \bm{\Gamma}$ are learnable.

In this work, we focus on conditional data generation, i.e., there are multiple input variables available as auxiliary information. The trivial application of \modelnamePI{} would be to concatenate all the $M$ input variables $\sbinvar{I}, \sbinvar{II}, \sbinvar{III}, \ldots$. The input variable $\binvar$ becomes $\binvar = \Big[\sbinvar{I}; \sbinvar{II}; \sbinvar{III}; \ldots\Big]$, where the symbol `$;$' denotes the concatenation. Then, the polynomial expansion of \modelnamePI{} can be learned on the concatenated $\binvar$. However, there are four significant reasons that we believe that this is not as flexible as the proposed \modelnameMVP.

When we refer to \modelnamePI{} below, we refer to the model with concatenated input. 
In addition, let $\sbinvar{I} \in \knum_1^{d_1}, \sbinvar{II} \in \knum_2^{d_2}$ denote the input variables where $\knum_1, \knum_2$ can be a subset of real or natural numbers. 

\paragraph{Parameter sharing:} \modelnameMVP{} allows additional flexibility in the structure of the architecture, since \modelnameMVP{} utilizes a different projection layer for each input variable. We utilize this flexibility to share the parameters of the conditional input variable; as we detail in (\ref{eq:mvp_sharing_embedding_conditional_variable}), we set $\bm{U}\matnot{n, II} = \bm{U}\matnot{1, II}$ on (\ref{eq:mvp_model1_rec}). 
If we want to perform a similar sharing in \modelnamePI{}, the formulation equivalent to (\ref{eq:prodpoly_model1}) would be $(\lambda\matnot{n})_i = (\lambda\matnot{1})_i$ for $i=d_1,\ldots, d_1 + d_2$. However, sharing only part of the matrix might be challenging. Additionally,  when $\bm{\Lambda}$ is a convolution, the sharing pattern is not straightforward to be computed. 
Therefore, \modelnameMVP{} enables additional flexibility to the model, which is hard to be included in \modelnamePI{}.

\paragraph{Inductive bias:} The inductive bias is crucial in machine learning~\citep{zhao2018bias}, however concatenating the variables restricts the flexibility of the model (i.e. \modelnamePI). 
To illustrate that, let us use the super-resolution experiments as an example.  The input variable $\sbinvar{I}$ is the noise vector and $\sbinvar{II}$ is the (vectorized) low-resolution image. If we concatenate the two variables, then we should use a fully-connected (dense) layer, which does not model well the spatial correlations. Instead, with \modelnameMVP{}, we use a fully-connected layer for the noise vector and a convolution for $\sbinvar{II}$ (low-resolution image). The convolution reduces the number of parameters and captures the spatial correlations in the image. Thus, by concatenating the variables, we reduce the flexibility of the model. 

\paragraph{Dimensionality of the inputs:} The dimensionality of the inputs might vary orders of magnitude, which might create an imbalance during learning. 
For instance, in class-conditional generation concatenating the one-hot labels in the input does not scale well when there are hundreds of classes~\citep{odena2017conditional}. We observe a similar phenomenon in class-conditional generation: in Cars196 (with $196$ classes) the performance of \modelnamePI{} deteriorates considerably when compared to its (relative) performance in CIFAR10 (with $10$ classes). On the contrary, \modelnameMVP{} does not fuse the elements of the input variables directly, but it projects them into a subspace appropriate for adding them. 

\paragraph{Order of expansion with respect to each variable:} Frequently, the two inputs do not require the same order of expansion. Without loss of generality, assume that we need correlations up to $N_{I}$ and $N_{II}$ order (with $N_{I} < N_{II}$) from $\sbinvar{I}$ and $\sbinvar{II}$ respectively. 
\modelnameMVP{} includes a different transformation for each variable, i.e., $\bm{U}\matnot{n, I}$ for $\sbinvar{I}$ and $\bm{U}\matnot{n, II}$ for $\sbinvar{II}$. Then, we can set $\bm{U}\matnot{n, I} = 0$ for $n > N_{I}$. On the contrary, the concatenation of inputs (in \modelnamePI) constrains the expansion to have the same order with respect to each variable.

All in all, we can use concatenation to fuse variables and use \modelnamePI, however an inherently multivariate model is more flexible and can better encode the types of inductive bias required for conditional data generation.

\section{Differences from other networks cast as polynomial neural networks}
\label{sec:mvp_difference_from_other_polynomial_networks_suppl}

A number of networks with impressive results have emerged in (conditional) data generation the last few years. Three such networks that are particularly interesting in our context are \citet{karras2018style, park2019semantic, chen2018self}. We analyze below each method and how it relates to polynomial expansions:

\begin{itemize}
    \item \citet{karras2018style} propose an Adaptive instance normalization (AdaIN) method for unsupervised image generation. An AdaIN layer expresses a second-order interaction\footnote{The formulation is derived from the public implementation of the authors.}: $\bm{h} = (\bm{\Lambda}^T\bm{w}) * n(c(\bm{h}_{in}))$, where $n$ is a normalization, $c$ the convolution operator and $\bm{w}$ is the transformed noise $\bm{w} = MLP(\sbinvar{I})$ (mapping network). The parameters $\bm{\Lambda}$ are learnable, while $\bm{h}_{in}$ is the input to the AdaIN. Stacking AdaIN layers results in a polynomial expansion with a single variable.  
    
    \item \citet{chen2018self} propose a normalization method, called sBN, to stabilize the GAN training. The method performs a `self-modulation’ with respect to the noise variable and optionally the conditional variable in the class-conditional generation setting. Henceforth, we focus on the class-conditional setting that is closer to our work. sBN injects the network layers with a multiplicative interaction of the input variables. Specifically, sBN projects the conditional variable into the space of the variable $\sbinvar{I}$ through an embedding function. Then, the interaction of the two vector-like variables is passed through a fully-connected layer (and a ReLU activation function); the result is injected into the network through the batch normalization parameters.  
    If we cast sBN as a polynomial expansion, it expresses a single polynomial expansion with respect to the input noise and the input conditional variable\footnote{In \modelnameMVP{}, we do not learn a single embedding function for the conditional variable. In addition, we do not project the (transformed) conditional variable to the space of the noise-variable. Both of these can be achieved by making simplifying assumptions on the factor matrices of \modelnameMVP.}. 
    
    \item \citet{park2019semantic} introduce a spatially-adaptive normalization, i.e., \modelnamespade, to improve semantic image synthesis. Their model, referred to as \modelnamespade{} in the remainder of this work, assumes a semantic layout as a conditional input that facilitates the image generation. We analyze in sec.~\ref{sec:mvp_difference_from_spade_suppl} how to obtain the formulation of their spatially-adaptive normalization. If we cast \modelnamespade{} as a polynomial expansion, it expresses a polynomial expansion with respect to the conditional variable. 
\end{itemize}

The aforementioned works propose or modify the batch normalization layer to improve the performance or stabilize the training, while in our work we propose the multivariate polynomial as a general function approximation technique for conditional data generation.  
Nevertheless, given the interpretation of the previous works in the perspective of polynomials, we still can express them as special cases of MVP. Methodologically, there are \textbf{two significant limitations} that none of the aforementioned works tackle:
\begin{itemize}
    \item The aforementioned architectures focus on no or one conditional variable. On the contrary, \modelnameMVP{} naturally extends to \textbf{arbitrarily many conditional variables}.
    
    \item Even though the aforementioned three architectures use (implicitly) a polynomial expansion, a significant factor is the order of the expansion. In our work, the \textbf{product of polynomials} enables capturing higher-order correlations without increasing the amount of layers substantially (sec.~\ref{ssec:mvp_method_extensions}). 
\end{itemize}

In addition to the aforementioned methodological differences, \emph{our work is the only polynomial expansion that conducts experiments on a variety of conditional data generation tasks}. Thus, we both demonstrate methodologically and verify experimentally that \modelnameMVP{} can be used for a wide range of conditional data generation tasks.

\subsection{In-depth differences from \modelnamespade}
\label{sec:mvp_difference_from_spade_suppl}

In the next few paragraphs, we conduct an in-depth analysis of the differences between \modelnamespade{} and \modelnameMVP. 

\citet{park2019semantic} introduce a spatially-adaptive normalization, i.e., \modelnamespade, to improve semantic image synthesis. Their model, referred to as \modelnamespade{} in the remainder of this work, assumes a semantic layout as a conditional input that facilitates the image generation. 

The $n^{th}$ model block applies a normalization on the representation $\boutvar_{n-1}$ of the previous layer and then it performs an elementwise multiplication with a transformed semantic layout. The transformed semantic layout can be denoted as $\bm{A}\matnot{n, II}^T \sbinvar{II}$ where $\sbinvar{II}$ denotes the conditional input to the generator. The output of this elementwise multiplication is then propagated to the next model block that performs the same operations. Stacking $N$ such blocks results in an $N^{th}$ order polynomial expansion which is expressed as:

\begin{equation}
    \boutvar_{n} = \Big(\bm{A}\matnot{n, II}^T \sbinvar{II}\Big) * \Big(\bm{V}\matnot{n}^T \boutvar_{n-1} + \bm{B}\matnot{n}^T\bm{b}\matnot{n}\Big)
    \label{eq:mvp_model2_rec_spade}
\end{equation}

where $n \in [2, N]$ and $\boutvar_{1} = \bm{A}\matnot{1, I}^T \sbinvar{I}$. The parameters $\bm{C} \in  \realnum^{o\times k}, \bm{A}\matnot{n, \phi} \in  \realnum^{d\times k}, \bm{V}\matnot{n} \in  \realnum^{k\times k}, \bm{B}\matnot{n} \in  \realnum^{\omega\times k}$ for $\phi=\{I, II\}$ are learnable. Then, the output $\boutvar = \bm{C}\boutvar_{N} + \bm{\beta}$.

\modelnamespade{} as expressed in (\ref{eq:mvp_model2_rec_spade}) resembles one of the proposed models of \modelnameMVP{} (specifically (\ref{eq:mvp_model2_rec})). In particular, it expresses a polynomial with respect to the conditional variable. The parameters $\bm{A}\matnot{n, I}$ are set as zero, which means that there are no higher-order correlations with respect to the input variable $\sbinvar{I}$. Therefore, our work bears the following differences from \citet{park2019semantic}:
\begin{itemize}
    \item \modelnamespade{} proposes a normalization scheme that is only applied to semantic image generation. On the contrary, our proposed \modelnameMVP{} can be applied to any conditional data generation task, e.g., class-conditional generation or image-to-image translation. 
    
    \item \textbf{\modelnamespade{} is a special case of \modelnameMVP}. In particular, by setting i) $\bm{A}\matnot{1, II}$ equal to zero, ii)  $\bm{A}\matnot{n, I}$ in (\ref{eq:mvp_model2_rec}) equal to zero, we obtain \modelnamespade. In addition, \modelnameMVP{} allows different assumptions on the decompositions which lead to an alternative structure, such as (\ref{eq:mvp_model1_rec}).
    
    \item \modelnamespade{} proposes a polynomial expansion with respect to a single variable. On the other hand, our model can extend to an arbitrary number of input variables to account for auxiliary labels, e.g., (\ref{eq:mvp_model1_rec_3var}). 
    
    \item Even though \modelnamespade{} models higher-order correlations of the conditional variable, it still does not leverage the higher-order correlations of the representations (e.g., as in the product of polynomials) and hence without activation functions it might not work as well as the two-variable expansion. 
\end{itemize}

\citet{park2019semantic} exhibit impressive generation results with large-scale computing (i.e., they report results using NVIDIA DGX with 8 V100 GPUs). Our goal is not to compete in computationally heavy, large-scale experiments, but rather to illustrate the benefits of the generic formulation of \modelnameMVP. 

\modelnamespade{} is an important baseline for our work. In particular, we augment \modelnamespade{} in wo ways: a) by extending it to accept both continuous and discrete variables in $\sbinvar{II}$ and b) by adding polynomial terms with respect to the input variable $\sbinvar{I}$. The latter model is referred to as \modelnamespade-\modelnameMVP{} (details on the next section).

 \section{Experimental details}
\label{sec:mvp_experimental_details_suppl}

\paragraph{Metrics:} The two most popular metrics~\citep{lucic2018gans, creswell2018generative} for evaluation of the synthesized images are the Inception Score (IS)~\citep{salimans2016improved} and the Frechet Inception Distance (FID)~\citep{heusel2017gans}. The metrics utilize the pretrained Inception network~\citep{szegedy2015going} to extract representations of the synthesized images. FID assumes that the representations extracted follow a Gaussian distribution and matches the statistics (i.e., mean and variance) of the representations between real and synthesized samples. Alternative evaluation metrics have been reported as inaccurate, e.g., in \citet{theis2015note}, thus we use the IS and FID. Following the standard practice of the literature, the IS is computed by synthesizing $5,000$ samples, while the FID is computed using $10,000$ samples. 

The IS is used exclusively for images of natural scenes as a metric. The reasoning behind that is that the Inception network has been trained on images of natural scenes. On the contrary, the FID metric relies on the first and second-order moments of the representations, which are considered more robust to different types of images. Hence, we only report IS for the CIFAR10 related experiments, while for the rest the FID is reported.

\paragraph{Dataset details:} There are eight datasets used in this work:
\begin{itemize}
    \item Large-scale CelebFaces Attributes (or \emph{CelebA} for short)~\citep{liu2015deep} is a large-scale face attributes dataset with $202,000$ celebrity images. We use $160,000$ images for training our method. 
    \item \emph{Cars196}~\citep{krause20133d} is a dataset that includes different models of cars in different positions and backgrounds. Cars196 has $16,000$ images, while the images have substantially more variation than CelebA faces. 
    \item \emph{CIFAR10}~\citep{krizhevsky2014cifar} contains $60,000$ images of natural scenes. Each image is of resolution $32\times32\times3$ and is classified in one of the $10$ classes. CIFAR10 is frequently used as a benchmark for image generation.  
    \item The Street View House Numbers dataset (or \emph{SVHN} for short)~\citep{netzer2011reading} has $100,000$ images of digits ($73,257$ of which for training). SVHN includes color house-number images which are classified in $10$ classes; each class corresponds to a digit $0$ to $9$. SVHN images are diverse (e.g., with respect to background, scale).
    \item \emph{MNIST}~\citep{lecun1998gradient} consists of images with handwritten digits. Each images depicts a single digit (annotated from $0$ to $9$) in a $28\times 28$ resolution. The dataset includes $60,000$ images for training.
    \item \emph{Shoes}~\citep{fine-grained, xie15hed} consists of $50,000$ images of shoes, where the edges of each shoe are extracted~\citep{isola2016image}.
    \item \emph{Handbags}~\citep{zhu2016generative, xie15hed} consists of more than $130,000$ images of handbag items. The edges have been computed for each image and used as conditional input to the generator~\citep{isola2016image}.
    \item \emph{Anime characters} dataset~\citep{jin2017towards} consists of anime characters that are generated based on specific attributes, e.g., hair color. The public version used\footnote{The version is downloaded following the instructions of \url{https://github.com/bchao1/Anime-Generation}.} contains annotations on the hair color and the eye color. We consider $7$ classes on the hair color and $6$ classes on the eye color, with a total of $14,000$ training images. 
\end{itemize}

All the images of CelebA, Cars196, Shoes and Handbags are resized to $64\times64$ resolution.

\paragraph{Architectures:} The discriminator structure is left the same for each experiment, we focus only on the generator architecture. All the architectures are based on two different generator schemes, i.e., the SNGAN~\citep{miyato2018cgans} and the polynomial expansion of \citet{chrysos2020poly} that does not include activation functions in the generator. 

The variants of the generator of SNGAN are described below:
\begin{itemize}
    \item \textbf{SNGAN}~\citep{miyato2018cgans}: The generator consists of a convolution, followed by three residual blocks. The discriminator is also based on successive residual blocks. The public implementation of SNGAN with conditional batch normalization (CBN) is used as the baseline.
    
    \item \textbf{\modelnameMVPsngan} [proposed]: We convert the resnet-based generator of SNGAN to an \modelnameMVP{} model. To obtain \modelnameMVP, the SNGAN is modified in two ways: a) the Conditional Batch Normalization (CBN) is converted into batch normalization~\citep{ioffe2015batch}, b) the injections of the two embeddings (from the inputs) are added after each residual block, i.e. the formula of (\ref{eq:mvp_model1_rec}). In other words, the generator is converted to a product of two-variable polynomials. 
    
    \item \textbf{SNGAN-CONC}: Based on \modelnameMVPsngan, we replace each Hadamard product with a concatenation. This implements the variant mentioned in sec.~\ref{sec:mvp_concatenation_suppl}.
    
    \item \textbf{SNGAN-\modelnamespade}~\citep{park2019semantic}: As described in sec.~\ref{sec:mvp_difference_from_spade_suppl}, \modelnamespade{} is a polynomial with respect to the conditional variable $\sbinvar{II}$. The generator of \modelnameMVPsngan{} is modified to perform the Hadamard product with respect to the conditional variable every time.
    
\end{itemize}

The variants of the generator of \modelnamePI{} are described below:
\begin{itemize}
    \item \textbf{\modelnamePI}~\citep{chrysos2020poly}: The generator is based on a product of polynomials. The first polynomials use fully-connected connections, while the next few polynomials use cross-correlations. The discriminator is based on the residual blocks of SNGAN. We stress out that the generator does not include any activation functions apart from a hyperbolic tangent in the output space for normalization. The authors advocate that this exhibits the expressivity of the designed model. 

    \item \textbf{\modelnamePI-SICONC}: The generator structure is based on \modelnamePI{} with two modifications: a) the Conditional Batch Normalization is converted into batch normalization~\citep{ioffe2015batch}, b) the second-input is concatenated with the first (i.e., the noise) in the input of the generator. Thus, this is a single variable polynomial, i.e., a \modelnamePI{}, where the second-input is vectorized and concatenated with the first. This baseline implements the \modelnamePI{} described in sec.~\ref{sec:mvp_difference_from_pinet_suppl}. 
    
    \item \textbf{\modelnameMVP} [proposed]: The generator of \modelnamePI{} is converted to an \modelnameMVP{} model with two modifications: a) the Conditional Batch Normalization is converted into batch normalization~\citep{ioffe2015batch}, b) instead of having a Hadamard product with a single variable as in \modelnamePI{}, the formula with the two-variable input (e.g., (\ref{eq:mvp_model1_rec})) is followed.
    
    \item \textbf{GAN-CONC}: Based on \modelnameMVP, each Hadamard product is replaced by a concatenation. This implements the variant mentioned in sec.~\ref{sec:mvp_concatenation_suppl}. 
    
    \item \textbf{GAN-ADD}: Based on \modelnameMVP, each Hadamard product is replaced by an addition. This modifies (\ref{eq:mvp_model2_rec}) to $\boutvar_{n} = \Big(\bm{A}\matnot{n, I}^T \sbinvar{I} +  \bm{A}\matnot{n, II}^T \sbinvar{II}\Big) + \Big(\bm{V}\matnot{n}^T \boutvar_{n-1} + \bm{B}\matnot{n}^T\bm{b}\matnot{n}\Big)$. 
    
    \item \textbf{\modelnamespade}~\citep{park2019semantic}: As described in sec.~\ref{sec:mvp_difference_from_spade_suppl}, \modelnamespade{} defines a polynomial with respect to the conditional variable $\sbinvar{II}$. The generator of \modelnamePI{} is modified to perform the Hadamard product with respect to the conditional variable every time. 
    
    \item \textbf{\modelnamespade-\modelnameMVP} [proposed]: This is a variant we develop to bridge the gap between \modelnamespade{} and the proposed \modelnameMVP. Specifically, we augment the aforementioned \modelnamespade{} twofold: a) the dense layers in the input space are converted into a polynomial with respect to the variable $\sbinvar{I}$ and b) we also convert the polynomial in the output (i.e., the rightmost polynomial in the Fig.~\ref{fig:mvp_schematics_product_polynomials} schematics) to a polynomial with respect to the variable $\sbinvar{I}$. This model captures higher-order correlations of the variable  $\sbinvar{I}$ that \modelnamespade{} did not not originally include. This model still includes single variable polynomials, however the input in each polynomial varies and is not only the conditional variable.
    
\end{itemize}

We should note that StyleGAN is a special case of \modelnamePI{} as demonstrated by \citet{chrysos2020poly}, and converting it into a products of polynomials formulation improves its performance. Hence, we use directly the \modelnamePI. 
The two baselines GAN-CONC and GAN-ADD capture only additive correlations, hence they cannot effectively model complex distributions without activation functions. Nevertheless, they are added as a reference point to emphasize the benefits of higher-order polynomial expansions. 

An abstract schematic of the generators that are in the form of products of polynomials is depicted in Fig.~\ref{fig:mvp_schematics_product_polynomials}. Notice that the compared methods from the literature use polynomials of a single variable, while we propose a polynomial with an arbitrary number of inputs (e.g., two-input shown in the schematic).

\begin{figure}[!h]
\centering
    \subfloat[\modelnamePI~\citep{chrysos2020poly}]{\includegraphics[width=0.99\linewidth]{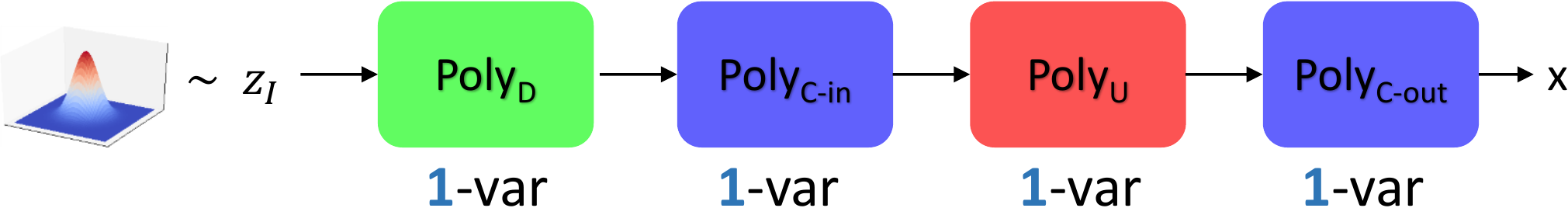}}\\
    \subfloat[\modelnamePI-SICONC (baseline)]{\includegraphics[width=0.99\linewidth]{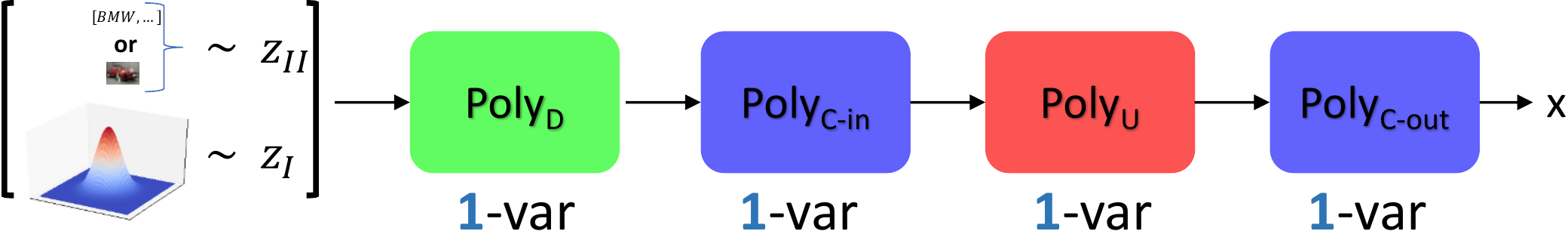}}\\
    \subfloat[\modelnamespade~\citep{park2019semantic}]{\includegraphics[width=0.99\linewidth]{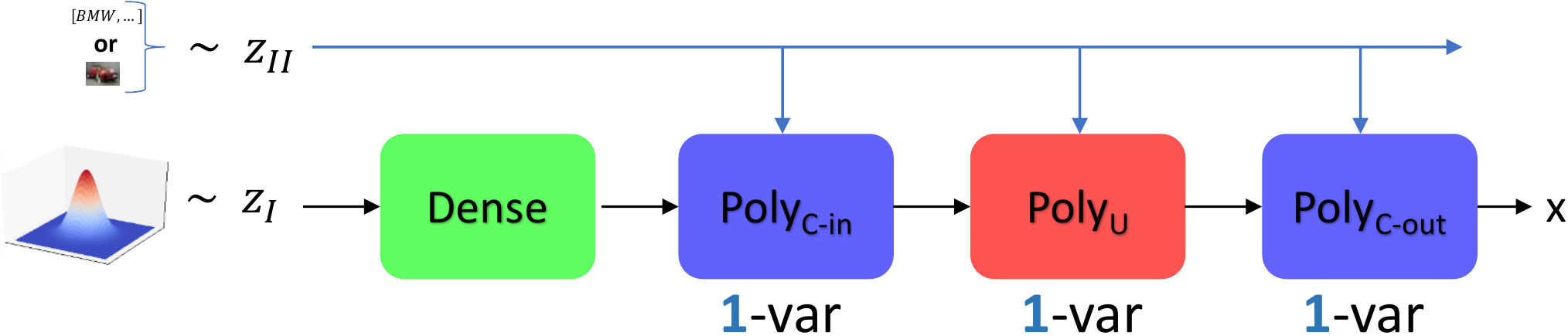}}\\
    \subfloat[\modelnameMVP{} (proposed)]{\includegraphics[width=0.99\linewidth]{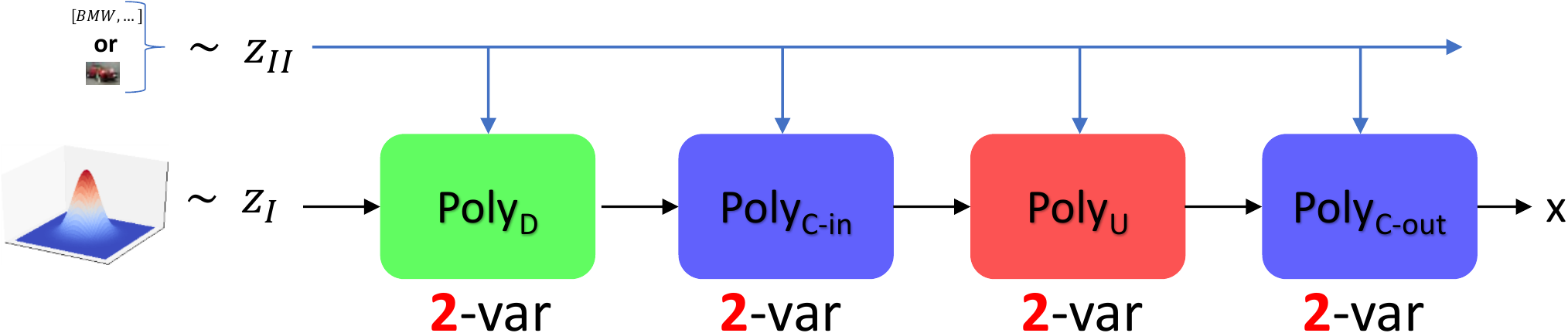}}
\caption{Abstract schematic of the different compared generators. All the generators are products of polynomials. Each colored box represents a different type of polynomial, i.e., the \textcolor{green}{green box} symbolizes polynomial(s) with dense layers, the \textcolor{blue}{blue box} denotes convolutional or cross-correlation layers. The \textcolor{red}{red box} includes the up-sampling layers. (a) \modelnamePI{} implements a single-variable polynomial for modeling functions $\boutvar = G(\binvar)$. \modelnamePI{} enables class-conditional generation by using conditional batch normalization (CBN). (b) An alternative to CBN is to concatenate the conditional variable in the input, as in \modelnamePI-SICONC. This also enables the non-discrete conditional variables (e.g., low-resolution images) to be concatenated. (c) \modelnamespade{} implements a single-variable polynomial expansion with respect to the conditional variable $\binvar_{II}$. This is substantially different from the polynomial with multiple-input variables, i.e., \modelnameMVP{}. Two additional differences are that (i) \modelnamespade{} is motivated as a spatially-adaptive method (i.e., for continuous conditional variables), while \modelnameMVP{} can be used for diverse types of conditional variables, (ii) there is no polynomial in the dense layers in the \modelnamespade. However, as illustrated in \modelnamePI{} converting the dense layers into a higher-order polynomial can further boost the performance. (d) The proposed generator structure.}
\label{fig:mvp_schematics_product_polynomials}
\end{figure} 
\paragraph{Implementation details of \modelnameMVP:} Throughout this work, we reserve the symbol $\sbinvar{II}$ for the conditional input (e.g., a class label). In each polynomial, we reduce further the parameters by using the same embedding for the conditional variables. That is expressed as:
\begin{equation}
    \bm{U}\matnot{n, II} = \bm{U}\matnot{1, II}
    \label{eq:mvp_sharing_embedding_conditional_variable}
\end{equation}

for $n=2,\ldots,N$. Equivalently, that would be $\bm{A}\matnot{n, II} = \bm{A}\matnot{1, II}$ in (\ref{eq:mvp_model2_rec}).
Additionally, \modeltwo{}  performed better in our preliminary experiments, thus we use (\ref{eq:mvp_model2_rec}) to design each polynomial.
Given the aforementioned sharing, the $N^{th}$ order expansion is described by:
\begin{equation}
    \boutvar_{n} = \Big(\bm{A}\matnot{n, I}^T \sbinvar{I} +  \bm{A}\matnot{\textcolor{red}{1}, II}^T \sbinvar{II}\Big) * \Big(\bm{V}\matnot{n}^T \boutvar_{n-1} + \bm{B}\matnot{n}^T\bm{b}\matnot{n}\Big)
    \label{eq:mvp_model2_rec_sharing_embedding}
\end{equation}
for $n=2,\ldots,N$. Lastly, the factor $\bm{A}\matnot{1, II}$ is a convolutional layer when the conditional variable is an image, while it is a fully-connected layer otherwise.

\paragraph{Order of each polynomial in \modelnameMVP:}: Our experimental analysis demonstrates that there is not a single order of the polynomial expansion that works in all cases. The different polynomial expansions used (cf. Fig.~\ref{fig:mvp_schematics_product_polynomials}) utilize different orders of expansion. The first set of polynomials with fully-connected layers typically includes low-order polynomials (i.e., $1^{\text{st}}-3^{\text{rd}}$ order), while the main polynomial includes $6^{\text{th}}-9^{\text{th}}$ degree polynomial, and the output polynomial is also $1^{\text{st}}-3^{\text{rd}}$ order polynomial.

 \section{Additional experiments}
\label{sec:mvp_experiments_suppl}

Additional experiments and visualizations are provided in this section. Additional visualizations for class-conditional generation are provided in sec.~\ref{ssec:mvp_experiments_visualization_suppl}. An additional experiment with class-conditional generation with SVHN digits is performed in sec.~\ref{ssec:mvp_experiment_class_conditional_svhn_suppl}. An experiment that learns the translation of MNIST to SVHN digits is conducted in sec.~\ref{ssec:mvp_experiment_translation_mnist_svhn_suppl}. To explore further the image-to-image translation, two additional experiments are conducted in sec.~\ref{ssec:mvp_experiment_translation_edge2im_suppl}. An attribute-guided generation is performed in sec.~\ref{ssec:mvp_experiment_multiple_discrete_conditional_input_suppl} to illustrate the benefit of our framework with respect to multiple, discrete conditional inputs. This is further extended in sec.~\ref{ssec:mvp_experiment_mixed_conditional_input_suppl}, where an experiment with mixed conditional input is conducted. Finally, an additional diversity-inducing regularization term is used to assess whether it can further boost the diversity the synthesized images in sec.~\ref{ssec:mvp_experiment_translation_edge2im__with_diversity_suppl}.

\begin{figure*}
\centering
    \centering
    \subfloat[Super-resolution $8\times$]{\includegraphics[width=0.32\linewidth]{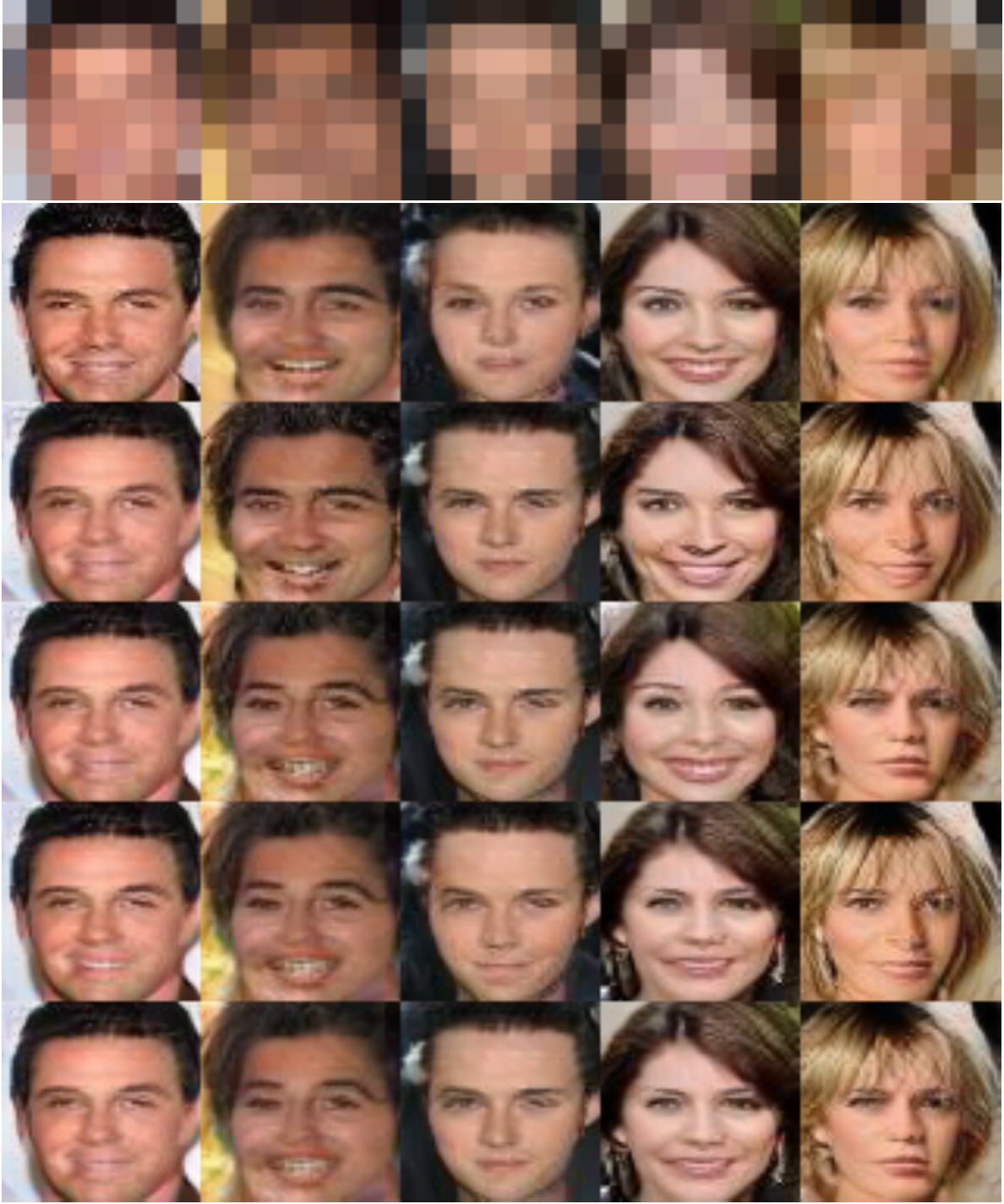}\hspace{1mm}}
    \subfloat[Super-resolution $8\times$]{\includegraphics[width=0.32\linewidth]{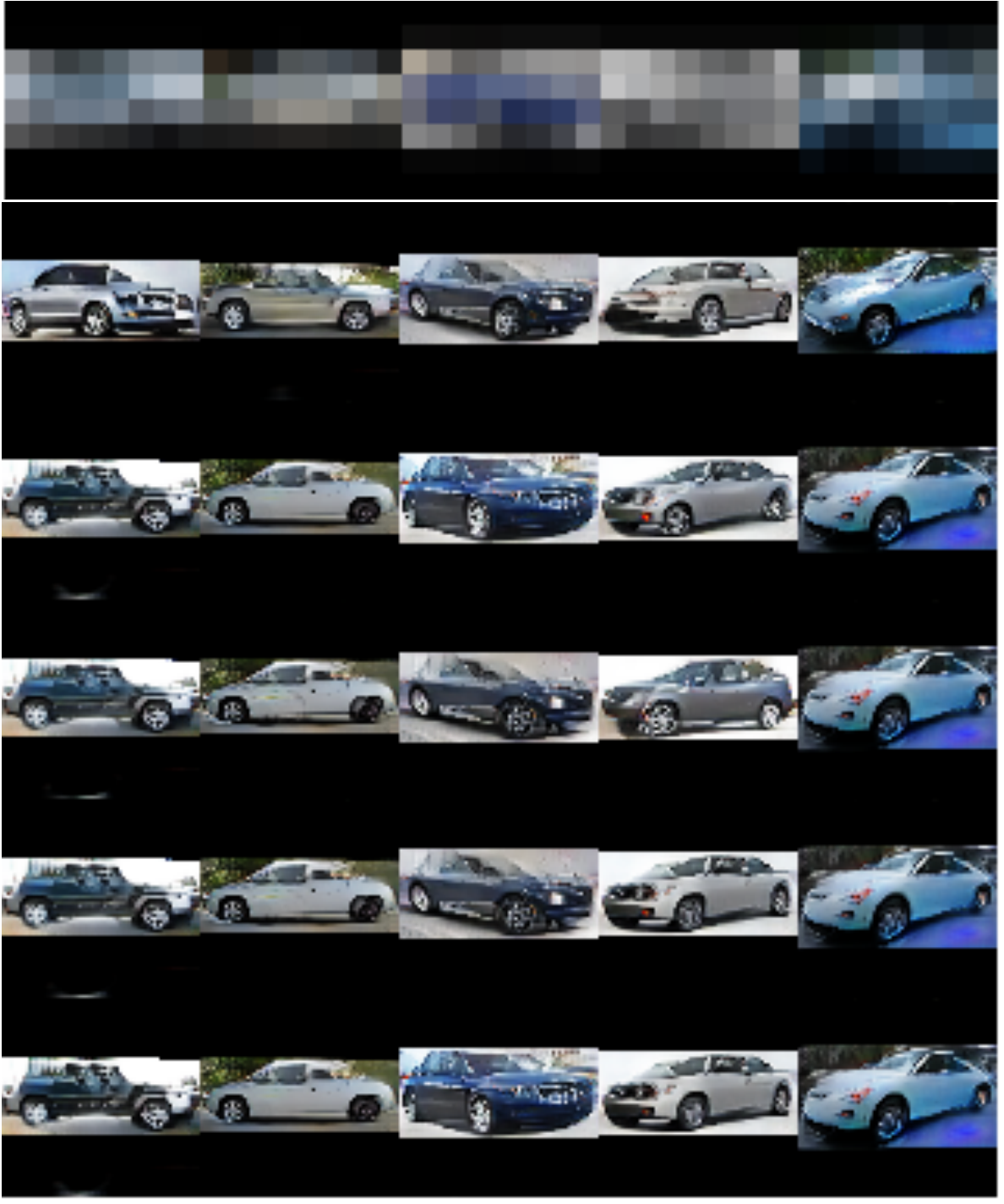}\hspace{1mm}}
    \subfloat[Super-resolution $16\times$]{\includegraphics[width=0.32\linewidth]{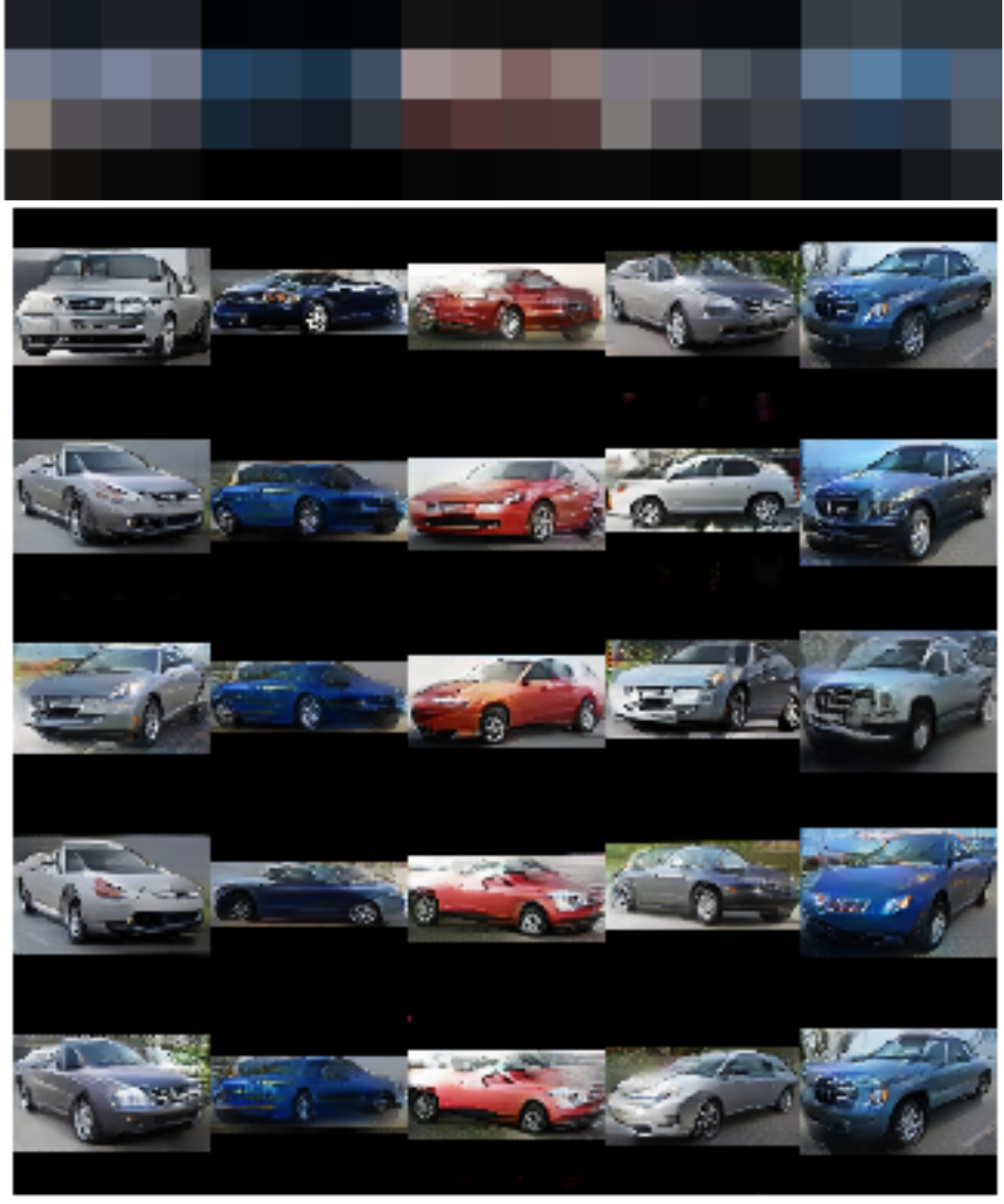}}
\caption{Synthesized images for super-resolution by (a), (b) $8\times$, (c) $16\times$. The first row depicts the conditional input (i.e., low-resolution image). The rows 2-6 depict outputs of the \modelnameMVP{} when a noise vector is sampled per row. Notice how the noise changes (a) the smile or the pose of the head, (b) the color, car type or even the background, (c) the position of the car. }
\label{fig:mvp_qualitative_continuous_cond}
\end{figure*}

\begin{figure*}
\centering
    \centering
    \subfloat[Female+Neutral]{\includegraphics[width=0.495\linewidth]{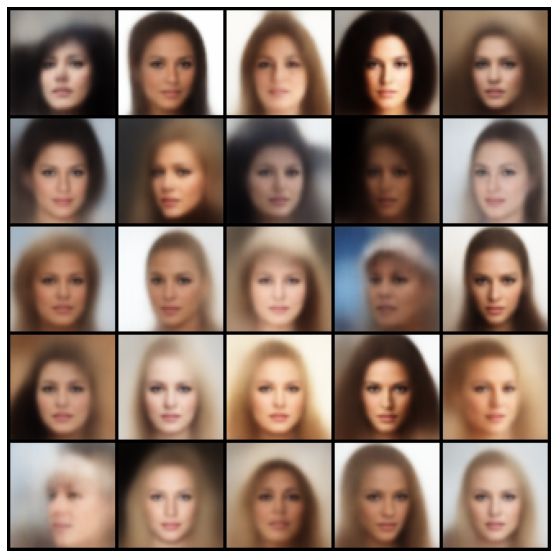}\hspace{1mm}}
    \subfloat[Female+Smile (Unseen)]{\includegraphics[width=0.495\linewidth]{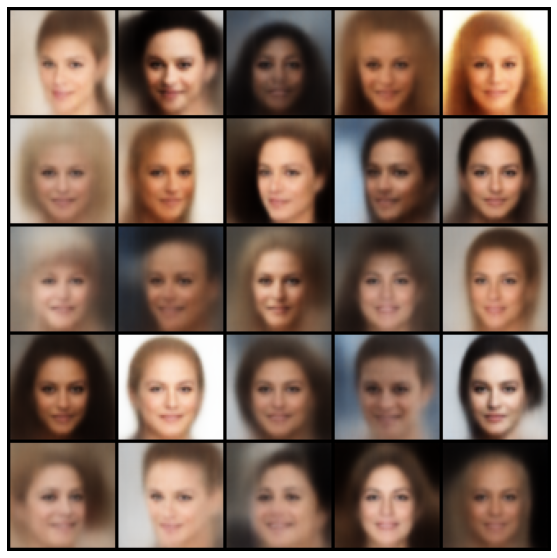}\hspace{1mm}}\\
    \subfloat[Male+Neutral]{\includegraphics[width=0.495\linewidth]{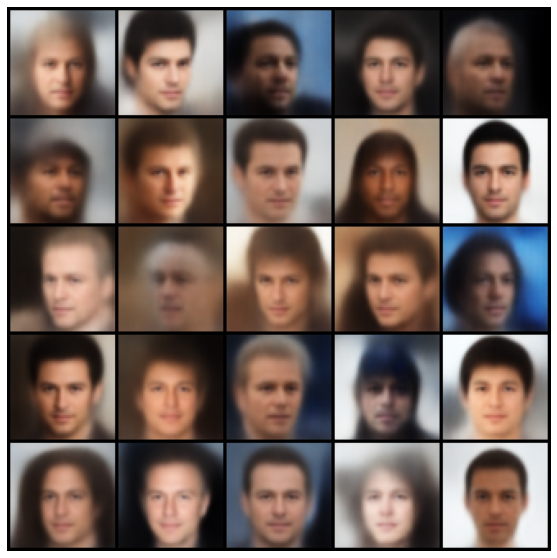}\hspace{1mm}}
    \subfloat[Male+Smile]{\includegraphics[width=0.495\linewidth]{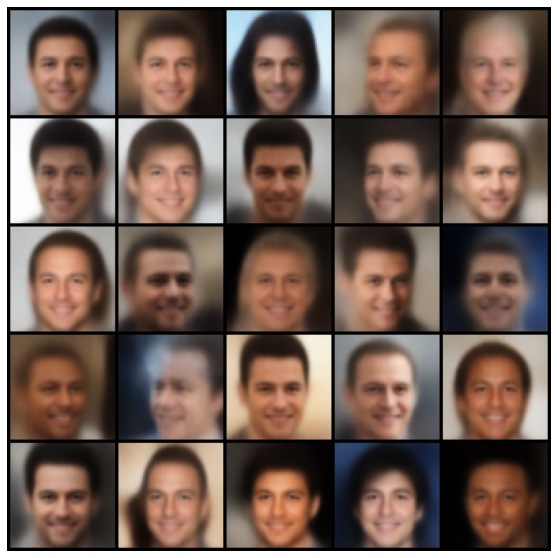}}
\caption{Additional synthesized images for \modelnameMVP-VAE. The setup of sec.~\ref{ssec:mvp_experiments_unseen_combinations} is used to provide the combinations, where the Woman+Smile is the combination not included in the training set. Notice that the proposed \modelnameMVP-VAE can synthesize images from all four combinations.}
\label{fig:mvp_vae_missing_combinations_suppl}
\end{figure*}

\subsection{Additional visualizations in class-conditional generation}
\label{ssec:mvp_experiments_visualization_suppl}

In Fig.~\ref{fig:mvp_qualitative_cifar10_comparison} the qualitative results of the compared methods in class-conditional generation on CIFAR10 are shared. Both the generator of SNGAN and ours have activation functions in this experiment. 

\begin{figure}[!h]
\centering
    \centering
    \subfloat[Ground-truth samples]{\includegraphics[width=0.33\linewidth]{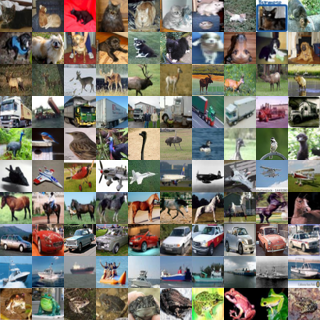}\hspace{1mm}}
    \subfloat[SNGAN~\citep{miyato2018spectral}]{\includegraphics[width=0.322\linewidth]{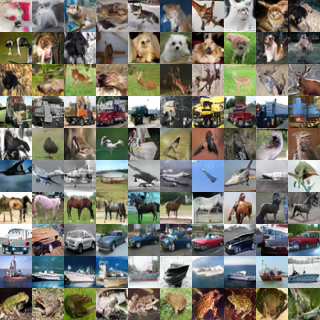}\hspace{1mm}}
    \subfloat[\modelnameMVP]{\includegraphics[width=0.322\linewidth]{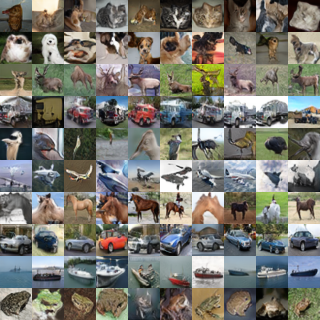}\hspace{1mm}}
\caption{Qualitative results on CIFAR10. Each row depicts random samples from a single class.}
\label{fig:mvp_qualitative_cifar10_comparison}
\end{figure}

In Fig.~\ref{fig:mvp_qualitative_linear_cifar10_comparison} samples from the baseline \modelnamePI~\citep{chrysos2020poly} and our method are depicted for the class-conditional generation on CIFAR10. The images have a substantial difference. Similarly, in Fig.~\ref{fig:mvp_qualitative_linear_cars_comparison} a visual comparison between \modelnamePI{} and \modelnameMVP{} is exhibited in Cars196 dataset. To our knowledge, no framework in the past has demonstrated such expressivity; \modelnameMVP{} synthesizes images that approximate the quality of synthesized images from networks with activation functions.

\begin{figure}[!h]
\centering
    \centering
    \subfloat[\modelnamePI~\citep{chrysos2020poly}]{\includegraphics[width=0.495\linewidth]{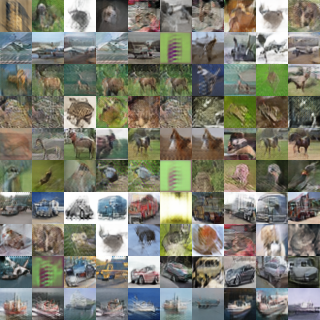}\hspace{1mm}}
    \subfloat[\modelnameMVP]{\includegraphics[width=0.495\linewidth]{qualitative-cifar10_linear_class_cond.png}\hspace{1mm}}
\caption{Qualitative results on CIFAR10 when the generator does not include activation functions between the layers. Each row depicts random samples from a single class; the same class is depicted in each pair of images. For instance, the last row corresponds to boats. }
\label{fig:mvp_qualitative_linear_cifar10_comparison}
\end{figure}

\begin{figure}[!h]
\centering
    \centering
    \subfloat[\modelnamePI]{\includegraphics[width=0.495\linewidth]{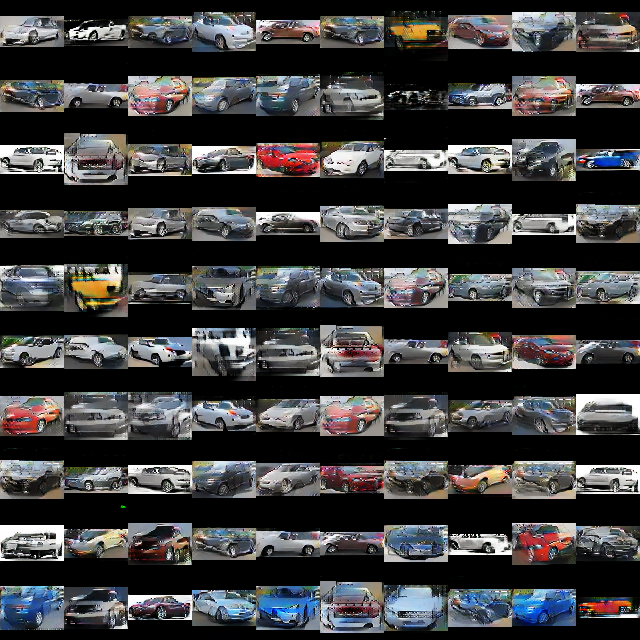}\hspace{1mm}}
    \subfloat[\modelnameMVP]{\includegraphics[width=0.495\linewidth]{qualitative-cars_linear_class_cond.png}\hspace{1mm}}
\caption{Qualitative results on Cars196 when the generator does not include activation functions between the layers. Each row depicts cherry-picked samples from a single class; the same class is depicted in each pair of images. The differences between the synthesized images are dramatic.}
\label{fig:mvp_qualitative_linear_cars_comparison}
\end{figure}

In Fig.~\ref{fig:mvp_comparisons_interclass_cifar10}, an inter-class interpolation of various compared methods in CIFAR10 are visualized. The illustrations of the intermediate images in SNGAN-CONC and SNGAN-ADD are either blurry or not realistic. On the contrary, in \modelnamespade{} and \modelnameMVP{} the higher-order polynomial expansion results in more realistic intermediate images. Nevertheless, \modelnameMVP{} results in sharper shapes and images even in the intermediate results when compared to \modelnamespade.

\begin{figure}[h]
\centering
    \centering
    \subfloat[SNGAN-CONC]{\includegraphics[width=0.495\linewidth]{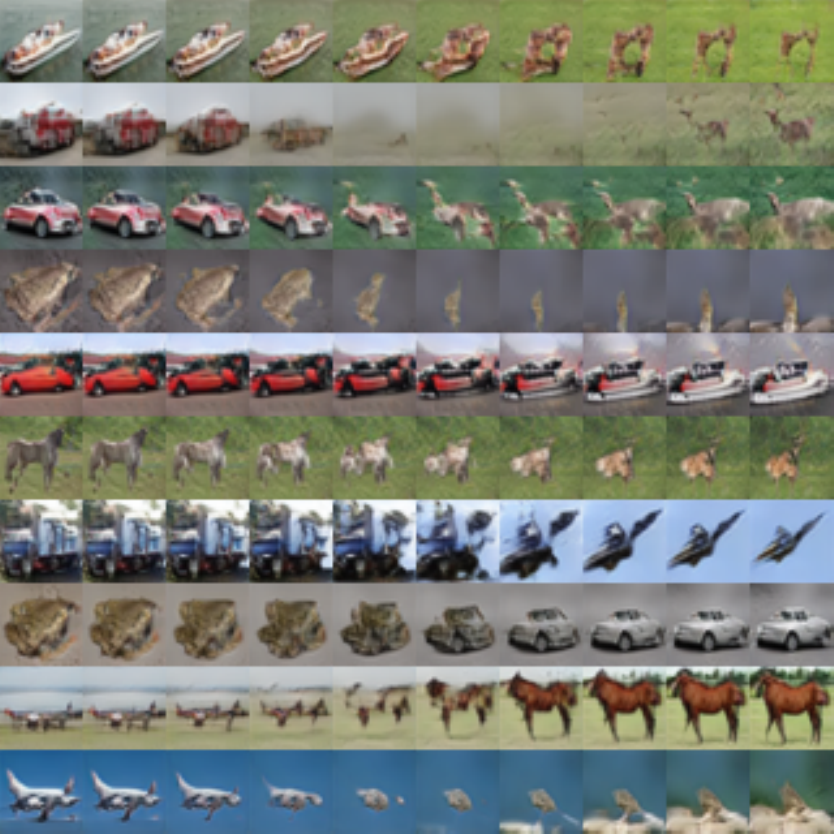}\hspace{1mm}}
    \subfloat[SNGAN-ADD]{\includegraphics[width=0.495\linewidth]{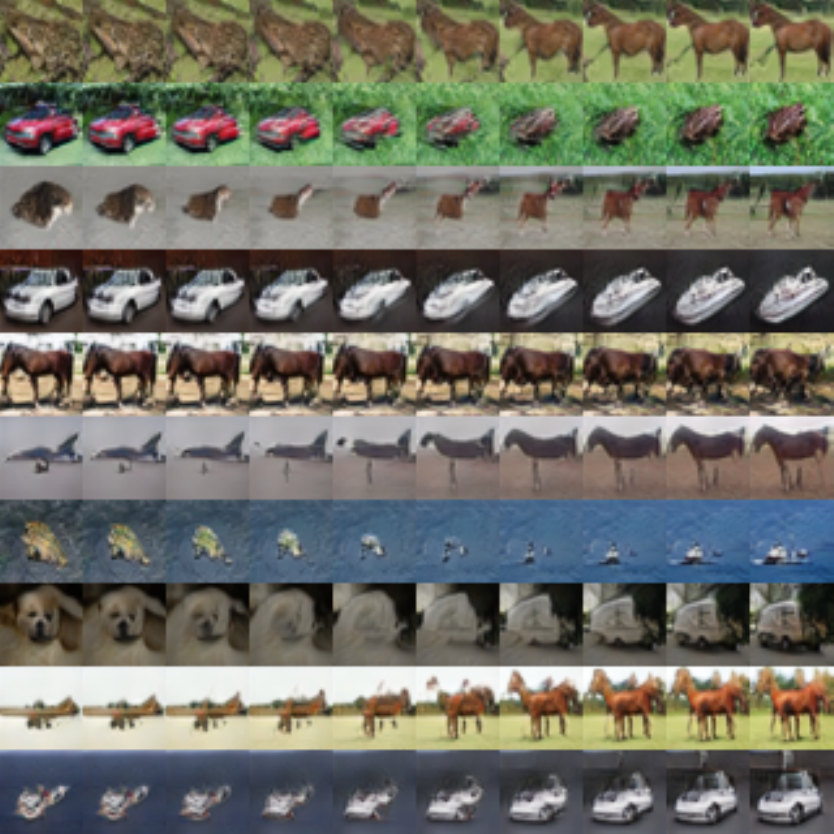}}\\
    \subfloat[SNGAN-\modelnamespade]{\includegraphics[width=0.495\linewidth]{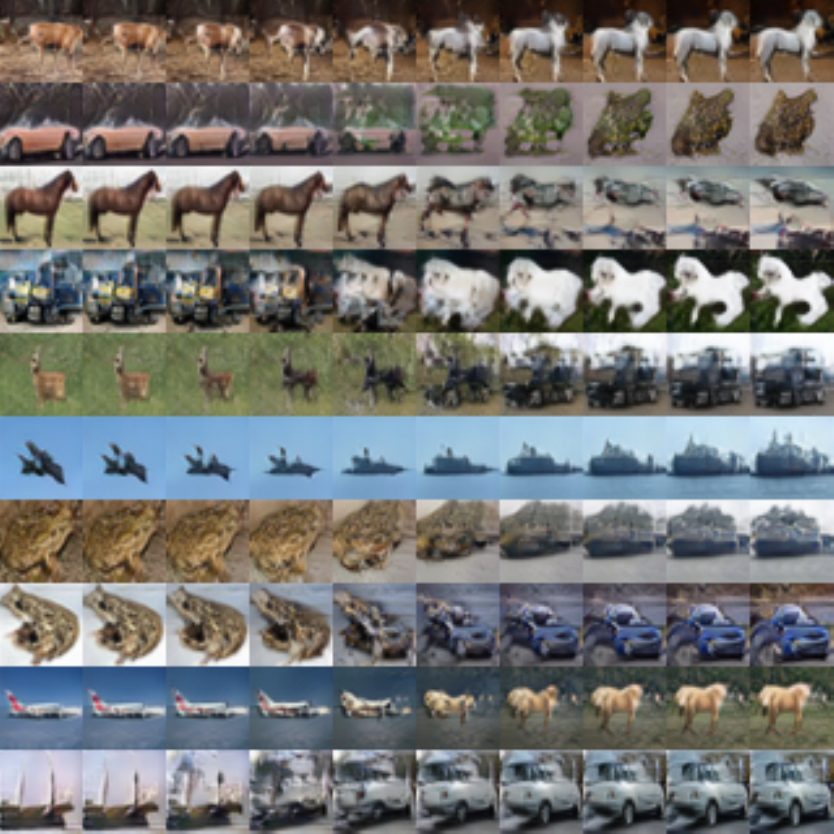}\hspace{1mm}}
    \subfloat[SNGAN-\modelnameMVP]{\includegraphics[width=0.483\linewidth]{qualitative-class_interp_cifar10.pdf}}
\caption{Inter-class linear interpolations across different methods. In inter-class interpolation, the class labels of the leftmost and rightmost images are one-hot vectors, while the rest are interpolated in-between; the resulting images are visualized. Many of the intermediate images in SNGAN-CONC and SNGAN-ADD are either blurry or not realistic. On the contrary, in \modelnamespade{} and \modelnameMVP{} the higher-order polynomial expansion results in more realistic intermediate images. Nevertheless, \modelnameMVP{} results in sharper shapes and images even in the intermediate results when compared to \modelnamespade.}
\label{fig:mvp_comparisons_interclass_cifar10}
\end{figure}

\subsection{Class-conditional generation on house digits}
\label{ssec:mvp_experiment_class_conditional_svhn_suppl}

An experiment on class-conditional generation with SVHN is conducted below. SVHN images include (substantial) blur or other distortions, which insert noise in the distribution to be learned. In addition, some images contain contain a central digit (i.e., based on which the class is assigned), and partial visibility of other digits. Therefore, the generation of digits of SVHN is challenging for a generator without activation functions between the layers. 

Our framework, e.g., \eqref{eq:mvp_model2_rec}, does not include any activation functions. To verify the expressivity of our framework, we maintain the same setting for this experiment. Particularly, \textbf{the generator does not have activation functions between the layers}; there is only a hyperbolic tangent in the output space for normalization. The generator receives a noise sample and a class as input, i.e., it is a class-conditional polynomial generator. 

The results in Fig.~\ref{fig:mvp_qualitative_class_cond_svhn_linear}(b) illustrate that despite the noise, \modelnameMVP{} learns the distribution. As mentioned in the main paper, our formulation enables both inter-class and intra-class interpolations naturally. In the inter-class interpolation the noise $\sbinvar{I}$ is fixed, while the class $\sbinvar{II}$ is interpolated. In Fig.~\ref{fig:mvp_qualitative_class_cond_svhn_linear}(d) several inter-class interpolations are visualized. The visualization exhibits that our framework is able to synthesize realistic images even with inter-class interpolations.

\begin{figure}[!h]
\centering
    \centering
    \subfloat[Ground-truth samples]{\includegraphics[width=0.245\linewidth]{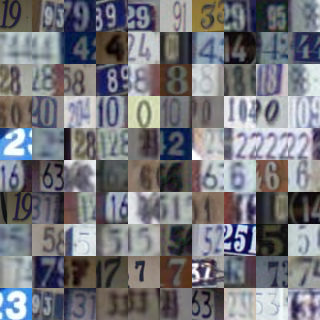}\hspace{1mm}}
    \subfloat[\modelnameMVP]{\includegraphics[width=0.245\linewidth]{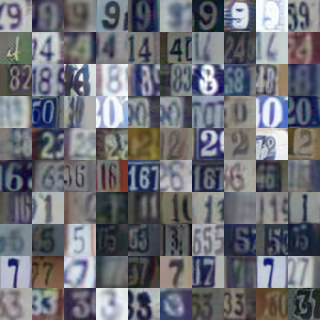}\hspace{1mm}}
    \subfloat[Intra-class interpolation]{\includegraphics[width=0.24\linewidth]{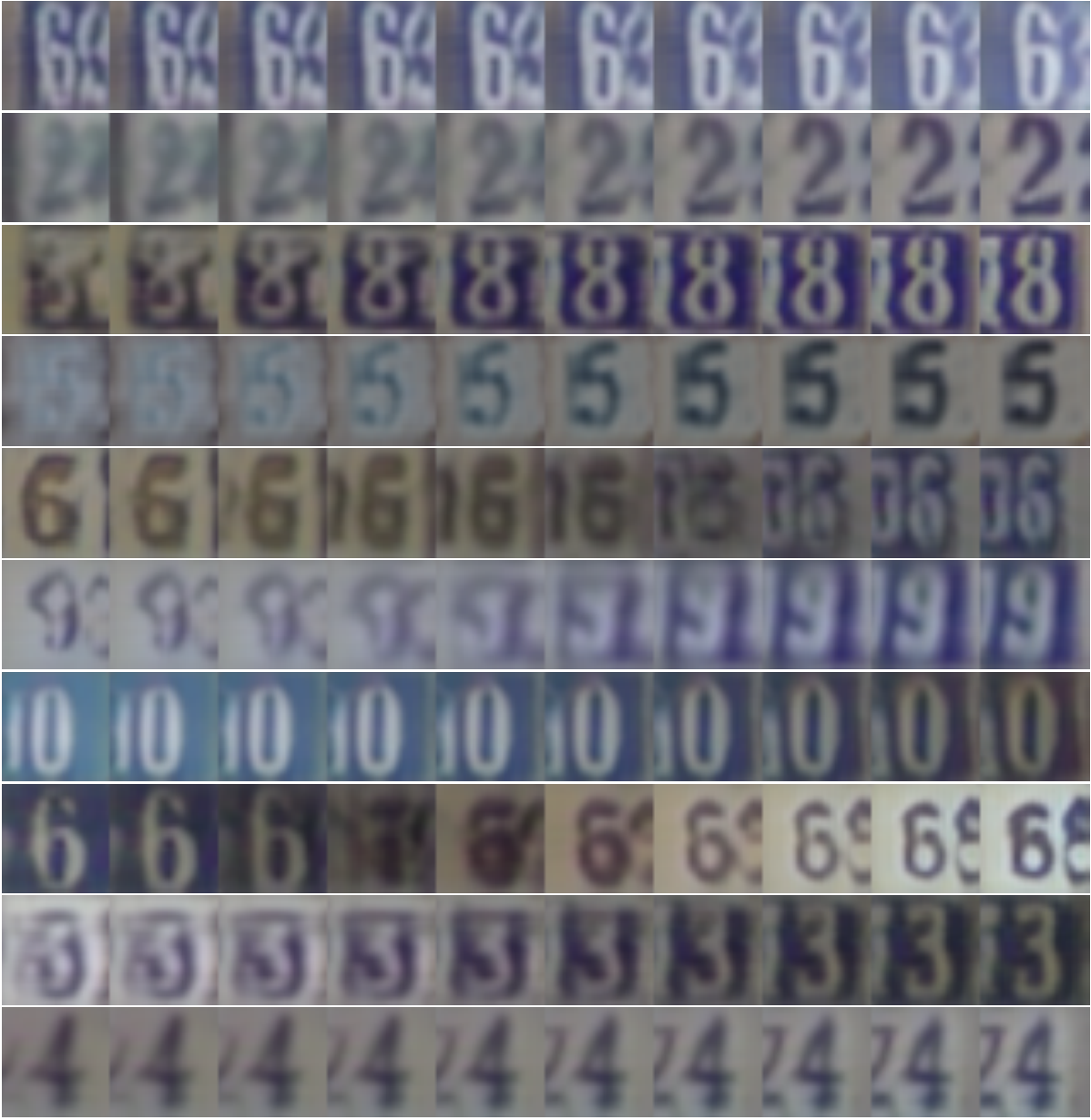}\hspace{1mm}}
    \subfloat[Inter-class interpolation]{\includegraphics[width=0.24\linewidth]{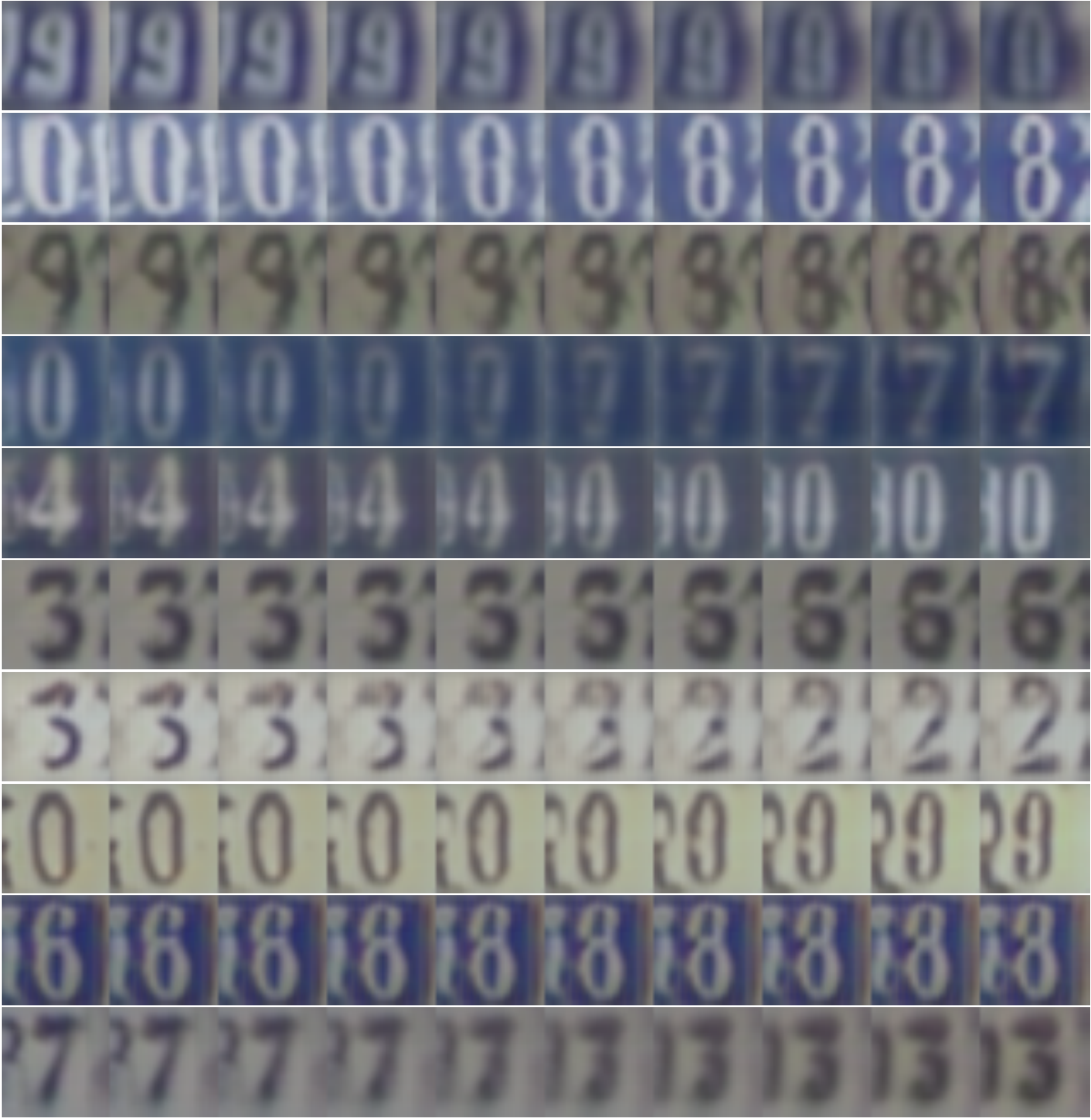}\hspace{1mm}}
\caption{Synthesized images by \modelnameMVP{} in the class-conditional SVHN: (a) Ground-truth samples, (b) Random samples where each row depicts the same class, (c) Intra-class linear interpolation from a source (leftmost image) to the target (rightmost image), (d) inter-class linear interpolation. In inter-class interpolation, the class labels of the leftmost and rightmost images are one-hot vectors, while the rest are interpolated in-between; the resulting images are visualized. In all three cases ((b)-(d)), \modelnameMVP{} synthesizes realistic images.}
\label{fig:mvp_qualitative_class_cond_svhn_linear}
\end{figure}

\subsection{Translation of MNIST digits to SVHN digits}
\label{ssec:mvp_experiment_translation_mnist_svhn_suppl}

An experiment on image translation from the domain of binary digits to house numbers is conducted below. The images of MNIST are used as the source domain (i.e., the conditional variable $\sbinvar{II}$), while the images of SVHN are used as the target domain. The correspondence of the source to the target domain is assumed to be many-to-many, i.e., each MNIST digit can synthesize multiple SVHN images. No additional loss is used, the setting of continuous conditional input from sec.~\ref{ssec:mvp_experiments_continuous} is used. 

The images in Fig.~\ref{fig:mvp_qualitative_mnist2svhn_translation} illustrate that \modelnameMVP{} can translate MNIST digits into SVHN digits. Additionally, for each source digit, there is a significant variation in the synthesized images.

\begin{figure}[!h]
\centering
    \centering
    \includegraphics[width=0.3\linewidth]{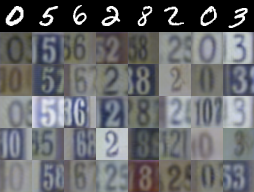}
\caption{Qualitative results on MNIST-to-SVHN translation. The first row depicts the conditional input (i.e., a MNIST digit). The rows 2-6 depict outputs of the \modelnameMVP{} when a noise vector is sampled per row. Notice that for each source digit, there is a significant variation in the synthesized images. }
\label{fig:mvp_qualitative_mnist2svhn_translation}
\end{figure}

\subsection{Translation of edges to images}
\label{ssec:mvp_experiment_translation_edge2im_suppl}

An additional experiment on translation is conducted, where the source domain depicts edges and the target domain is the output image. Specifically, the tasks of edges-to-handbags (on Handbags dataset) and edges-to-shoes (on Shoes dataset) have been selected~\citep{isola2016image}.

In this experiment, the MVP model of sec.~\ref{ssec:mvp_experiments_continuous} is utilized, i.e., a generator without activation functions between the layers. The training is conducted using \emph{only} the adversarial loss. Visual results for both the case of edges-to-handbags and edges-to-shoes are depicted in Fig.~\ref{fig:mvp_qualitative_edges2im_translation}. The first row depicts the conditional input $\sbinvar{II}$, i.e., an edge, while the rows 2-6 depict the synthesized images. Note that in both the case of handbags and shoes there is significant variation in the synthesized images, while they follow the edges provided as input.

\begin{figure}[h]
\centering
    \centering
    \subfloat[edges-to-handbags]{\includegraphics[width=0.49\linewidth]{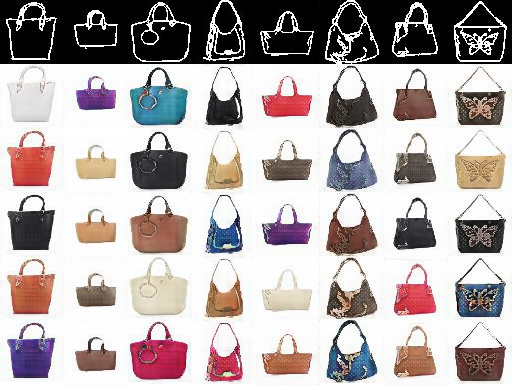}\hspace{3mm}}
    \subfloat[edges-to-shoes]{\includegraphics[width=0.49\linewidth]{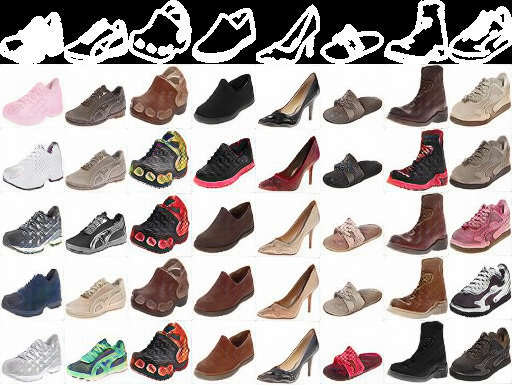}}
\caption{Qualitative results on edges-to-image translation. The first row depicts the conditional input (i.e., the edges). The rows 2-6 depict outputs of the \modelnameMVP{} when we vary $\binvar_I$. Notice that for each edge, there is a significant variation in the synthesized images.}
\label{fig:mvp_qualitative_edges2im_translation}
\end{figure}

\subsection{Multiple conditional inputs in attribute-guided generation}
\label{ssec:mvp_experiment_multiple_discrete_conditional_input_suppl}

Frequently, more than one type of input conditional inputs are available. Our formulation can be extended beyond two input variables (sec.~\ref{sec:mvp_method_many_inputs_suppl}); we experimentally verify this case. The task selected is attribute-guided generation trained on images of Anime characters. Each image is annotated with respect to the color of the eyes ($6$ combinations) and the color of the hair ($7$ combinations).

Since \modelnamespade{} only accepts a single conditional variable, we should concatenate the two attributes in a single variable. We tried simply concatenating the attributes directly, but this did not work well. Instead, we can use the total number of combinations, which is the product of the individual attribute combinations, i.e., in our case the total number of combinations is $42$. Obviously, this causes `few' images to belong in each unique combination, i.e., there are $340$ images on average that belong to each combination. On the contrary, there are $2380$ images on average for each eye color. 

\modelnamespade{} and \modelnamePI{} are trained by using the two attributes in a single combination, while in our case, we consider the multiple conditional variable setting. In each case, only the generator differs depending on the compared method. In Fig.~\ref{fig:mvp_qualitative_attribute_guided_generation_anime_cond_linear} few indicative images are visualized for each method; each row depicts a single combination of attributes, i.e., hair and eye color. Notice that \modelnamespade{} results in a single image per combination, while in \modelnamePI-SINCONC there is considerable repetition in each case. The single image in \modelnamespade{} can be explained by the lack of higher-order correlations with respect to the noise variable $\sbinvar{I}$.

In addition to the diversity of the images per combination, an image from every combination is visualized in Fig.~\ref{fig:mvp_qualitative_attribute_guided_generation_anime_cond_linear_attrib_matrix}. \modelnameMVP{} synthesizes more realistic images than the compared methods of \modelnamePI-SINCONC and \modelnamespade.

\begin{figure*}
\centering
    \centering
    \subfloat[\modelnamePI-SINCONC]{\includegraphics[width=0.32\linewidth]{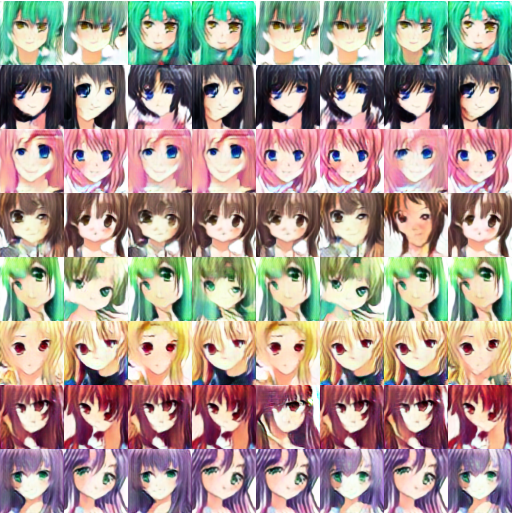}\hspace{1mm}}
    \subfloat[\modelnamespade]{\includegraphics[width=0.32\linewidth]{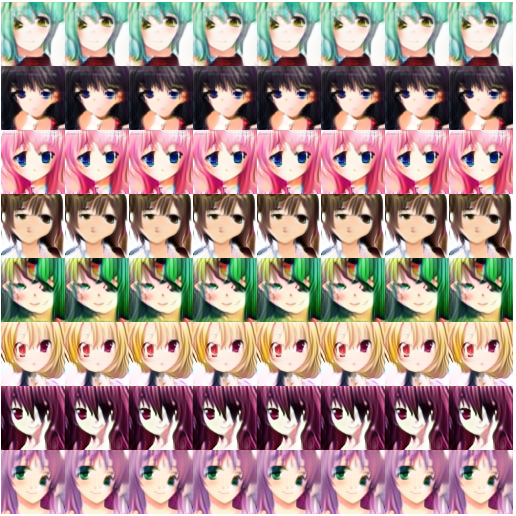}\hspace{1mm}}
    \subfloat[\modelnameMVP]{\includegraphics[width=0.32\linewidth]{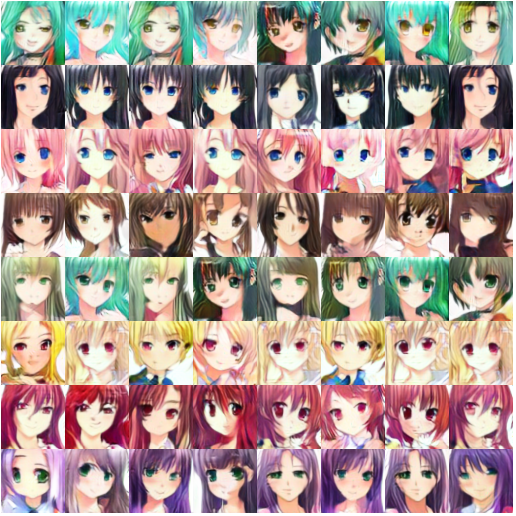}}
\caption{Each row depicts a single combination of attributes, i.e., hair and eye color. Please zoom-in to check the finer details. The method of \modelnamespade{} synthesizes a single image per combination. \modelnamePI-SINCONC synthesizes few images, but not has many repeated elements, while some combinations result in unrealistic faces, e.g., the $5^{th}$ or the $7^{th}$ row. On the contrary, \modelnameMVP{} synthesizes much more diverse images for every combination.}
\label{fig:mvp_qualitative_attribute_guided_generation_anime_cond_linear}
\end{figure*}

\begin{figure*}
\centering
    \centering
    \subfloat[\modelnamePI-SINCONC]{\includegraphics[width=0.32\linewidth]{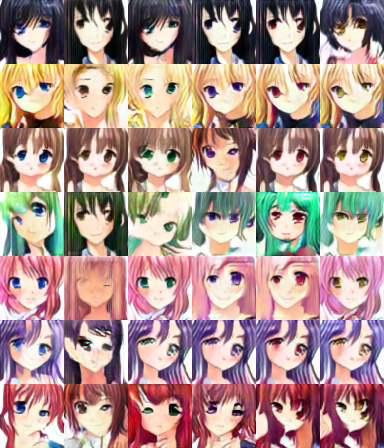}\hspace{2mm}}
    \subfloat[\modelnamespade]{\includegraphics[width=0.32\linewidth]{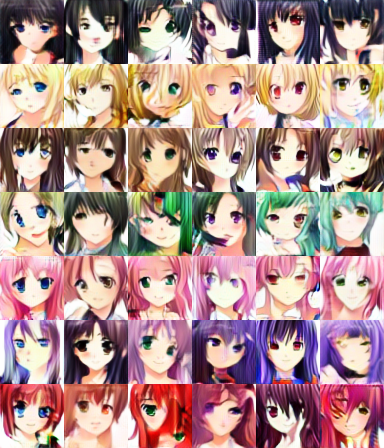}\hspace{2mm}}
    \subfloat[\modelnameMVP]{\includegraphics[width=0.32\linewidth]{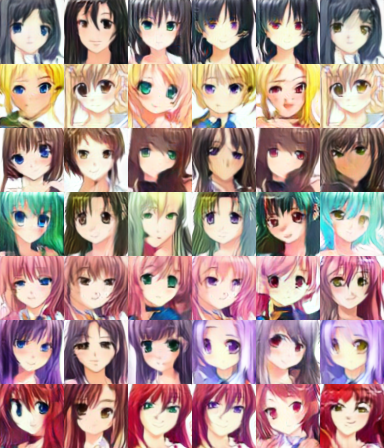}}
\caption{Each row depicts a single hair color, while each column depicts a single eye color. \modelnamespade{} results in some combinations that violate the geometric structure of the face, e.g., $3^{rd}$ column in the last row. Similarly, in \modelnamePI-SINCONC some of the synthesized images are unrealistic, e.g., penultimate row. In both \modelnamespade{} and \modelnamePI-SINCONC, in some cases the eyes do not have the same color. \modelnameMVP{} synthesizes images that resemble faces for every combination.}
\label{fig:mvp_qualitative_attribute_guided_generation_anime_cond_linear_attrib_matrix}
\end{figure*}

\subsection{Multiple conditional inputs in class-conditional super-resolution}
\label{ssec:mvp_experiment_mixed_conditional_input_suppl}

\begin{wrapfigure}[13]{r}{0.2\textwidth}  
\vspace{-10mm}
  \begin{center}
    \includegraphics[width=0.85\linewidth]{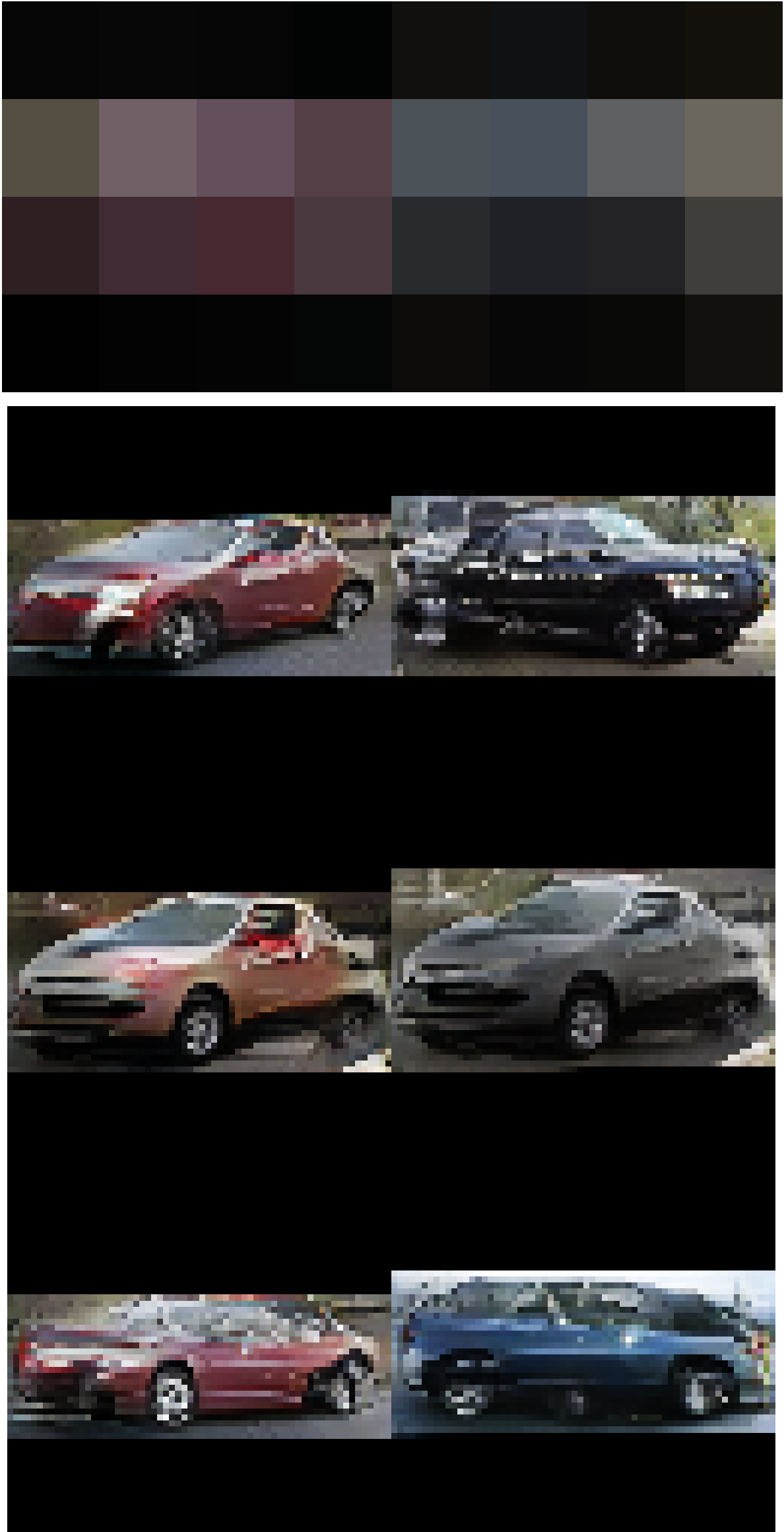}
  \end{center}
  \caption{Three-variable input generative model.} 
  \label{fig:mvp_qualitative_multiple_cond}
\end{wrapfigure}

We extend the previous experiment with multiple conditional variables to the case of class-conditional super-resolution. The first conditional variable captures the class label, while the second conditional variable captures the low-resolution image.

We use the experimental details of sec.~\ref{ssec:mvp_experiments_continuous} in super-resolution $8\times$. In Fig.~\ref{fig:mvp_qualitative_multiple_cond}, we visualize how for each low-resolution image the results differ depending on the randomly sampled class label. The FID in this case is $53.63$, which is similar to the previous two cases. Class-conditional super-resolution (or similar tasks with multiple conditional inputs) can be of interest to the community and \modelnameMVP{} results in high-dimensional images with large variance.

\subsection{Improve diversity with regularization}
\label{ssec:mvp_experiment_translation_edge2im__with_diversity_suppl}

As emphasized in sec.~\ref{sec:mvp_difference_from_other_diverse_generation_techniques_suppl}, various methods have been utilized for synthesizing more diverse images in conditional image generation tasks. A reasonable question is whether our method can be used in conjunction with such methods, since it already synthesizes diverse results. Our hypothesis is that when \modelnameMVP{} is used in conjunction with any diversity-inducing technique, it will further improve the diversity of the synthesized images. To assess the hypothesis, we conduct an experiment on edges to images that is a popular benchmark in such diverse generation tasks~\citep{zhu2017toward, yang2019diversity}. 

The plug-n-play regularization term of \citet{yang2019diversity} is selected and added to the GAN loss during the training. The objective of the regularization term $\mathcal{L}_{reg}$ is to maximize the following term:
\begin{equation}
    \mathcal{L}_{reg} = \min(\frac{\left|| G(\sbinvar{I, 1}, \sbinvar{II}) - G(\sbinvar{I, 2}, \sbinvar{II})\right||_{1}}{\left|| \sbinvar{I, 1} - \sbinvar{I, 2}\right||_{1}}, \tau)
    \label{eq:mvp_diversity_loss}
\end{equation}
where $\tau$ is a predefined constant, $\sbinvar{I, 1}, \sbinvar{I, 2}$ are different noise samples. The motivation behind this term lies in encouraging the generator to produce outputs that differ when the input noise samples differ. In our experiments, we follow the implementation of the original paper with $\tau=10$.

The regularization loss of \eqref{eq:mvp_diversity_loss} is added to the GAN loss; the architecture of the generator remains similar to sec.~\ref{ssec:mvp_experiment_translation_edge2im_suppl}. The translation task is edges-to-handbags (on Handbags dataset) and edges-to-shoes (on Shoes dataset). In Fig.~\ref{fig:mvp_qualitative_edges2im_translation_diversity_regularization} the synthesized images are depicted. The regularization loss causes more diverse images to be synthesized (i.e., when compared to the visualization of Fig.~\ref{fig:mvp_qualitative_edges2im_translation} that was trained using only the adversarial loss). For instance, in both the shoes and the handbags, new shades of blue are now synthesized, while yellow handbags can now be synthesized. 

The empirical results validate the hypothesis that our model can be used in conjunction with diversity regularization losses in order to improve the results. Nevertheless, the experiment in sec.~\ref{ssec:mvp_experiment_translation_edge2im_suppl} indicates that a regularization term is not necessary to synthesize images that do not ignore the noise as feed-forward generators had previously.

\begin{figure}[h]
\centering
    \centering
    \subfloat[edges-to-handbags]{\includegraphics[width=0.49\linewidth]{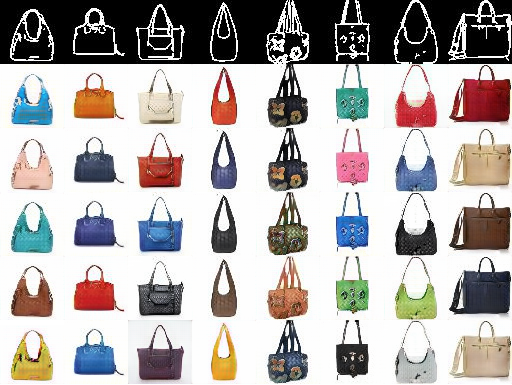}\hspace{3mm}}
    \subfloat[edges-to-shoes]{\includegraphics[width=0.49\linewidth]{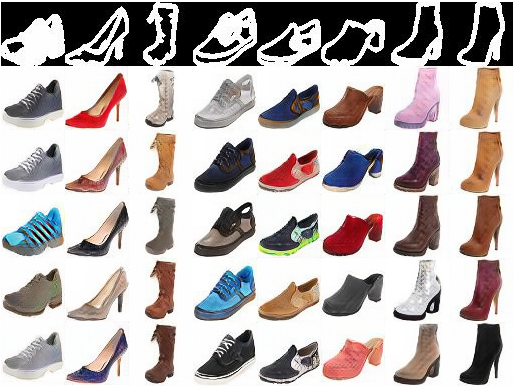}}
\caption{Qualitative results on edges-to-image translation with regularization loss for diverse generation (sec.~\ref{ssec:mvp_experiment_translation_edge2im__with_diversity_suppl}). The first row depicts the conditional input (i.e., the edges). The rows 2-6 depict outputs of the \modelnameMVP{} when we vary $\binvar_I$. Diverse images are synthesized for each edge. The regularization loss results in `new' shades of blue to emerge in the synthesized images in both the shoes and the handbags cases.}
\label{fig:mvp_qualitative_edges2im_translation_diversity_regularization}
\end{figure}

\subsection{Generation of unseen attribute combinations}
In this section, we highlight how the proposed CoPE-VAE compares to the other baselines for the task of generating unseen attribute combinations. Following the benchmark in \citep{georgopoulos2020multilinear} we compare to cVAE \citep{sohn2015learning}, VampPrior \citep{tomczak2017vae} and MLC-VAE-CP/T \citep{georgopoulos2020multilinear}. The results in Figure \ref{fig:cope_vae_unseen_combinations} and Table \ref{tab:missing_att_comp} showcase the efficacy of our method in recovering the unseen attribute combination of ("smiling", "female"). In particular, the quantitative comparison in Table \ref{tab:missing_att_comp} shows that our method is on par or outperforms the baseline methods in both attribute synthesis and attribute transfer. The quantitative results were obtained using attribute classifiers trained on CelebA for gender and smile.

\begin{figure}[h]
\centering
    \includegraphics[width=0.7\linewidth]{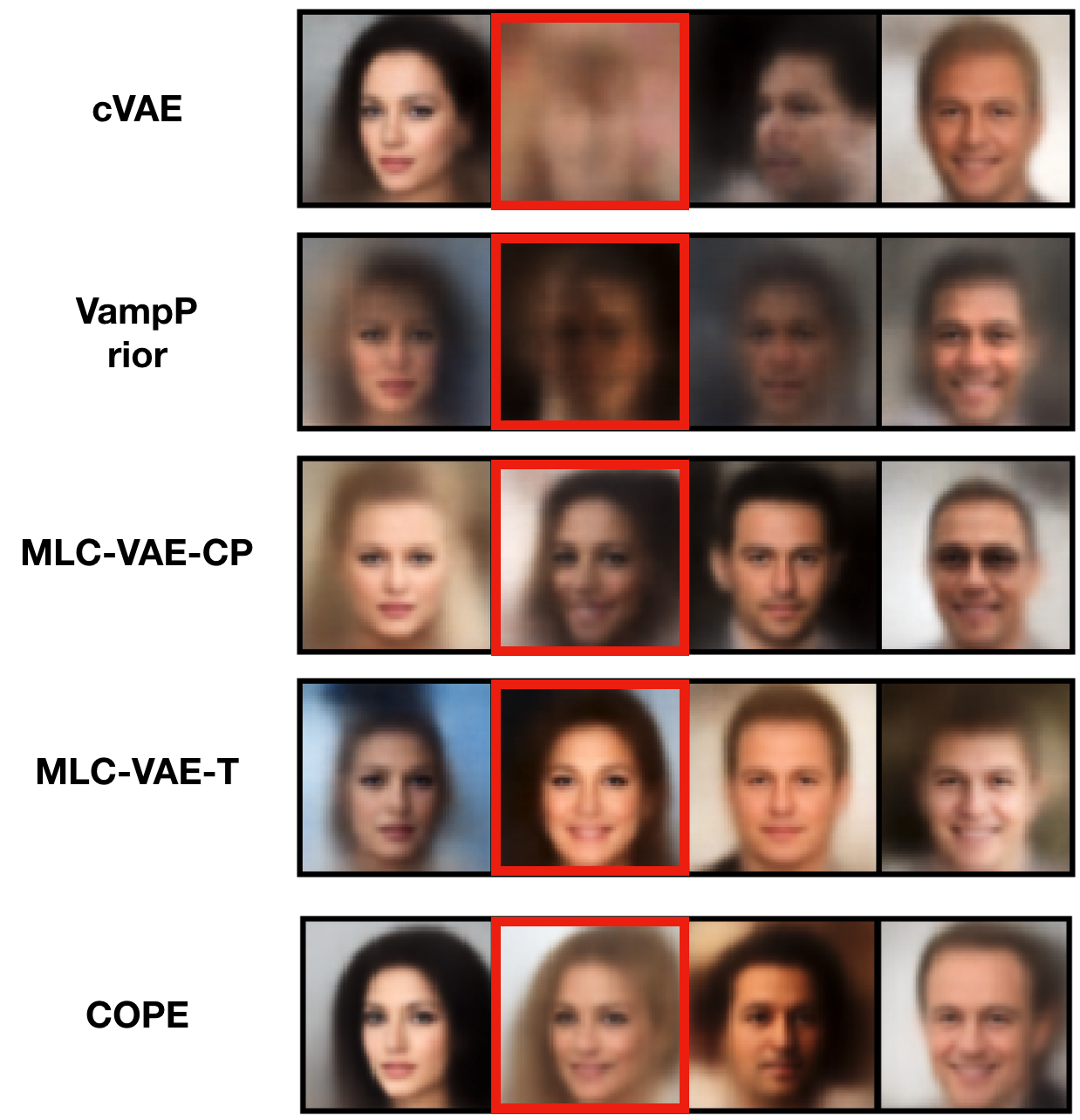}
\caption{Qualitative comparison on CelebA. The missing combination, i.e., ("Smiling", "Female"), is in the red rectangle. Results for cVAE, VampPrior, MLC-VAE-CP and MLC-VAE-T are taken directly from \citep{georgopoulos2020multilinear}.}
\label{fig:missing_atts}
\end{figure}

\begin{table*}[h]

\centering
\vspace{0pt}
 
         \vspace{0pt}
         \begin{tabular}{|c | c | c || c | c |}
             \hline
             &\multicolumn{2}{c||}{Attribute synthesis}  &\multicolumn{2}{c|}{Attribute transfer}\\ \hline  
             Model & Acc. Gender (\%) $\uparrow$ & Acc. Smile (\%) $\uparrow$ & Acc. Gender (\%) $\uparrow$ & Acc. Smile (\%) $\uparrow$\\\hline
             
             cVAE & $18$& $7.6$ & $23.1$ & $9$\\ \hline
             cVampPrior & $17.1$&$2.8$ &$44.3$ &$5.8$\\\hline 
             MLC-VAE-CP & $96.4$ & $94$ & $95$ & $90.6$ \\\hline
             MLC-VAE-T & $99.4$ & $93.5$& $99.4$ & $91.5$ \\\hline
             CoPE-VAE & $99.9$ & $92.6$ & $99.5$ & $92.38$ \\\hline
         \end{tabular}
         \caption{Quantitative comparison on CelebA using attribute classifiers. The reported results are \textbf{only for the unseen attribute combination}, i.e., ("Smiling", "Female"). Results for cVAE, VampPrior, MLC-VAE-CP and MLC-VAE-T are taken directly from \citep{georgopoulos2020multilinear}.}
         \label{tab:missing_att_comp}

\end{table*} \section{Diverse generation techniques and its relationship with \modelnameMVP}
\label{sec:mvp_difference_from_other_diverse_generation_techniques_suppl}

One challenge that often arises in conditional data generation is that one of the variables gets ignored by the generator~\citep{isola2016image}. This has been widely acknowledged in the literature, e.g., \citet{zhu2017toward} advocates that it is hard to utilize a simple architecture, like \citet{isola2016image}, with noise. A similar conclusion is drawn in InfoGAN~\citep{chen2016infogan} where the authors explicitly mention that additional losses are required, otherwise the generator is `free to ignore' the additional variables. To mitigate this, a variety of methods have been developed. We summarize the most prominent methods from the literature, starting from image-to-image translation methods:
\begin{itemize}
    \item BicycleGAN~\citep{zhu2017toward} proposes a framework that can synthesize diverse images in image-to-image translation. The framework contains 2 encoders, 1 decoder and 2 discriminators. This results in multiple loss terms (e.g., eq.9 of the paper). Interestingly, the authors utilize a separate training scheme for the encoder-decoder and the second encoder as training together 'hides the information of the latent code without learning meaningful modes'. 
    
    \item \citet{almahairi2018augmented} augment the deterministic mapping of CycleGAN~\citep{zhu2017unpaired} with a marginal matching loss. The framework learns diverse mappings utilizing the additional encoders. The framework includes 4 encoders, 2 decoders and 2 discriminators.
    
    \item MUNIT~\citep{huang2018multimodal} focuses on diverse generation in unpaired image-to-image translation. MUNIT demonstrates impressive translation results, while the inverse translation is also learnt simultaneously. That is, in case of edges-to-shoes, the translation shoes-to-edges is also learnt during the training. The mapping learnt comes at the cost of multiple network modules. Particularly, MUNIT includes 2 encoders, 2 decoders, 2 discriminators for learning. This also results in multiple loss terms (e.g., eq.5 of the paper) along with additional hyper-parameters and network parameters.
    
    \item Drit++~\citep{lee2020drit++} extends unpaired image-to-image translation with disentangled representation learning, while they allow multi-domain image-to-image translations. Drit++ uses 4 encoders, 2 decoders, 2 discriminators for learning. Similarly to the previous methods, this results in multiple loss terms (e.g., eq.6-7 of the paper) and additional hyper-parameters.
    
    \item \citet{choi2020stargan} introduce a method that supports multiple target domains. The method includes four modules: a generator, a mapping network, a style encoder and a discriminator. All modules (apart from the generator) include domain-specific sub-networks in case of multiple target domains. To ensure diverse generation, \citet{choi2020stargan} utilize a regularization loss (i.e., eq. 3 of the paper), while their final objective consists of multiple loss terms.
\end{itemize}

The aforementioned frameworks contain additional network modules for training, which also results in additional hyper-parameters in the loss-function and the network architecture. Furthermore, the frameworks focus exclusively on image-to-image translation and not all conditional generation cases, e.g., they do not tackle class-conditional or attribute-based generation.

Using regularization terms in the loss function has been an alternative way to achieve diverse generation. \citet{mao2019mode, yang2019diversity} propose simple regularization terms that can be plugged into any architecture to encourage diverse generation. \citet{lee2019harmonizing} propose two variants of a regularization term, with the `more stable variant' requiring additional network modules.

Even though our goal is not to propose a framework for diverse generation, the synthesized images of \modelnameMVP{} contain variation when the noise is changed. However, more diverse results can be exhibited through a dedicated diverse generation technique.   
We emphasize that our method can be used in conjunction with many of the aforementioned techniques to obtain more diverse examples. We demonstrate that this is possible in an experiment in sec.~\ref{ssec:mvp_experiment_translation_edge2im__with_diversity_suppl}.

\end{document}